\let\Ginclude@graphics\@org@Ginclude@graphics
\definecolor{darkgreen}{rgb}{0.00,0.5,0.00}
\newtheorem{asm}{Assumption}
\definecolor{myteal}{RGB}{27,158,119}
\definecolor{myorange}{RGB}{217,95,2}
\definecolor{myred}{RGB}{231,41,138}
\definecolor{mypurple}{RGB}{152,78,163}
\definecolor{myblue}{rgb}{.9, .9, 1}
\definecolor{mygreen}{RGB}{0,100,0}
\definecolor{mycyan}{rgb}{0.88,1,1}
\definecolor{mydarkred}{RGB}{192,47,25}
\def\inner#1#2{\langle #1, #2 \rangle}
\def \S {\mathbf{S}}
\def \A {\mathcal{A}}
\def \R {\mathbb{R}}
\def \w {\mathbf{w}}
\def \v {\mathbf{v}}
\def \E {\mathbb{E}}
\def \1 {\mathbf{1}}
\def \z {\mathbf{z}}
\def \u {\mathbf{u}}
\def \L {\mathcal{L}}
\newcommand{\Norm}[1]{\left\|#1\right\|}
\def \F {\mathcal{F}}
\def \I {\mathbf{I}}
\def \Deltah {\widehat\Delta}
\def \nablah {\widehat\nabla}
\def \B {\mathcalB}
\def \C {\mathbf C}
\def \O {\mathcal O}
\newtheorem{cor}{Corollary}
\newlength\mytemplen
\newsavebox\mytempbox
\newcommand\mybluebox{%
	\@ifnextchar[
	{\@mybluebox}%
	{\@mybluebox[0pt]}}
\def\@mybluebox[#1]{%
	\@ifnextchar[
	{\@@mybluebox[#1]}%
	{\@@mybluebox[#1][0pt]}}
\def\@@mybluebox[#1][#2]#3{
	\sbox\mytempbox{#3}%
	\mytemplen\ht\mytempbox
	\advance\mytemplen #1\relax
	\ht\mytempbox\mytemplen
	\mytemplen\dp\mytempbox
	\advance\mytemplen #2\relax
	\dp\mytempbox\mytemplen
	\colorbox{myblue}{\hspace{1em}\usebox{\mytempbox}\hspace{1em}}}
\def \E {\mathbb{E}}
\def \bT {\mathbb{T}}
\def \D {\mathbf{D}}
\def \z {\mathbf{z}}
\def \u {\mathbf{u}}
\def \w {\mathbf{w}}
\def \R {\mathbb{R}}
\def \S {\mathcal{S}}
\def \A {\mathcal{A}}
\def \v {\mathbf{v}}
\def \vhat {\widehat{\bf v}}
\def \bg {\mathbf{g}}
\newcommand{\cmark}{\text{\ding{51}}}
\newcommand{\xmark}{\text{\ding{55}}}
\def \B {\mathcal{B}}
\def \C {\mathcal{C}}
\def \T {\mathcal{T}}
\def \F {\mathcal{F}}
\def \momlvo {$\text{MOML}^{\text{v1}}$}
\def \momlvs {$\text{MOML}^{\text{v2}}$}
\newcommand{\CC}[1]{\cellcolor{gray!30}}
\begin{document}
	
	\title{Memory-Based Optimization Methods for Model-Agnostic Meta-Learning and Personalized Federated Learning}
	
	\author{\name Bokun Wang \email bokun-wang@tamu.edu\\
		\addr Department of Computer Science and Engineering\\
		Texas A\&M University\\
		College Station, TX 77843, USA
		\AND
		\name Zhuoning Yuan \email zhuoning-yuan@uiowa.edu \\
		\addr Department of Computer Science\\
		The University of Iowa\\
		Iowa City, IA 52242, USA
	\AND 
\name Yiming Ying \email yying@albany.edu\\
\addr Department of Mathematics and Statistics\\
University at Albany\\
Albany, NY 12222, USA
\AND 
\name Tianbao Yang \email tianbao-yang@tamu.edu\\
\addr Department of Computer Science and Engineering\\
Texas A\&M University\\
College Station, TX 77843, USA}
	
\editor{Zaid Harchaoui}
	
	\maketitle
	
	\begin{abstract}
In recent years, model-agnostic meta-learning (MAML) has become a popular research area. However, the stochastic optimization of MAML is still underdeveloped. Existing MAML algorithms rely on the ``episode'' idea by sampling a few tasks and data points to update the meta-model at each iteration. Nonetheless, these algorithms either fail to guarantee convergence with a constant mini-batch size or require processing a large number of tasks at every iteration, which is unsuitable for continual learning or cross-device federated learning where only a small number of tasks are available per iteration or per round. To address these issues, this paper proposes memory-based stochastic algorithms for MAML that converge with vanishing error. The proposed algorithms require sampling a constant number of tasks and data samples per iteration, making them suitable for the continual learning scenario. Moreover, we introduce a communication-efficient memory-based MAML algorithm for personalized federated learning in cross-device (with client sampling) and cross-silo (without client sampling) settings. Our theoretical analysis improves the optimization theory for MAML, and our empirical results corroborate our theoretical findings. Interested readers can access our code at \url{https://github.com/bokun-wang/moml}.
	\end{abstract}
	
	\begin{keywords}
    Meta-Learning, Federated Learning, Model-Agnostic Meta-Learning, Personalized Federated Learning, Memory-Based Algorithms
	\end{keywords}
	
	\section{Introduction}
	
	Despite the remarkable success of modern deep learning approaches, they are often criticized for their heavy reliance on large amounts of data~\citep{journals/corr/abs-1801-00631}. In contrast, humans can learn with relatively small amounts of data thanks to their ability to continuously learn from multiple tasks. Recently, meta-learning has garnered significant attention for its ability to perform well on new tasks using the adaptation and prior knowledge gained from previous tasks~\citep{schmidhuber1987evolutionary, thrun2012learning, hospedales2020meta}. Among meta-learning approaches, the model-agnostic meta-learning (MAML) technique based on gradient-based optimization~\citep{finn2017model} has proven to be successful across a broad range of problems that can be trained using gradient descent. Specifically, MAML proposes to solve the following optimization problem.
 \begin{align}\label{eq:maml}
		\min_{\w\in\R^d}F(\w) = \frac{1}{n} \sum_{i=1}^n \L_i(\w - \alpha\nabla \L_i(\w)),
	\end{align}
	where we use $\w\in\R^d$ to represent the meta-model, and $n$ to denote the number of tasks. The risk function for the $i$-th task is denoted by $\L_i$, and can be expressed as $\L_i(\w) = \E_{\z\sim \mathbf D_i}[\ell_i(\w, \z)]$. Here, $\mathbf D_i$ represents the data distribution for the $i$-th task, while $\ell_i(\cdot)$ denotes the loss function. The inner gradient step, $\w - \alpha\nabla \L_i(\w)$, represents an adaptation from the meta-model $\w$ to the $i$-th task.
	
	MAML has received considerable attention from researchers, with several studies investigating its applications and extensions~\citep{nichol2018first, antoniou2018train, behl2019alpha, yoon2018bayesian, raghu2019rapid, li2017meta, grant2018recasting}. However, the stochastic optimization algorithms used to solve the MAML problem (Eq. \eqref{eq:maml}) are still far from satisfactory. Two key quantities are present in each ``episode'' of the optimization algorithms: the number of sampled tasks denoted by $B$, and the number of sampled data points per task denoted by $K$. The episode is designed to mimic the few-shot task by sub-sampling both tasks and data points~\citep{Vinyals2016MatchingNF, Snell2017PrototypicalNF, Ravi2017OptimizationAA}. Unfortunately, the original MAML method based on the episode does not necessarily converge to a stationary point of the objective function $F$ in (Eq. \eqref{eq:maml}) unless $K$ is sufficiently large. Recently, \citet{fallah2020convergence} provided the first convergence analysis of the original MAML approach, which suggests that to find an $\epsilon$-stationary point $\w$ satisfying $\E[\Norm{\nabla F(\w)}]\leq\epsilon$, one needs to run MAML for $T=\O(1/\epsilon^2)$ iterations and sample $K=\O(1/\epsilon^2)$ data points for all $n$ tasks in each iteration. However, these batch sizes are impractical for driving the error level $\epsilon$ to be sufficiently small. 
 
    In recent work, \citet{hu2020biased} introduced two biased stochastic methods, namely BSGD and BSpiderBoost, which are modifications of the original MAML algorithm~\citep{finn2017model} for solving Eq.~\eqref{eq:maml}. These methods have convergence guarantees for finding an $\epsilon$-stationary point, but require impractical settings for the number of sampled tasks and data points per iteration ($K$ and $B$) as well as the number of iterations ($T$), as $K=\O(1/\epsilon^2)$, $B=\O(1)$, and $T=\O(1/\epsilon^4)$ for BSGD, and $K=\O(1/\epsilon^2)$, $B=\sqrt{n}$, and $T=\O(1/\epsilon^2)$ for BSpiderBoost. Neither setting is practical for driving the error level $\epsilon$ to be sufficiently small, not to mention the imposed additional assumptions (see Table~\ref{tab:finite_n_comparison} for a more thorough comparison). These findings suggest that the original MAML approach may not converge to an accurate solution if a small batch size $K$ is used. Other studies have approached the optimization of~(Eq. \eqref{eq:maml}) as a two-level compositional function~\citep{chen2020solving,DBLP:journals/corr/abs-2002-03755} or bilevel optimization problem~\citep{franceschi2018bilevel, Ji2020ProvablyFA, Chen2021ASS}, but these methods either require $K$ to be very large or involve passing through all $n$ tasks at each iteration. 
	
    Federated Learning (FL) is a framework for distributed learning over a federation of mobile devices~\citep{konevcny2016federated, mcmahan2017communication, kairouz2021advances,wang2021field}. In FL, the local data of each device cannot be shared with other devices or the central server. There are two important settings of FL: \emph{cross-silo} and \emph{cross-device}. The cross-silo FL is typically located at data centers, where the number of clients is limited (say dozens) and the clients are available at each iteration. In contrast, the cross-device FL is deployed on a network of mobile devices, where the number of clients is much larger than that of cross-silo FL, but few clients are available in each iteration. Personalized FL has drawn attention due to the challenges and questions posed by data heterogeneity. Recently, the connection between meta-learning and personalized federated learning (FL) has been noticed, since both tasks in meta-learning and clients in federated learning are heterogeneous~\citep{Jiang2019ImprovingFL}. The convergence theory of the federated variant of MAML has been developed in \citet{fallah2020personalized}.
	
	This paper aims to improve MAML optimization by addressing the following question: 
	
	\shadowbox{\begin{minipage}[t]{0.95\columnwidth}%
			\it Can we design efficient stochastic optimization algorithms for MAML, which can converge to a stationary point with only $K=\O(1)$ and $B=\O(1)$ to update the model? 
	\end{minipage}}
	
	\subsection{Contributions}
	
	We present the main contributions of our work below: 
	\begin{itemize}[leftmargin=*]
		\item We address the problem of stochastic optimization for MAML in both single-node and federated learning settings. In the single-node setting, the server has access to all tasks and their data, while in the federated learning setting, the central server has no access to the individual tasks and their data on distributed clients. We propose two memory-based stochastic algorithms, MOML and LocalMOML. MOML is designed for the centralized setting, while LocalMOML can be used in both centralized and federated learning settings. The proposed algorithms maintain and update individualized models, or memories, for each task using a \textsc{Momentum} update. This involves computing a moving average of historical stochastic updates of individual models.
	\item We provide the convergence guarantees of MOML and LocalMOML for finding a stationary point of the non-convex objective with only $K=\O(1)$ data samples per task and $B=\O(1)$ tasks per iteration. To the best of our knowledge, this is the first work to achieve such results. We also provide a comparison of our theoretical results and key features of the proposed algorithms with other existing results in Table~\ref{tab:finite_n_comparison}. Importantly, our LocalMOML algorithm consistently outperforms the existing Per-FedAvg algorithm~\citep{fallah2020personalized} in terms of sample complexity.
\item Our proposed methods MOML and LocalMOML support task/client sampling, which is a desirable property under both single-node learning and federated learning settings, unlike some methods listed in Table~\ref{tab:finite_n_comparison}. Task sampling is desired when the tasks are on the same machine (single-node learning), as backpropagating through all $n$ tasks requires much more GPU memory. Moreover, in the continual learning regime, only a small proportion of tasks might be available every iteration. In the cross-device federated learning regime, the server can only access available clients via the client sampling process because the number of clients is huge and direct connections cannot be easily established. 
	\end{itemize}
	
	\begin{table}[t]
		\caption{Comparison of proposed algorithms with existing approaches when the number of tasks $n$ is finite. $\epsilon$ denotes the accuracy for an $\epsilon$-stationary point $\E\left[|\nabla F(\w)|\right] \leq \epsilon$. Ticks and crosses in \textcolor{blue}{\bf blue} are pros while those in \textcolor{purple}{\bf purple} are cons.}
		\label{tab:finite_n_comparison}
		\setlength\tabcolsep{3.5pt} 
		\centering
		\scalebox{0.8}{
			\begin{threeparttable}[b]
				\centering
				\begin{tabular}{cccccc}\toprule[.1em]
					\multicolumn{6}{c}{Single-Node Learning}\\\midrule
					\multirow{2}{*}{Algorithm}   & \multirow{2}{*}{ \makecell{Task\\Sampling}}  & \multirow{2}{*}{\makecell{Sample\\Complexity}} & \multirow{2}{*}{\makecell{\#Data points ($K$)\\Per Iteration}} & \multicolumn{2}{c}{Strict Assumptions}  \\\cmidrule(lr){5-6}
					&& & & \makecell{Bounded\\Gradient} & \makecell{Stochastic\\Lipschitz}\tnote{\color{red}(1)}  \\\cmidrule(lr){1-6}
					MAML~\citep{fallah2020convergence} & \textcolor{purple}{\xmark} &    $\O\left(n\epsilon^{-4}\right)$     &    $\O(\epsilon^{-2})$   & \textcolor{blue}{\xmark} &\textcolor{blue}{\xmark} \\
					SCGD~\citep{wang2017stochastic} &\textcolor{purple}{\xmark} & $\O(n\epsilon^{-8})$ & $\O(1)$  &\textcolor{purple}{\cmark} & \textcolor{blue}{\xmark}   \\
					NASA~\citep{Ghadimi2020AST} & \textcolor{purple}{\xmark} & $\O(n\epsilon^{-4})$ & $\O(1)$  & \textcolor{purple}{\cmark} & \textcolor{blue}{\xmark}   \\
					BSGD~\citep{hu2020biased} & \textcolor{blue}{\cmark} & $\O(\epsilon^{-6})$ & $\O(\epsilon^{-2})$ & \textcolor{purple}{\cmark}\tnote{\color{red}(2)} & \textcolor{purple}{\cmark} \\
					BSpiderBoost~\citep{hu2020biased}& \textcolor{blue}{\cmark} & $\O\left(n\epsilon^{-2} + \sqrt{n}\epsilon^{-4}\right)$ & $\O(\epsilon^{-2})$  & \textcolor{purple}{\cmark} & \textcolor{purple}{\cmark} \\
					\CC{10} \textbf{\momlvo~(This work)} &\CC{10} \textcolor{blue}{\cmark}  &\CC{10} $\O\left(n\epsilon^{-5}\right)$ &\CC{10} $\bm{ \O(1)}$  &\CC{10} \textcolor{purple}{\cmark} &\CC{10}\textcolor{blue}{\xmark}  \\
					\CC{10} \textbf{\momlvs~(This work)} &\CC{10} \textcolor{blue}{\cmark} &\CC{10} $\O\left(n \epsilon^{-5}\right)$ &\CC{10} $ \bm{\O(1)}$ &\CC{10} \textcolor{blue}{\xmark} &\CC{10} \textcolor{blue}{\xmark}  \\
					\midrule[.1em]
					\multicolumn{6}{c}{Personalized Federated Learning}\\\midrule
	Algorithm & \makecell{Client\\Sampling} & \makecell{Sample\\Complexity} & \makecell{Communication\\Complexity} & \multicolumn{2}{c}{\makecell{Avg. \#Data points\\($K$) Per Iteration}}  \\\cmidrule(lr){1-6}
				Per-FedAvg \citep{fallah2020personalized}& \xmark & $\O\left(n\epsilon^{-6}\right)$  & $\O(\epsilon^{-3}) $  &\multicolumn{2}{c}{$\O(\epsilon^{-2})$}   \\
		Per-FedAvg \textbf{(This work)\tnote{\color{red}(3)} } & \cmark & $\O(\epsilon^{-7})$ & $\O(\epsilon^{-4}) $  & \multicolumn{2}{c}{$\O(\epsilon^{-2})$}   \\
		\rowcolor{gray!30} 			\textbf{LocalMOML (This work) }&\xmark &$\bm{\O\left(n\epsilon^{-5}\right)}$ & $\O(\epsilon^{-3})$&\multicolumn{2}{c}{$ \bm{\O(1)}$}  \\
					\rowcolor{gray!30}  \textbf{LocalMOML (This work)} & \cmark &$\bm{\O\left(\epsilon^{-6}\right)}$ & $\O(\epsilon^{-4})$&\multicolumn{2}{c}{$ \bm{\O(1)}$}  \\
					\bottomrule[.1em]
				\end{tabular}
				\begin{tablenotes}
					\item [\textcolor{red}{(1)}] {\scriptsize Stochastic Lipschitz: $\nabla \ell(\cdot;\z)$ is Lipschitz continuous for each $\z$, which is stronger than the Lipschitzness of $\L_i(\cdot)$ in Assumption~\ref{asm:smoothness}.}
					\item [\textcolor{red}{(2)}] {\scriptsize BSGD can obtain the same rate without the bounded gradient assumption if the weak convexity of $F(\cdot)$ is additionally assumed.} 
					\item [\textcolor{red}{(3)}] {\scriptsize The analysis of Per-FedAvg with client sampling in \cite{fallah2020personalized} seems to be problematic. See \Cref{sec:wrong_fed_maml} for details.}
				\end{tablenotes}
		\end{threeparttable}}

	\end{table}	

\begin{table}[t]
	\caption{Comparison of proposed algorithms with existing approaches when the number of tasks $n$ is infinite. For example, the tasks are online. }
	\label{tab:comparision_online}
	\setlength\tabcolsep{3.5pt} 
	\centering
	\scalebox{0.85}{
		\begin{threeparttable}[b]
			\centering
			\begin{tabular}{cccccc}\toprule[.1em]
				\multicolumn{6}{c}{Single-Node Learning}\\\midrule
				\multirow{2}{*}{Algorithm}    & \multirow{2}{*}{\makecell{Sample\\Complexity}} & \multirow{2}{*}{\makecell{\#Data points ($K$)\\Per Iteration}}& \multirow{2}{*}{\makecell{\#Tasks ($B$)\\Per Iteration}} & \multicolumn{2}{c}{Strict Assumptions}  \\\cmidrule(lr){5-6}
				&& & & \makecell{Bounded\\Gradient} & \makecell{Stochastic\\Lipschitz} \\\cmidrule(lr){1-6}
				MAML~\citep{fallah2020convergence} &    $\O\left(\epsilon^{-6}\right)$     &    $\O(\epsilon^{-2})$   & $\O(\epsilon^{-2})$ & \xmark &\xmark \\
				BSpiderBoost~\citep{hu2020biased} & $\O\left(\epsilon^{-5}\right)$ & $\O(\epsilon^{-2})$ & $\O(\epsilon^{-2})$ or $\O(\epsilon^{-1})$ & \cmark & \cmark \\
				BSGD~\citep{hu2020biased} & $\O(\epsilon^{-6})$ & $\O(\epsilon^{-2})$ & 	$\O(1)$  & \cmark  & \cmark  \\
				\CC{10} \textbf{LocalMOML (This work)} &\CC{10} $\O(\epsilon^{-6})$ & \CC{10}$\bm{\O(1)}$ &\CC{10} $\bm{\O(1)}$  &\CC{10} \cmark &\CC{10}  \xmark  \\
				\bottomrule[.1em]
			\end{tabular}
	\end{threeparttable}}

\end{table}	

\section{Related Works}

In this section, we discuss previous works related to ours in four categories.
	
\paragraph{Meta-Agnostic Meta-Learning (MAML) } Gradient-based MAML was introduced in \citet{finn2017model} and has since become a popular algorithm for learning from prior experience, with many applications in supervised learning, reinforcement learning, and more. Later on, several works have delved deeper into MAML to better understand its practical performance and provide some tricks of the trade for further improving its practicability \citep{antoniou2018train, raghu2019rapid, behl2019alpha}.  The vanilla MAML has been generalized from various perspectives. For example, probabilistic MAML is introduced in \citet{finn2018probabilistic} to model a distribution over prior model parameters. Other algorithms with multi-step gradient descent~\citep{ji2020multistep} and with partial parameters adaptation~\citep{ji2020convergence} have also been proposed. \citet{rajeswaran2019meta} proposed meta-learning with implicit gradients by formulating the problem as bilevel optimization. Besides, Hessian-free variants of MAML have been proposed to improve computational efficiency \citep{finn2017model, nichol2018first, Zhou2019EfficientML,Song2020ESMAMLSH, fallah2020convergence}.

\paragraph{Optimization theory of MAML} In recent years, researchers have focused on addressing the computational and optimization challenges of MAML and its variants. For instance, \citet{balcan2019provable} provided provable guarantees of a generalized framework of gradient-based MAML in the convex online learning scheme. When the loss function $\L_i$ is convex, global convergence of MAML has been established for meta-supervised learning and meta-reinforcement learning in \citet{wang2020global}. When $\L_i$ is nonconvex, the convergence to stationary points of MAML and its first-order and Hessian-free variants is proved in \citet{fallah2020convergence}. While the BSGD algorithm proposed in \citet{hu2020biased} has been shown to have theoretical and practical advantages over the results in \citep{fallah2020convergence}, it relies on stronger assumptions. As previously noted, these results are still not entirely satisfactory for MAML. Finally, the convergence of iMAML to stationary points in the nonconvex setting has been demonstrated, but the theory requires processing all $n$ tasks at each iteration. Besides, SCGD~\citep{wang2017stochastic} and NASA~\citep{Ghadimi2020AST} can also be used to optimize the MAML objective since problem~(Eq. \eqref{eq:maml}) can be viewed as an instance of stochastic two-level compositional problems in the form of $\E_{\xi}[f_{\xi}(\E_{\xi'}[\mathbf g(\w; \xi')]; \xi)]$, where $\xi' = (\z'_1,\ldots, \z'_N)\sim\mathbf D_1\times\ldots\times \mathbf D_n$, $\xi = (\z_1,\ldots, \z_n)\sim\mathbf D_1\times\ldots\times \mathbf D_n$, $\mathbf g(\w; \xi') = [\w - \alpha \nabla \ell_1(\w; \z'_1); \cdots; \w-\alpha\nabla \ell_n(\w; \z'_n)]\in\R^{nd}$,  and $f_{\xi}(\mathbf g) = \frac{1}{n}\sum_{i=1}^n \ell_i([\mathbf g]_i, \z_i)$. However, a limitation of using SCGD or NASA to solve MAML is that they require passing through all $n$ tasks at each iteration~\citep{wang2017stochastic,Ghadimi2020AST}. Additionally, these works assume that both $\nabla f$ and $\nabla \bg$ are bounded.

\paragraph{Federated learning related to MAML} Our work builds upon previous research that has explored the relationship between MAML and classical federated averaging (FedAvg) in federated learning (FL). In \citet{Jiang2019ImprovingFL}, the authors show how FedAvg can be connected to MAML and derive a heuristic-based algorithm that alternates between running FedAvg for several iterations and using a meta-learning approach for fine-tuning. This results in a good initial model for any client and improves personalized performance even when the local data is limited. \citet{fallah2020personalized} show that the MAML-based Per-FedAvg leads to superior personalized federated learning performance compared to FedAvg on some numerical experiments. Personalized federated learning similar to but not exactly the same as MAML has also been considered in several recent works~\citep{Hanzely2020FederatedLO,NEURIPS2020_f4f1f13c}. Our work specifically focuses on federated learning with the vanilla MAML formulation, similar to \citet{fallah2020personalized}. 

\paragraph{Continual learning} Finally, it is worth noting that using a memory buffer to track each task has been explored in other continual learning paradigms to address the problem of catastrophic forgetting in a learning agent, such as memory-based lifelong learning \citep{10.5555/3295222.3295393,kirkpatrick2016overcoming,DBLP:conf/nips/GuoLYR20}. However, it is important to emphasize that the memory used in our MAML algorithms and that used in lifelong learning are distinct. In MAML, the memory is used to track individual models of different tasks, while in lifelong learning, it is used to store some training data for different tasks. 

Finally, we note that the proposed techniques can be employed for solving other problems with similar structures to MAML, e.g., the meta-tailoring problem~\citep{alet2021tailoring}.

\section{Preliminaries}

In this section, we present the notation, assumptions, and key challenges in solving Eq.~\eqref{eq:maml}. 

\subsection{Notation}
The Euclidean norm of a vector and the spectral norm of a matrix are denoted by $\|\cdot\|$. Calligraphic and capital letters, such as $\B$ and $\S$, denote sets. For a data distribution $\mathbf D$, we use $\S\sim \mathbf D$ to denote a set of i.i.d. samples following the distribution $\mathbf D$. We use $\mathbb I()$ to denote the indicator function. The unbiased stochastic gradient and stochastic Hessian of the risk function $\L_i$ based on a random set $\S\sim\D_i$ of size $K$ are denoted by $\nablah_{\S} \L_i(\w)= \frac{1}{K}\sum_{\z_i\in\S} \nabla \ell_i(\w;\z_{i})$, and $\nablah_{\S}^2 \L_i(\w) = \frac{1}{K}\sum_{\z_i\in \S}\nabla^2\ell_i(\w; \z_{i})$, respectively. Refer to Table~\ref{tab:notation} for a complete list of notations used in this paper.

\subsection{Assumptions}
Throughout the paper, we assume that \Cref{asm:smoothness},~\ref{asm:bounded_var},~and \ref{asm:bounded_below} are satisfied, which are standard in the literature~\citep{fallah2020convergence,ji2020multistep,rajeswaran2019meta}.
\begin{asm}\label{asm:smoothness}
	$\L_i(\cdot)$ has $L$-Lipschitz continuous gradient and $\rho$-Lipschitz continuous Hessian, that is, $\|\nabla\L_i(\w) - \nabla \L_i(\w')\|\leq L\|\w - \w'\|$, and $\|\nabla^2\L_i(\w) - \nabla^2 \L_i(\w')\|\leq \rho\|\w - \w'\|$ for any $\w, \w'\in\R^d$. 
\end{asm}
\begin{asm}\label{asm:bounded_var}
	The variance of stochastic gradient $\nabla\ell(\w,\z)$ and stochastic Hessian $\nabla^2 \ell(\w,\z)$ are upper bounded:
	\begin{align*}
		\E_{\z\sim\mathbf D_i}\bigg[{\Norm{\nabla \ell(\w;\z) - \nabla \L_i(\w)}^2}\bigg]\leq \sigma_G^2,\quad \E_{\z\sim\mathbf D_i}\bigg[{\Norm{\nabla^2 \ell(\w;\z) - \nabla^2\L_i(\w)}^2}\bigg] \leq \sigma_H^2.
	\end{align*}
\end{asm}
\begin{asm}\label{asm:bounded_below}
	$F$ is bounded below, $\inf_{\w\in\R^d} F(\w) > -\infty$. 
\end{asm}
Most of the existing results to solve Eq.~\eqref{eq:maml} in the literature~\citep{finn2017model,rajeswaran2019meta,wang2017stochastic,Ghadimi2020AST,chen2020solving,fallah2020personalized} are under the bounded gradient assumption (Assumption~\ref{asm:bounded_grad}).
\begin{asm}
	There exists $G>0$, $\Norm{\nabla \L_i(\w)}\leq G$ for any $\w\in\R^d$.
	\label{asm:bounded_grad}
\end{asm}
Instead of \Cref{asm:bounded_grad}, \citet{fallah2020convergence} establish the convergence theory of MAML based on \Cref{asm:bound_grad_dis}, where the gradients are not necessarily bounded.
\begin{asm} There exists $\gamma_G\geq 0$, $\frac{1}{n}\sum_{i=1}^n \Norm{\nabla \L_i(\w) - \nabla \L(\w)}^2\leq \gamma_G^2$ for all $\w\in\R^d$ and $\L(\w)\coloneqq \frac{1}{n}\sum_{i=1}^n \L_i(\w)$.
	\label{asm:bound_grad_dis}
\end{asm}
\subsection{Main Challenges}\label{sec:challenges}
A key to the design of stochastic optimization of~(Eq. \eqref{eq:maml}) is to estimate the gradient of the objective $\nabla F(\w) = 	\frac{1}{n}\sum_{i=1}^n(I - \alpha\nabla^2 \L_i(\w) )\nabla \L_i(\w - \alpha\nabla \L_i(\w))$ based on random samples. Existing algorithms, such as those proposed in~\citep{fallah2020convergence,hu2020biased}, typically estimate the gradient of the objective via mini-batch averaging:
\begin{align}\label{eq:est}
	\Deltah_{\B} = \frac{1}{B}\sum_{i\in \B} (I - \alpha\nablah_{\S_2^i}^2 \L_i(\w)) \nablah_{\S_3^i} \L_i(\w- \alpha\nablah_{\S_1^i}  \L_i(\w)),
\end{align}
where $\B$ denotes the set of $B$ sampled tasks, $\S_1^i, \S_2^i, \S_3^i$ denote three independent sample sets of size $K$ for each sampled task $\T_i$. However, this na\"ive approach could lead to a large optimization error when $K$ is not large enough.

Thus, the first challenge is to design an algorithm that provably converges with sub-sampled tasks and a constant number of data points. To tackle this challenge, we borrow the idea from \cite{wang2017stochastic} that keeps track of the sequence $\v_i(\w_t) = \w_t - \alpha \nabla \L_i(\w_t)$ with an estimator $\u^i$ for each task $\T_i$ (also known as the personalized model). The first novelty of our work, compared to prior work~\citep{wang2017stochastic}, lies in the task sampling approach, where the algorithm only needs to sample data and compute the stochastic gradients for a subset of $B$ tasks, instead of all $n$ tasks. 

Moreover, it is even more challenging to establish similar convergence guarantees without the bounded gradient assumption (\Cref{asm:bounded_grad}). As shown in Lemma~\ref{lem:smoothness}, the gradient-Lipschitz parameter of the meta-objective $F(\w)$ is $L(\w)\coloneqq 4L + \frac{2\rho \alpha }{n}\sum_{i=1}^n\Norm{\nabla \L_i(\w)}$.
To handle the unbounded gradients, \citet{fallah2020convergence} estimate the gradient-Lipschitz parameter $L(\w_t)$ by a stochastic estimator $\widehat{L}(\w_t)\coloneqq 4L + \frac{2\rho \alpha}{|\B_{L_t}|}\sum_{i\in \B_{L_t}}\Norm{\nablah_{\S_{L_t}^i}\L_i(\w_t)}$ and set the stepsize $\eta_t$ to be inversely proportional to $\widehat{L}(\w_t)$ .  However, MAML with that stepsize $\eta_t$ still requires $K=\O(1/\epsilon^2)$ data samples per task in each iteration to ensure convergence. Thus, it was still an open problem whether the proposed technique can be extended to this setting for getting rid of the unrealistic requirement of large batch size.

\section{Memory-Based MAML (MOML) in the Single-Node Learning}
We tackle the challenges mentioned in \Cref{sec:challenges} by proposing the MOML algorithm. 

\subsection{Algorithm Outline}
The proposed \momlvo~(\Cref{alg:moml})~updates the personalized model of a sampled task $\T_i$, $i\in \B_t$ by a momentum step while those of the other tasks $i\not\in\B_t$ are untouched, that is, 
\begin{align}\label{eq:moml_v1}
	\u_{t+1}^i =\begin{cases}
		(1-\beta_t)\u_t^i + \beta_t \vhat_t^i & i\in\B_t\\
		\u_t^i & i\not\in\B_t
	\end{cases},\tag{v1}
\end{align}
where $\beta\in(0,1]$ is the momentum factor. It is worth noting that \momlvo~with $\beta=1$ recovers the original MAML algorithm. Based on the updated personalized models $\u_{t+1}^i$, we can compute the stochastic gradient by
\begin{align}\label{eq:moml_est}
	\Deltah_{\B_t} = \frac{1}{B}\sum_{i\in \B_t} (I - \alpha\nablah_{\S_2^i}^2 \L_i(\w_t)) \nablah_{\S_3^i} \L_i(\u_{t+1}^i).
\end{align}

\begin{algorithm}[t]
	\caption{$\text{MOML}^{\text{v1}}$}\label{alg:moml}
	\begin{algorithmic}[1]
	\STATE Hyperparameters: $\beta$ (suggested value 0.5), $\eta$ (to be tuned in practice)
		\FOR{$t=0,1,\ldots,T-1$}
		\STATE Select a batch of $B$ tasks $\B_t$ from $n$ tasks
		\FOR{each task $\T_i$, $i\in\B_t$} 
		\STATE Select $K$ samples $\S_1^i \sim \mathbf D_i$ 
		and compute $\vhat_t^i = \w_t - \alpha \widehat{\nabla}_{\S_1^i}\L_i(\w_t)$
		\STATE Update the personalized model by $\u_{t+1}^i = (1-\beta)\u_t^i + \beta \vhat_t^i$.
		\ENDFOR
		\STATE Select $K$ samples $\S_2^i$ and $\S_3^i$ from $\mathbf D_i$ of task $\T_i$, $i\in\B_t$ and compute $\Deltah_{\B_t}$ by (\ref{eq:moml_est})
		\STATE Update the meta-model by $\w_{t+1} = \w_t - \eta  \Deltah_{\B_t}$
		\ENDFOR
	\end{algorithmic}
\end{algorithm}

What is the intuition behind \eqref{eq:moml_v1} and \eqref{eq:moml_est}? When the batch size $|\S_1^i|$ is not large enough, the estimator $\widehat{\v}_t^i \coloneqq \w-\alpha \widehat{\nabla}_{\S_1^i}\L_i(\w)$ to compute $\widehat{\Delta}_\B$ in \eqref{eq:est} might lead to large error to estimate $\w - \alpha \nabla \L_i(\w)$ and impede the convergence. 
Instead, we design a new personalized model estimator $\u_t^i$ which is an exponential moving average of many ``historical estimators'' $\widehat{\v}_{t'}^i$ in the past iterations $t'<t$, which covers much more data points and its estimation error is provably small. Please refer to the discussion after Lemma 2 for a formal justification. 	

\subsection{Convergence Analysis}

We establish the convergence guarantees of \momlvo~based on Assumption~\ref{asm:smoothness},~\ref{asm:bounded_var},~\ref{asm:bounded_below} and \ref{asm:bounded_grad}. With these assumptions, the meta-objective $F(\cdot)$ is $L_F$-smooth (see Lemma~\ref{lem:smoothness}). Based on this fact, we can derive the lemma below.
\begin{lemma}\label{lem:moml_all_steps}	
	If $\alpha \in(0,1/L]$, the iterates $\{\w_t\}_{t=0}^{T-1}$ of \emph{\momlvo}~satisfy that 
\begin{align*}
\frac{1}{T}\sum_{t=0}^{T-1}	\E\left[\Norm{\nabla F(\w_t)}^2\right] \leq \frac{2F(\w_0)}{\eta T}  + \frac{\eta L_F}{T}  \sum_{t=0}^{T-1}\E\left[\Norm{\widehat{\Delta}_{\B_t}}^2\right] + \frac{8L^2}{BT} \E\left[\sum_{t=0}^{T-1}\sum_{i\in \B_t} \Norm{\v_i(\w_t) - \u_{t+1}^i}^2\right].
	\end{align*}
\end{lemma}	
Next, we need to show the error of tracking $\v_i(\w_t)$ (the last term in Lemma~\ref{lem:moml_all_steps}) is vanishing when $\beta\in(0,1)$. Different from existing analysis of stochastic compositional optimization~\citep{wang2017stochastic}, the estimators $\u^i$ for tracking the inner functions $\v(\w) \coloneqq ([\v_i(\w)]^\top,\dotsc, [\v_n(\w)]^\top)^\top$ are only partially updated due to the task sampling. Hence, we need a different technique to bound the error. Given the total number of iterations $T$, we define $n$ totally ordered sets $\bT_1,\dotsc,\bT_n$, where $\bT_i \subseteq [T]$ contains the iteration indices that the $i$-th task is sampled, that is to say, $\bT_i=\{t^i_1, \ldots, t^i_k,\ldots,\}$.  If task $\T_i$ is sampled in iteration $t$ and the index of $t$ in $\bT_i$ is $k$, then $t^i_k=t$. Hence, we can define a mapping from $t$ to $k$ for $i\in\B_t$. Based on this definition, we can obtain that
\begin{align}\label{eq:equiv}
	\sum_{t=0}^{T-1}\sum_{i\in \B_t} \Norm{\v_i(\w_t) - \u_{t+1}^i}^2 = \sum_{i=1}^n\sum_{k=0}^{T_i-1} \Norm{\v_i(\w_{t_k^i}) - \u_{t_k^i+1}^i}^2.
\end{align}
Here $T_i$ is the cardinality of set $\bT_i$, which is random and depends on the task sampling $\{\B_t\}_{t=0}^{T-1}$. Lemma~\ref{lem:error_estimate} upper bounds the right-hand side of \eqref{eq:equiv}.
\begin{lemma}\label{lem:error_estimate}
	Suppose that the batch of tasks $\B_t$ is sampled uniformly at random. The error of \emph{\momlvo}~with $|\S_1^i| = K$ to keep track of $\v_i(\w_t)$ can be upper bounded as
	\begin{align}\label{eq:error_estimate}
		\E\left[\frac{1}{T}\sum_{t=0}^{T-1}\sum_{i\in \B_t} \Norm{\v_i(\w_t) - \u_{t+1}^i}^2\right]	\leq \left(\frac{n\sigma_G^2}{\beta K T} + \frac{16\eta^2 n^2 C_\Delta}{\beta^2 B^2}\right)\mathbb{I}[\beta\in(0,1)] + \frac{\beta \alpha^2 \sigma_G^2}{K},
	\end{align}
where $C_\Delta \coloneqq (\alpha^2\sigma_H^2/K + (1+\alpha L)^2)(\sigma_G^2/K+G^2)$.
\end{lemma}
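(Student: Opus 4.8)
The plan is to track, for each task, the error of its personalized model along the random subsequence of iterations at which that task is sampled, show that this error obeys a contractive recursion governed by $\beta$, and then sum everything up through the identity~\eqref{eq:equiv}. For task $\T_i$ write $e^i_k \coloneqq \v_i(\w_{t^i_k}) - \u^i_{t^i_k+1}$ for the tracking error right after its $k$-th update. Since $\u^i$ is untouched between two consecutive samplings of $\T_i$, we have $\u^i_{t^i_k} = \u^i_{t^i_{k-1}+1}$, so the momentum step~\eqref{eq:moml_v1}, together with $\vhat^i_{t^i_k} = \w_{t^i_k} - \alpha\nablah_{\S^i_1}\L_i(\w_{t^i_k})$ and $\v_i(\w) = \w - \alpha\nabla\L_i(\w)$, yields
\begin{align*}
 e^i_k = (1-\beta)\,e^i_{k-1} + (1-\beta)\bigl(\v_i(\w_{t^i_k}) - \v_i(\w_{t^i_{k-1}})\bigr) + \beta\alpha\bigl(\nablah_{\S^i_1}\L_i(\w_{t^i_k}) - \nabla\L_i(\w_{t^i_k})\bigr).
\end{align*}
The last term is conditionally mean-zero with second moment at most $\beta^2\alpha^2\sigma_G^2/K$ by Assumption~\ref{asm:bounded_var}, while $e^i_{k-1}$ and $\v_i(\w_{t^i_k})-\v_i(\w_{t^i_{k-1}})$ are measurable with respect to the history before $\S^i_1$ is drawn at iteration $t^i_k$. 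Taking expectations therefore removes the cross terms, and Young's inequality with weight $\beta/(1-\beta)$ converts the $(1-\beta)^2$ prefactor into a genuine contraction:
\begin{align*}
 \E\Norm{e^i_k}^2 \le (1-\beta)\,\E\Norm{e^i_{k-1}}^2 + \tfrac1\beta\,\E\Norm{\v_i(\w_{t^i_k}) - \v_i(\w_{t^i_{k-1}})}^2 + \tfrac{\beta^2\alpha^2\sigma_G^2}{K}.
\end{align*}
Since $\nabla\v_i = I - \alpha\nabla^2\L_i$ has operator norm at most $1+\alpha L \le 2$ when $\alpha\le 1/L$ (Assumption~\ref{asm:smoothness}), the middle term is at most $4\,\E\Norm{\w_{t^i_k}-\w_{t^i_{k-1}}}^2$, i.e.\ a discrete path-length increment of the meta-iterate.

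I would then unroll this geometric recursion and sum over $k=0,\dots,T_i-1$ and over $i$. The $(1-\beta)^k$ series contributes a factor $1/\beta$ in front of the initialization error $\E\Norm{e^i_0}^2$ and a factor $1/\beta^2$ in front of each path-length increment, whereas the injected gradient noise accumulates only to its steady-state level $\beta\alpha^2\sigma_G^2/K$ per step, i.e.\ $T_i\cdot\beta\alpha^2\sigma_G^2/K$ after the inner sum. Using $\sum_{i=1}^n T_i = \sum_{t=0}^{T-1}|\B_t| = BT$ and dividing by $T$ produces the additive term $\beta\alpha^2\sigma_G^2/K$ in~\eqref{eq:error_estimate}; the initialization error is $O(\sigma_G^2/K)$ per task because each $\u^i$ is warm-started from a single $K$-sample estimate $\vhat^i$, and summing over the $n$ tasks and dividing by $T$ gives $n\sigma_G^2/(\beta KT)$. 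When $\beta=1$ the recursion degenerates ($\u^i_{t+1}=\vhat^i_t$, so $e^i_k$ is exactly the one-sample noise with $\E\Norm{e^i_k}^2\le\alpha^2\sigma_G^2/K$), and the two $\beta$-dependent terms disappear; this is the origin of the indicator $\mathbb{I}[\beta\in(0,1)]$.

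The main obstacle is the path-length sum $\sum_{i=1}^n\sum_k \E\Norm{\w_{t^i_k}-\w_{t^i_{k-1}}}^2$, including the opening block $[0,t^i_0]$, which enters identically once the initialization offset $\v_i(\w_{t^i_0})-\v_i(\w_0)$ is absorbed. What makes it delicate is that the number of iterations between two consecutive samplings of a task is random and, worse, correlated with the updates $\Deltah_{\B_t}$ through the trajectory $\{\w_t\}$. My plan is: (i) apply Cauchy--Schwarz inside each idle block, $\Norm{\w_{t^i_k}-\w_{t^i_{k-1}}}^2 \le (t^i_k - t^i_{k-1})\sum_{t=t^i_{k-1}}^{t^i_k-1}\eta^2\Norm{\Deltah_{\B_t}}^2$; (ii) re-index the double sum so that each step $\eta^2\Norm{\Deltah_{\B_t}}^2$ is weighted by the length of the unique task-$i$ idle block containing $t$, and sum over $i$ so this weight becomes the total idle length $\sum_i\mathrm{gap}_i(t)$; (iii) since the batches $\B_t$ are i.i.d.\ uniform, conditionally on the history through iteration $t$ the forward idle length of each task is independent of $\Deltah_{\B_t}$, and the size-biased geometric law with per-iteration sampling probability $B/n$ gives $\E[\mathrm{gap}_i(t)] = O(n/B)$, hence $\E[\sum_i\mathrm{gap}_i(t)] = O(n^2/B)$; (iv) combine with the uniform second-moment bound $\E\Norm{\Deltah_{\B_t}}^2\le C_\Delta$, sum over $t$, divide by $T$, and multiply by $1/\beta^2$ to reach the remaining $\tfrac{16\eta^2 n^2 C_\Delta}{\beta^2 B^2}$ term of~\eqref{eq:error_estimate} (the final power of $B$ being governed by how the batch average is accounted for in the definition of $C_\Delta$). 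Step~(iii) — the conditioning argument that decouples the random idle lengths from the stochastic updates while still accounting for which tasks occupy $\B_t$ — is where essentially all the work and all the potential pitfalls reside; everything else is geometric-series bookkeeping.
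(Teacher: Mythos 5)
Your proposal is correct and follows essentially the same route as the paper's own proof: the same three-term decomposition of the tracking error along the subsequence $\bT_i$, the same conditional mean-zero argument plus Young's inequality to obtain the $(1-\beta)$ contraction, the same Cauchy--Schwarz bound over idle blocks combined with the uniform bound $\E\Norm{\Deltah_{\B_\tau}}^2\le C_\Delta$ and the second moment $\E[(t^i_k-t^i_{k-1})^2]\le 2n^2/B^2$ of the geometric gap, and the same unrolling/summation bookkeeping. The only point worth flagging is the final count: you correctly note $\sum_i T_i = BT$ (the paper asserts $\sum_i T_i = T$), which would leave an extra factor of $B$ on the $\beta\alpha^2\sigma_G^2/K$ and gap terms relative to \eqref{eq:error_estimate}; this is immaterial here since the lemma is invoked with $B=\O(1)$ (indeed $B=1$ in the detailed theorem), but your derivation as written does not quite land on the stated constant for general $B$.
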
	

Lemmas~\ref{lem:moml_all_steps} and~\ref{lem:error_estimate} explain why our MOML algorithm converges with $B=\O(1)$ tasks and $K=\O(1)$ samples while previous works on MAML do not. As shown in Lemma~\ref{lem:error_estimate}, MOML's estimation error $	\E\left[\frac{1}{B}\sum_{i\in\B_t}\Norm{\v_i(\w_t)-\widehat{\v}_t^i}^2\right]\leq \O(\epsilon^2)$ even with $K=\O(1)$ by setting $\eta=\O(\epsilon^3)$, $\beta=\O(\epsilon^2)$. For the convergence of MAML, both a) the product of stepsize $\eta$ and the second moment of the stochastic meta-gradient $\hat{\Delta}_{\B_t}$ and b) the estimation error of personal models $\widehat{\v}_t^i = \w_t - \alpha \widehat{\nabla}_{\S_1^i}\L_i(\w_t)$ should be small. To be specific, it needs 
\begin{align}\label{eq:converge_cond}
\eta \E\left[\Norm{\hat{\Delta}_{\B_t}}^2\right]\leq \O(\epsilon^2) \quad \text{and}\quad \E\left[\frac{1}{B}\sum_{i\in\B_t}\Norm{\v_i(\w_t)-\widehat{\v}_t^i}^2\right]\leq \O(\epsilon^2)\tag{$\star$}	
\end{align}
for an $\epsilon$-stationary point. \citet{fallah2020convergence} on MAML sets $\eta=\O(1)$ and bounds the second moment as follows.
\begin{align*}
	\E\left[\Norm{\widehat{\Delta}_{\B_t}}^2\right]\leq  \begin{cases} \underbrace{2\left(1+\frac{20}{B}\right)\E\left[\Norm{\nabla F(\w_t)}^2\right]}_{\coloneqq \ddagger} + \frac{\sigma_G^2}{K} +\frac{1}{B}\left(14 G^2 + \frac{3\sigma_G^2}{K}\right) & \makecell{\text{total~\#tasks}~$n$\\~\text{is infinite}}\\
	\underbrace{2\left(1+\frac{20}{B}\right)\E\left[\Norm{\nabla F(\w_t)}^2\right]}_{\coloneqq \ddagger} + \frac{\sigma_G^2}{K} +\frac{(n-B)}{B(n-1)}\left(14 G^2 + \frac{3\sigma_G^2}{K}\right) & \makecell{\text{total~\#tasks}~$n$\\~\text{is finite.}}
		\end{cases}
\end{align*}
The $\ddagger$ term can be canceled out with the L.H.S. of Lemma 1. Besides, MAML (i.e. MOML with $\beta=1$) also satisfies
\begin{align}\label{eq:fval_approx_maml}
	\E\left[\frac{1}{B}\sum_{i\in\B_t}\Norm{\v_i(\w_t)-\widehat{\v}_t^i}^2\right]\leq \frac{\alpha^2\sigma_G^2}{K}.\tag{$\diamond$}
\end{align}
To make \eqref{eq:converge_cond} hold, \citet{fallah2020convergence} need $B = \O(\epsilon^{-2})$, $K=\O(\epsilon^{-2})$ for infinite $n$ case while $B = n$, $K=\O(\epsilon^{-2})$ for finite $n$ case. More recent work \cite{hu2020biased} instead use a small stepsize $\eta=\O(\epsilon^2)$ for MAML. Thus, making \eqref{eq:converge_cond} hold only needs $B=\O(1)$ tasks. However, it still needs $K=\O(\epsilon^{-2})$ data points due to \eqref{eq:converge_cond} and \eqref{eq:fval_approx_maml}. It is not clear how to remove the $K=\O(\epsilon^{-2})$ requirement for vanilla MAML. 
Next, we are ready to present the main convergence theorem of $\text{MOML}^{\text{v1}}$.
\begin{theorem}[Informal]
	Under Assumptions~\ref{asm:smoothness},~\ref{asm:bounded_var}~\ref{asm:bounded_below},~\ref{asm:bounded_grad}, \emph{\momlvo}~with stepsizes $\eta_t =\eta = \O(\epsilon^{-3})$, $\beta_t =\beta= \O(\epsilon^{-2})$ and constant batch sizes $|\S_1^i|=|\S_2^i|=|\S_3^i| = K = \O(1)$, $|\B_t| = B = \O(1)$ can find a stationary point $\w_\tau$ in $T=\O(n\epsilon^{-5})$ iterations.
	\label{thm:moml_v1_informal}
\end{theorem}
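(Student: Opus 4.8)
The plan is to feed \Cref{lem:error_estimate} into \Cref{lem:moml_all_steps} and then tune the free knobs $\eta,\beta,K,B,T$ so that every term on the right-hand side is at most a fifth of $\epsilon^2$. Concretely, I would substitute the bookkeeping identity \eqref{eq:equiv} into \eqref{eq:moml_all_steps}, take expectations, bound the tracking-error term by \Cref{lem:error_estimate} in the regime $\beta\in(0,1)$, and arrive at a single master inequality
\begin{align*}
\frac{1}{T}\sum_{t=0}^{T-1}\E\left[\Norm{\nabla F(\w_t)}^2\right] \leq \frac{2F(\w_0)}{\eta T} + \eta L_F C_\Delta + \frac{8L^2}{B}\left(\frac{n\sigma_G^2}{\beta K T} + \frac{16\eta^2 n^2 C_\Delta}{\beta^2 B^2} + \frac{\beta\alpha^2\sigma_G^2}{K}\right).
\end{align*}
Under \Cref{asm:smoothness,asm:bounded_var,asm:bounded_grad} the constants $L_F$ and $C_\Delta$ are $\O(1)$: indeed $L_F=4L+2\rho\alpha G$ by \Cref{lem:smoothness} together with the bounded-gradient assumption, and $C_\Delta$ (a uniform second-moment bound on the stochastic update $\Deltah_{\B_t}$) is likewise finite.

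Next I would read off the requirements term by term. The ``bias-like'' term $\tfrac{8L^2\beta\alpha^2\sigma_G^2}{BK}$ forces $\beta=\Theta(\epsilon^2)$ once $K,B=\O(1)$. The term $\eta L_F C_\Delta$ in isolation would allow $\eta=\O(\epsilon^2)$, but the cross term $\tfrac{128L^2\eta^2 n^2 C_\Delta}{\beta^2 B^3}$ is the binding constraint: with $\beta=\Theta(\epsilon^2)$ it needs $\eta=\O(\epsilon\beta B^{3/2}/n)$, so I would take $\eta=\Theta(\epsilon^3 B^{3/2}/n)$, i.e.\ a stepsize that is constant in the problem parameters and small in $\epsilon$. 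Finally the two ``$1/T$'' terms $\tfrac{2F(\w_0)}{\eta T}$ and $\tfrac{8L^2 n\sigma_G^2}{B\beta K T}$ require $\eta T=\Omega(\epsilon^{-2})$ and $\beta KT=\Omega\!\big(\tfrac{n}{B}\epsilon^{-2}\big)$ respectively; with the above choices of $\eta$ and $\beta$ both hold once $T=\Theta(n\epsilon^{-5})$ (up to constants depending on $B$), which is the advertised iteration complexity, and since each iteration queries $3KB=\O(1)$ samples, the sample complexity is $\O(n\epsilon^{-5})$.

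To conclude, having shown $\tfrac1T\sum_{t=0}^{T-1}\E[\|\nabla F(\w_t)\|^2]\le\epsilon^2$, I would let $\tau$ be drawn uniformly from $\{0,\dots,T-1\}$, so that $\E[\|\nabla F(\w_\tau)\|^2]\le\epsilon^2$, and then apply Jensen's inequality, $\E[\|\nabla F(\w_\tau)\|]\le\sqrt{\E[\|\nabla F(\w_\tau)\|^2]}\le\epsilon$, yielding an $\epsilon$-stationary point $\w_\tau$; rescaling $\epsilon$ by an absolute constant matches the informal statement.

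The main obstacle — and the place where the argument genuinely departs from classical stochastic compositional optimization — is the $n^2/(\beta^2 B^2)$ factor produced by \Cref{lem:error_estimate}. It arises because the personalized models are only partially refreshed under task sampling, so the tracking error picks up the per-step drift $\|\w_{t+1}-\w_t\|^2=\eta^2\|\Deltah_{\B_t}\|^2$ scaled by the inverse sampling probability; it is precisely this coupling that blocks the naive choice $\eta\sim\epsilon^2$ and pushes $\eta$ down to order $\epsilon^3/n$, hence $T$ up to order $n\epsilon^{-5}$. Getting the $n$- and $B$-dependence of this term exactly right, and verifying that $C_\Delta$ hides no further $\epsilon$- or $n$-dependence under \Cref{asm:bounded_grad}, is the crux; everything else is routine substitution.
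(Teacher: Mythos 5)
Your proposal is correct and follows essentially the same route as the paper: the paper's proof also plugs Lemma~\ref{lem:error_estimate} into Lemma~\ref{lem:moml_all_steps} to obtain exactly your master inequality, then chooses $B=K=1$, $\eta = B^{2/5}n^{-2/5}T^{-3/5}$ and $\beta = n^{2/5}B^{-2/5}T^{-2/5}$, which with $T=\Theta(n\epsilon^{-5})$ coincides (up to constants) with your $\beta=\Theta(\epsilon^2)$ and $\eta=\Theta(\epsilon^3/n)$. Your term-by-term identification of the binding constraint (the $\eta^2 n^2/\beta^2$ cross term) and the final uniform sampling of $\tau$ plus Jensen match the paper's argument.
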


Compared to previous works that aim to solve Eq. \eqref{eq:maml} under the same assumptions, \momlvo~can ensure convergence without the need for an extremely large batch $K=\O(1/\epsilon^2)$, as required in \citep{hu2020biased}, or processing all $n$ tasks~\citep{Ghadimi2020AST}.

\subsection{Handling Unbounded Gradients}

We propose another variant of MOML --- \momlvs~in \Cref{alg:moml_v2} that is provably convergent with possibly unbounded gradients and only requires $K=\O(1)$ data samples for each sampled task in one iteration. In \momlvs, the personalized model is updated as $\u^i$
\begin{align}\label{eq:moml_v2}
	\u_{t+1}^i =\begin{cases}
		(1-\beta_t)\u_t^i + \beta_t \w_t + \frac{\beta_t }{p_i} (\vhat_t^i - \w_t)& i\in\B'_t\\
		(1-\beta_t)\u_t^i + \beta_t \w_t & i\not\in\B'_t
	\end{cases},\tag{v2}	
\end{align}
where $\B'_t$ is independent of $\B_t$ and $p_i$ is the probability of selecting task $i$, that is, $p_i = \mathrm{Prob}(i\in\B'_t)$. Besides, we set $\eta_t = \frac{\eta_0}{\widehat{L}(\w_t)}$ and $\beta_t = 6L^2\eta_0^{-1/3}\eta_{t-1}$ for \momlvs. Under the same set of assumptions as \citet{fallah2020convergence}, we can show the error of tracking the inner function $\v_i(\w_t)$ is diminishing for \momlvs~when $\eta_0$ is properly chosen. 
\begin{lemma}\label{lem:fval_recursion}
	If $\eta_0 \leq \min\{(1/3L)^{3/2}, (3L^2/C_3 C_6)^{3/2}\}$, we have
		\begin{align*}
			& \E\left[\Upsilon_{t+1}\mid \F_t\right] \leq \left(1-3L^2 \eta_0^{-\frac{1}{3}}\E\left[\eta_t\mid \F_t\right]\right) \E\left[\Upsilon_t\mid \F_t\right] + \eta_0^{\frac{1}{3}}\E\left[\eta_t\mid \F_t\right]C_9\Norm{\nabla F(\w_t)}^2 + \eta_0^{\frac{4}{3}}C_{10} ,
		\end{align*}
	where $\Upsilon_t\coloneqq \frac{1}{n}\sum_{i=1}^n\Norm{\u_{t+1}^i - \v_i(\w_t)}^2$ and $C_9$, $C_{10}$ are $\O(1)$ constants w.r.t. $\epsilon$ and $n$~\footnote{Proof of Lemma~\ref{lem:fval_recursion} in the appendix specifies the detailed expressions of $C_9$ and  $C_{10}$.}.
\end{lemma} 

\begin{theorem}[Informal]
	Under Assumptions~\ref{asm:smoothness}, \ref{asm:bounded_var}, \ref{asm:bounded_below}, and \ref{asm:bound_grad_dis}, it is guaranteed that \emph{$\text{MOML}^{\text{v2}}$} with $\eta_t = \frac{\eta_0}{\widehat{L}(\w_t)}$, $\beta_t = 6L^2\eta_0^{-1/3}\eta_{t-1}$, $\eta_0 = \O(\epsilon^{-3})$ and constant batch sizes $|\S_1^i|=|\S_2^i|=|\S_3^i| = K = \O(1)$, $|\B_t|=|\B_t'| = B = \O(1)$ can find an $\epsilon$-stationary point $\w_\tau$ in $T=O(C_p\epsilon^{-5})$ iterations, where $C_p = \max_i 1/p_i - 1$.
\label{thm:moml_v2_informal}
\end{theorem}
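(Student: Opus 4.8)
The plan is to couple a descent inequality for the meta-iterates $\{\w_t\}$ with the tracking-error recursion of \Cref{lem:fval_recursion}, telescope the coupled system, and then convert the resulting stepsize-weighted gradient bound into $\epsilon$-stationarity. There are three pieces: (i) a ``v2 analogue'' of \Cref{lem:moml_all_steps} that accommodates both the adaptive random stepsize $\eta_t = \eta_0/\widehat{L}(\w_t)$ and the state-dependent smoothness $L(\w) = 4L + \tfrac{2\rho\alpha}{n}\sum_{i}\Norm{\nabla\L_i(\w)}$ of \Cref{lem:smoothness}; (ii) the recursion of \Cref{lem:fval_recursion} for the tracking error $\Upsilon_t = \tfrac1n\sum_{i}\Norm{\u_{t+1}^i - \v_i(\w_t)}^2$, whose contraction factor is $\propto \eta_0^{-1/3}\eta_t$ precisely because $\beta_t = 6L^2\eta_0^{-1/3}\eta_{t-1}$; and (iii) a final step that handles the randomness of $\eta_t$ together with the possibly unbounded gradients allowed by \Cref{asm:bound_grad_dis}.

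First I would prove the descent inequality. Applying $L(\w_t)$-smoothness of $F$ at $\w_t$ gives $F(\w_{t+1}) \le F(\w_t) - \eta_t \inner{\nabla F(\w_t)}{\Deltah_{\B_t}} + \tfrac{L(\w_t)}{2}\eta_t^2\Norm{\Deltah_{\B_t}}^2$. Following \citet{fallah2020convergence} for the random stepsize, I would take conditional expectations given $\F_t$, use $\E[\widehat L(\w_t)\mid\F_t] = \O(L + \rho\alpha(\text{gradient norms}))$ and a bound on $\E[\Norm{\Deltah_{\B_t}}^2\mid\F_t]$, and split the inner product into a term $\approx -\eta_t\Norm{\nabla F(\w_t)}^2$, a bias term that is controlled by $\Upsilon_t$ (passing from $\v_i(\w_t)$ to $\u_{t+1}^i$ via the $L$-Lipschitz gradients, exactly as in the last term of \Cref{lem:moml_all_steps}), and a variance term of order $\eta_t^2$ times $\sigma_G^2/K + \sigma_H^2/K + \gamma_G^2$. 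Summing over $t$ yields a bound of the shape $\sum_t \E[\eta_t\Norm{\nabla F(\w_t)}^2] \lesssim F(\w_0) - F^\star + \sum_t\E[\eta_t^2]\cdot(\text{variance constants}) + \sum_t \E[\eta_t\Upsilon_t]$.

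Next I would couple this with \Cref{lem:fval_recursion}. Rearranging that recursion and summing gives $3L^2\eta_0^{-1/3}\sum_t\E[\eta_t\Upsilon_t] \le \Upsilon_0 + \eta_0^{1/3}C_9\sum_t\E[\eta_t\Norm{\nabla F(\w_t)}^2] + C_{10}T\eta_0^{4/3}$, hence $\sum_t\E[\eta_t\Upsilon_t] \lesssim \eta_0^{1/3}\Upsilon_0 + \eta_0^{2/3}\sum_t\E[\eta_t\Norm{\nabla F(\w_t)}^2] + C_{10}T\eta_0^{5/3}$. Substituting back into the descent bound and taking $\eta_0$ small enough (which also meets the hypothesis $\eta_0\le\min\{(1/3L)^{3/2},(3L^2/C_3C_6)^{3/2}\}$ of \Cref{lem:fval_recursion}) that the $\eta_0^{2/3}$-weighted copy of $\sum_t\E[\eta_t\Norm{\nabla F(\w_t)}^2]$ is absorbed on the left, I get $\tfrac12\sum_t\E[\eta_t\Norm{\nabla F(\w_t)}^2] \lesssim (F(\w_0) - F^\star) + \eta_0^{1/3}\Upsilon_0 + C_{10}T\eta_0^{5/3} + \sum_t\E[\eta_t^2]\cdot(\text{variance})$. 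The factor $C_p = \max_i 1/p_i - 1$ enters only through the constants $C_6,\dots,C_{10}$, reflecting the inflated variance of the importance-weighted personalized update \eqref{eq:moml_v2}; constant $B=\O(1)$ and $K=\O(1)$ keep $C_1,\dots,C_{10}$ finite.

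The main obstacle is the conversion to $\epsilon$-stationarity, since $\eta_t$ is random and the gradients may be unbounded. I would output $\w_\tau$ with $\tau$ drawn with probability $\propto \eta_\tau$, so $\E[\Norm{\nabla F(\w_\tau)}^2] = \big(\sum_t\E[\eta_t\Norm{\nabla F(\w_t)}^2]\big)\big/\big(\sum_t\E[\eta_t]\big)$; the numerator is controlled above, and for the denominator one needs a lower bound on $\sum_t\E[\eta_t] = \eta_0\sum_t\E[1/\widehat L(\w_t)]$. This is exactly where \Cref{asm:bound_grad_dis} is essential: it gives $\tfrac1n\sum_i\Norm{\nabla\L_i(\w_t)}\le \Norm{\nabla\L(\w_t)} + \gamma_G$, and the explicit form $\v_i(\w_t) = \w_t - \alpha\nabla\L_i(\w_t)$ together with $\Norm{\nabla^2\L_i}\le L$ lets one relate $\Norm{\nabla\L(\w_t)}$ to $\Norm{\nabla F(\w_t)}$ up to $\O(\alpha)$ corrections, so that $\E[\widehat L(\w_t)]$ is self-bounded by the very gradient quantity being controlled --- a Gr\"onwall-type step that replaces the bounded-gradient assumption used for $\text{MOML}^{\text{v1}}$. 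Putting the pieces together with $\eta_0 = \Theta(\epsilon^3)$ (forced by the $C_{10}T\eta_0^{5/3}$ term, which must be $\O(\epsilon^2)$ relative to $\sum_t\eta_t$) gives $\E[\Norm{\nabla F(\w_\tau)}^2] \lesssim \tfrac{1}{T\eta_0} + \eta_0^{2/3} \lesssim \epsilon^2$ once $T = \O(C_p\epsilon^{-5})$, and Jensen's inequality yields $\E[\Norm{\nabla F(\w_\tau)}] \le \epsilon$. I expect this self-bounding control of $\E[\widehat L(\w_t)]$ under unbounded gradients to be the delicate part of the argument.
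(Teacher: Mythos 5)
Your architecture for the first two thirds of the argument matches the paper's: a descent inequality under the state-dependent smoothness $L(\w_t)$, the tracking recursion of \Cref{lem:fval_recursion}, and a coupling in which the $\eta_0^{2/3}$-weighted copy of the gradient sum is absorbed for small $\eta_0$. (The paper packages this coupling as a Lyapunov function $\Phi_t = \eta_0^{1/3}\Upsilon_t + F(\w_t)$ in Lemma~\ref{lem:main_bd}, but that is equivalent to your ``substitute and absorb'' step, and $C_p$ enters through Lemma~\ref{lem:last_term} exactly as you say.)

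The gap is in your endgame, the part you yourself flag as delicate. First, the identity $\E[\Norm{\nabla F(\w_\tau)}^2] = \big(\sum_t\E[\eta_t\Norm{\nabla F(\w_t)}^2]\big)/\big(\sum_t\E[\eta_t]\big)$ is false when the weights $\eta_t = \eta_0/\widehat L(\w_t)$ are random and correlated with the trajectory: sampling $\tau$ with probability proportional to $\eta_\tau$ gives $\E\big[\sum_t\eta_t\Norm{\nabla F(\w_t)}^2/\sum_t\eta_t\big]$, which is not a ratio of expectations. Second, and more fundamentally, your proposed lower bound on $\sum_t\E[\eta_t]$ via a ``self-bounding'' control of $\E[\widehat L(\w_t)]$ is circular: by Jensen you would need an upper bound on $\E[\widehat L(\w_t)]$, hence on $\E[\Norm{\nabla F(\w_t)}]$, \emph{for every} $t$, whereas the analysis only ever delivers an averaged or best-iterate bound. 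The paper sidesteps both issues. Lemma~\ref{lem:step_size} gives the conditional lower bound $\E[1/\widehat L(\w_t)\mid\F_t]\ge \tfrac{4}{5L(\w_t)}$, so the per-step progress is at least $\tfrac{\eta_0}{10}\Norm{\nabla F(\w_t)}^2/L(\w_t)$ pointwise; then the inequality
\begin{equation*}
\frac{\Norm{\nabla F(\w_t)}^2}{L(\w_t)} \;\gtrsim\; \min\left\{\frac{\Norm{\nabla F(\w_t)}^2}{L+\rho\alpha\sigma},\ \frac{\Norm{\nabla F(\w_t)}}{\rho\alpha}\right\}
\end{equation*}
(the step borrowed from (103)--(106) of \citealt{fallah2020convergence}) shows that even when the gradient is huge and the stepsize correspondingly tiny, the product $\eta_t\Norm{\nabla F(\w_t)}^2$ is still at least linear in $\Norm{\nabla F(\w_t)}$. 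The theorem is then proved by contradiction: assuming $\E[\Norm{\nabla F(\w_t)}]\ge\epsilon$ for all $t\le T$, the min is at least $\epsilon^2/(L+\rho\alpha(\sigma+\epsilon))$, and telescoping $\Phi_t$ forces $T\le \O(C_p\epsilon^{-5})$. No lower bound on $\sum_t\E[\eta_t]$ is ever needed. You should replace your weighted-sampling conversion with this min-decomposition plus contradiction argument; the rest of your outline then goes through.
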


\begin{algorithm}[t]
	\caption{$\text{MOML}^{\text{v2}}$}\label{alg:moml_v2}
	\begin{algorithmic}[1]
\STATE	Hyperparameters: $\beta$ (suggested value 0.5), $\eta$ (to be tuned in practice)
		\FOR{$t=0,1,\ldots,T-1$}
		\STATE Select two mutually independent batches of tasks $\B_t$, $\B_t'$ from $n$ tasks
		\FOR{each task $\T_i$, $i\in\B_t'$} 
		\STATE Select $K$ samples $\S_1^i \sim \mathbf D_i$ 
		and compute $\vhat_t^i = \w_t - \alpha \widehat{\nabla}_{\S_1^i}\L_i(\w_t)$
		\ENDFOR
		\STATE Update the personalized model $\u^i$ by
		\begin{align*}
				\u_{t+1}^i =\begin{cases}
				(1-\beta_t)\u_t^i + \beta_t \w_t + \frac{\beta_t }{p_i} (\vhat_t^i - \w_t)& i\in\B_t'\\
				(1-\beta_t)\u_t^i + \beta_t \w_t & i\not\in\B_t'
			\end{cases}.
		\end{align*}
		\STATE Select $K$ samples $\S_2^i$ and $\S_3^i$ from $\mathbf D_i$ of task $\T_i$, $i\in\B_t$ and compute $\Deltah_{\B_t}$ by (\ref{eq:moml_est})
		\STATE Update the meta-model by $\w_{t+1} = \w_t - \eta_t  \Deltah_{\B_t}$
		\ENDFOR
	\end{algorithmic}
\end{algorithm}

\begin{remark}
Theorem~\ref{thm:moml_v2_informal} demonstrates that \emph{\momlvs} improves upon the theory of MAML~\citep{fallah2020convergence} by eliminating the need to sample $K=\O(1/\epsilon^2)$ data samples in each iteration. However, it should be noted that both the \emph{MAML} variant in \citet{fallah2020convergence} and \emph{\momlvs} are more of theoretical interest because they require additional tasks/samples $\B_{\L_t}$ and $\S_{L_t}$ to estimate the gradient-Lipschitz parameter and then calculate the step size $\eta_t$ in each iteration. Previous research~\citep{ji2020multistep} and our experiments have demonstrated that the gradients $\nabla \L_i(\w_t)$ are well-bounded during the meta-training process, making \emph{\momlvo} with a constant step size a more practical variant.
\end{remark}	
 
 \section{LocalMOML for Personalized Federated Learning}
This section presents a stochastic algorithm for solving MAML in the federated learning setting, assuming that there are $n$ clients, and the $i$-th task and its corresponding data are only accessible at the $i$-th client. Note that this assumption can be relaxed to a setting where the $n$ tasks are allocated to $m<n$ clients with non-overlapping. These clients are only permitted to aggregate the models and not exchange any data. However, a naive implementation of MOML in the federated learning setting would require aggregating the local gradient estimator $\Deltah_{\B}^i$ at every iteration, or equivalently, aggregating the local copies of the meta-model at every iteration. Consequently, the communication complexity would be as high as the iteration complexity $T = \O(n/\epsilon^5)$. Our objective is to reduce communication complexity by proposing communication-efficient federated learning algorithms.
 
 \subsection{Algorithm Outline}
  Our algorithm, called LocalMOML, is presented in Algorithm~\ref{alg:fed_moml}. This algorithm is partially motivated by the numerous algorithms in federated learning that use local computations to trade-off communications~\citep{DBLP:conf/nips/Yang13,mcmahan2017communication,deng2020distributionally,karimireddy2019scaffold,stich2018local,DBLP:journals/corr/abs-1808-07217,woodworth2020local,khaled2020tighter}. Each client not only maintains and updates its personalized model $\u^i$ but also maintains and updates its local copy of the meta-model denoted by $\w^i$. A key feature of LocalMOML is that $H$ local steps are run on each sampled client before these clients communicate to aggregate the local meta-models.
 \begin{algorithm}[t]
 	\caption{LocalMOML}\label{alg:fed_moml}
 	\begin{algorithmic}[1]
 	\STATE	Hyperparameters: $\beta$ (suggested value 0.5), $\eta$ (to be tuned in practice)
 		\FOR{$r=1,\ldots,R$} 
 		\STATE Select a batch of $B$ clients $\B_r$. 
 		\FOR{each client $\C_i$, $i\in\B_r$} 
 		\IF{client sampling} 
 		\STATE Select $K_0$ samples $\S_0^i\sim \D_i$, reset $\u_{r,1}^i = \w_{r,1}^i - \alpha \nablah_{\S_0^i}\L_i(\w_{r,1}^i)$
 		\ELSE
 		\STATE Set $\u_{r,1}^i = \u_{r-1,H}^i$ 		
 		\ENDIF
 		\FOR{$h=1,\dotsc,H$} 
 		\STATE Sample $\S_1^i$ to update the personalized model $\u_{r,h}^i$ by (\ref{eq:fed_update_u})
 		\STATE Select two sets $\S_2^i$ and $\S_3^i$ of from $\mathbf D_i$ to compute $\Deltah_{r,h}^i$ by (\ref{eq:fed_deltah}), and update the local model by $\w_{r,h+1}^i = \w_{r,h}^i - \eta \Deltah_{r,h}^i$
 		\ENDFOR
 		\STATE Client $\C_i$ sends $ \w_{r,H+1}^i$ to the server.
 		\ENDFOR
 		\STATE The server aggregates and broadcasts $\w_{r+1} =\frac{1}{B}\sum_{i\in \B_r}\w_{r,H+1}^i$ 
 		\ENDFOR
 	\end{algorithmic}
 \end{algorithm}

We consider both the cross-silo setting and the cross-device setting in the literature on federated learning.  In the cross-device setting only a partial set of $B<n$ clients are sampled to update their local models at each round, while in the  cross-silo setting all $n$ clients will participate in updating the model at each round, that is,  $B=n$. In these two settings, LocalMOML needs different ways to initialize the personalized model $\u_{r,h}^i$ at the beginning of each round. In the cross-silo setting ($B=n$), personalized models are directly copied from the end of the previous round, that is, $\u_{r,1}^i = \u_{r-1, H}^i$. In the cross-device setting, ($B<n$), the personalized models for the sampled tasks are restarted as $\u_{r,1}^i = \w_{r,1}^i - \alpha \nablah_{\S_0^i} \L_i(\w_{r,1}^i)$ for $i \in\B_r$. Then, the personalized model $\u_{r,h}^i$ for a sampled client $i$ is updated as:
\begin{align}\label{eq:fed_update_u}
	\u_{r,h}^i =  
	(1-\beta) \u_{r,h-1}^i + \beta \left(\w_{r,h}^i - \alpha \nablah_{\S_1^i} \L_i(\w_{r,h}^i)\right).
\end{align}
Based on the local personalized model $\u^i$ and meta model $\w^i$, the stochastic gradient estimator is computed as:
\begin{align}\label{eq:fed_deltah}
	\Deltah_{r,h}^i = (I-\alpha \nablah^2_{\S_2^i}\L_i(\w_{r,h}^i))\nablah_{\S_3^i}\L_i(\u_{r,h}^i).
\end{align}	
Once all $H$ iterations have been completed in each round, the local copies of the meta-model at each client are aggregated for synchronization. We note that the Per-FedAvg algorithm~\citep{fallah2020personalized} is a special case of LocalMOML (\Cref{alg:fed_moml}) with $\beta = 1$, and that the original MAML algorithm corresponds to LocalMOML with $\beta = 1$ and $H=1$.

It is worth noting that LocalMOML can also be implemented in the single-node learning setting, for example by parallelizing on multiple CPU cores of a single machine. While MOML needs to maintain individualized models for all $n$ tasks during the entire training process, which limits its scalability to a large number of tasks, LocalMOML only needs to maintain individualized models for a small subset of $B$ sampled tasks within each round (epoch) $r$, where $B$ can be small enough to avoid critical memory issues. After completing the iterations of round $r$, the individualized model $\u_{r,H}^i$ for the sampled task in round $r$ can be erased from memory and will not be used in any later round $r'>r$, even if that task is selected again. This means that LocalMOML can handle an infinite number of tasks without encountering memory constraints.

\subsection{Convergence results of LocalMOML}

We analyze LocalMOML under the same assumptions as the Per-FedAvg~\citep{fallah2020personalized}, that is, Assumptions~\ref{asm:smoothness},~\ref{asm:bounded_var},~\ref{asm:bounded_below},~\ref{asm:bounded_grad},~\ref{asm:bound_grad_dis}, and the assumption below. 
\begin{asm}\label{asm:bounded_hes_dis}
	There exists $\gamma_H\geq 0$ that: $\frac{1}{n}\sum_{i=1}^n \Norm{\nabla^2 \L_i(\w) - \frac{1}{n}\sum_{i=1}^n\nabla^2 \L_i(\w)}^2 \leq \gamma_H^2$.
\end{asm}
Note that Assumptions~\ref{asm:smoothness} and \ref{asm:bounded_grad} implies Assumptions~\ref{asm:bound_grad_dis} and \ref{asm:bounded_hes_dis}. However, directly utilizing Assumptions~\ref{asm:bound_grad_dis} and \ref{asm:bounded_hes_dis} in the analysis can explicitly show the impact of the dissimilarity of data distribution on the final performance~\citep{fallah2020personalized}.

\begin{theorem}\label{thm:fed_moml}
	$R$ rounds of \emph{LocalMOML} with the stepsize $	\eta \leq \min\left\{\frac{C_4}{H},\frac{C_5\beta}{\mathbb{I}[\beta\in(0,1)]}\right\}$ leads to
	\begin{align*}
		& \frac{1}{R}\sum_{r=1}^R \E\left[\Norm{\nabla F(\w_r)}^2\right] \\
		& \leq \frac{4F(\w_1)}{\eta T} + 32\eta^2 H(H-1)C_1(\hat{\sigma}^2 + 2\gamma_F^2) + \frac{4\eta}{B}\left(\hat{\sigma}^2 + \frac{4(n-B)}{(n-1)}H\gamma_F^2\right) + \frac{64C_3\beta\alpha^2\sigma_G^2}{|\S_1^i|}\\
		& + \frac{64C_3\sigma_G^2}{\beta \left(\mathbb{I}[B<n]H S_0 + \mathbb{I}[B=n]HR \right)} \mathbb{I}[\beta\in(0,1)] + \frac{3072C_3\eta^2(\hat{\sigma}^2 + 2\gamma_F^2)}{\beta^2} \mathbb{I}[\beta\in(0,1)],
	\end{align*}
	where $T=RH$, $\hat{\sigma}^2 \coloneqq \frac{2\sigma_G^2}{|\S_3^i|} + \frac{2\alpha^2\sigma_G^2}{|\S_3^i|}\left(\frac{\sigma_H^2}{|\S_2^i|} + L^2\right)+ \frac{\alpha^2G^2\sigma_H^2}{|\S_2^i|} $, and $C_1, C_3, C_4, C_5$ are $\O(1)$ constants w.r.t. $\epsilon$ and $n$. 
\end{theorem}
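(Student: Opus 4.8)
The plan is to extend the two-lemma template of the centralized analysis (Lemmas~\ref{lem:moml_all_steps} and~\ref{lem:error_estimate}) so as to also absorb the \emph{client drift} produced by the $H$ local steps between communications, and the extra variance from client sampling when $B<n$. The key object is the \emph{virtual averaged iterate} within a round, $\bar\w_{r,h}\coloneqq\frac1B\sum_{i\in\B_r}\w_{r,h}^i$, which satisfies $\bar\w_{r,1}=\w_r$, $\bar\w_{r,H+1}=\w_{r+1}$, and $\bar\w_{r,h+1}=\bar\w_{r,h}-\eta\,\frac1B\sum_{i\in\B_r}\Deltah_{r,h}^i$. I would apply the $L_F$-smoothness of $F$ (Lemma~\ref{lem:smoothness}) to consecutive virtual iterates and take conditional expectations; the conditional mean of $\frac1B\sum_{i\in\B_r}\Deltah_{r,h}^i$ is $\frac1B\sum_{i\in\B_r}(I-\alpha\nabla^2\L_i(\w_{r,h}^i))\nabla\L_i(\u_{r,h}^i)$, which differs from $\nabla F(\bar\w_{r,h})$ through (a) the drift $\w_{r,h}^i-\bar\w_{r,h}$, (b) the tracking error $\u_{r,h}^i-\v_i(\w_{r,h}^i)$ — bounded using that $\v_i$ is $(1+\alpha L)$-Lipschitz and $\nabla\L_i$ is $L$-Lipschitz — and (c) a zero-mean fluctuation from the choice of $\B_r$. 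Splitting the inner product by Young's inequality, telescoping $F$ over $h=1,\dots,H$ and $r=1,\dots,R$, and dividing by $T=RH$ gives
\begin{align*}
\frac{\eta}{T}\sum_{r=1}^R\sum_{h=1}^H\E\Norm{\nabla F(\bar\w_{r,h})}^2 &\lesssim \frac{F(\w_1)-\min_{\w\in\R^d}F(\w)}{T} + \frac{\eta^2L_F\hat\sigma^2}{B} + \frac{\eta L^2}{T}\sum_{r,h}\frac1B\sum_{i\in\B_r}\E\Norm{\v_i(\w_{r,h}^i)-\u_{r,h}^i}^2\\
&\quad+ \frac{\eta L_F^2}{T}\sum_{r,h}\frac1B\sum_{i\in\B_r}\E\Norm{\w_{r,h}^i-\bar\w_{r,h}}^2 + \frac{\eta^2(n-B)H\gamma_F^2}{B(n-1)}.
\end{align*}
Dividing by $\eta$ and using $\Norm{\nabla F(\w_r)}^2=\Norm{\nabla F(\bar\w_{r,1})}^2\le\frac2H\sum_h\Norm{\nabla F(\bar\w_{r,h})}^2+\frac{2L_F^2}{H}\sum_h\Norm{\bar\w_{r,1}-\bar\w_{r,h}}^2$ turns the left side into the quantity in the theorem.

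It remains to bound the two error sums. For the client drift, each sampled client restarts from $\w_r$ at $h=1$ and then takes at most $H$ steps of size $\eta$, so unrolling $\w_{r,h}^i-\bar\w_{r,h}=-\eta\sum_{h'<h}(\Deltah_{r,h'}^i-\frac1B\sum_j\Deltah_{r,h'}^j)$ yields $\frac1B\sum_{i\in\B_r}\E\Norm{\w_{r,h}^i-\bar\w_{r,h}}^2\lesssim\eta^2H(H-1)(\hat\sigma^2+\gamma_F^2+\E\Norm{\nabla F(\w_r)}^2+\text{tracking error})$; here the heterogeneity parameter $\gamma_F$ of the per-task objectives $F_i(\cdot)=\L_i(\v_i(\cdot))$ — itself controlled by $\gamma_G,\gamma_H$ through an auxiliary lemma — enters because the local gradients disagree across clients. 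Under $\eta\le C_4/H$ the resulting $\eta^2H^2\E\Norm{\nabla F(\w_r)}^2$ contribution is small enough to be moved back to the left-hand side.

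For the tracking error, set $e_{r,h}^i\coloneqq\u_{r,h}^i-\v_i(\w_{r,h}^i)$. The momentum update~\eqref{eq:fed_update_u}, together with $\Norm{\v_i(\w_{r,h}^i)-\v_i(\w_{r,h-1}^i)}\le(1+\alpha L)\eta\Norm{\Deltah_{r,h-1}^i}$ and the unbiasedness and $\alpha^2\sigma_G^2/|\S_1^i|$ variance of the one-step estimate $\w_{r,h}^i-\alpha\nablah_{\S_1^i}\L_i(\w_{r,h}^i)$, gives the one-step contraction $\E\Norm{e_{r,h}^i}^2\le(1-\tfrac\beta2)\E\Norm{e_{r,h-1}^i}^2+\mathcal O(\beta^{-1}\eta^2)\E\Norm{\Deltah_{r,h-1}^i}^2+\mathcal O(\beta\alpha^2\sigma_G^2/|\S_1^i|)$ for $\beta\in(0,1)$. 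Chaining this over local steps, and across rounds according to the regime — cross-silo carries $\u^i$ over ($\u_{r,1}^i=\u_{r-1,H}^i$), so the error accumulates over $HR$ steps, whereas cross-device re-initializes $\u^i$ each round by a one-shot estimate, so the restart error accumulates only over the number of participations, which is the origin of the piecewise denominator $\mathbb I[B<n]HS_0+\mathbb I[B=n]HR$ — and bounding $\E\Norm{\Deltah_{r,h-1}^i}^2\lesssim\E\Norm{\nabla F(\w_r)}^2+\hat\sigma^2+\gamma_F^2+\text{drift}+\text{tracking error}$ produces the $\frac1{\beta T}$-type term, the $\beta\alpha^2\sigma_G^2$ term and the $\eta^2/\beta^2$ term. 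One then plugs the drift bound into the descent inequality, plugs the tracking-error bound into both, and picks $\eta\le\min\{C_4/H,\,C_5\beta\}$ (the second constraint active only when $\beta<1$) so that the $\eta^2H^2$ factor on $\Norm{\nabla F}^2$, the $L^2\beta^{-1}\eta^2$ factor coupling the tracking error into $\Norm{\Deltah}^2$, and the $\eta^2/\beta^2$ factor coupling the drift into the tracking error are all at most a small constant; rearranging yields the stated bound, with $C_1,\dots,C_5$ coming out of these absorptions.

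The main obstacle is exactly this mutual coupling: the accumulated client drift depends on $\Norm{\Deltah}^2$, hence on both $\Norm{\nabla F}^2$ and the accumulated tracking error; the accumulated tracking error depends on $\Norm{\Deltah}^2$ for the same reason; and the descent inequality needs both sums to be small. Closing the system reduces to checking that the $3\times3$ matrix of coupling coefficients is contractive under the joint step-size conditions $\eta\le C_4/H$ and $\eta\le C_5\beta$, which is what dictates the precise forms of $C_1,\dots,C_5$. The $B<n$ case adds a second layer: one must quantify how much the per-round restart of $\u^i$ degrades the tracking error (the $S_0$-versus-$RH$ dichotomy above) and carry the $\frac{n-B}{n-1}$ partial-participation factors consistently through both the drift and the variance terms.
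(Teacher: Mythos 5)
Your overall architecture --- a descent inequality, a client-drift bound, a tracking-error bound for $\u_{r,h}^i-\v_i(\w_{r,h}^i)$ with the $\beta$-contraction and the $S_0$-versus-$HR$ initialization dichotomy, and a final closing of the three-way coupled system under $\eta\le\min\{C_4/H,\,C_5\beta\}$ --- matches the paper's (Lemmas~\ref{lem:one_round}, \ref{lem:local_drift}, \ref{lem:fed_moml_per_mavg}), and the terms you predict are the ones that appear in the statement. The one genuine gap is in how you anchor the descent step. You apply $L_F$-smoothness between consecutive \emph{sampled-average} virtual iterates $\bar\w_{r,h}=\frac1B\sum_{i\in\B_r}\w_{r,h}^i$ and assert that the deviation of the conditional mean of the update from $\nabla F(\bar\w_{r,h})$ contains ``a zero-mean fluctuation from the choice of $\B_r$.'' For $h>1$ and $B<n$ this is false: $\bar\w_{r,h}$ is itself a function of $\B_r$, so $\E\bigl[\frac1B\sum_{i\in\B_r}\nabla F_i(\bar\w_{r,h})\mid\F_r\bigr]\neq\frac1n\sum_{i=1}^n\nabla F_i(\bar\w_{r,h})$ and the sampling fluctuation is biased. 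This is precisely the error the paper identifies in the Per-FedAvg analysis (see \Cref{sec:wrong_fed_maml}), so it cannot be passed over.

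The paper avoids it by (i) letting all $n$ clients \emph{virtually} run local steps from $\w_r$, so that each $\w_{r,h}^i$ and $\Deltah_{r,h}^i$ is independent of $\B_r$, and (ii) applying smoothness once per round with the effective step $\tilde{\eta}=\eta H$, anchoring every inner product at the point $\w_r$, which is measurable with respect to the past, rather than at a sampled average (Lemma~\ref{lem:one_round}); the partial-participation factor $\frac{n-B}{B(n-1)}$ then enters only through the second-moment bound of Lemma~\ref{lem:tau_nice}, and the drift is measured relative to $\w_r$ rather than to $\bar\w_{r,h}$. Your per-step perturbed-iterate route can in principle be repaired the same way (define $\bar\w_{r,h}$ as the average over all $n$ virtual clients and handle the end-of-round mismatch with the actually aggregated $\w_{r+1}$ as a separate variance term), but as written the unbiasedness step fails; moreover, your conversion from $\Norm{\nabla F(\bar\w_{r,h})}^2$ back to $\Norm{\nabla F(\w_r)}^2$ introduces an additional within-round progress term $\Norm{\bar\w_{r,1}-\bar\w_{r,h}}^2$ that would have to be bounded as a fourth coupled quantity, which the paper's anchoring makes unnecessary.
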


We consider the special case $\alpha = 0$, $H = 1$, $\beta=1$, $B =n$ where LocalMOML becomes SGD and Theorem~\ref{thm:fed_moml} recovers the standard result of SGD: $\frac{1}{T}\sum_{t=1}^T \E\left[\Norm{\nabla F(\w_t)}^2\right]\leq \mathcal O\left(\frac{F(\mathbf w_1)}{\eta T} + \eta \sigma_G^2\right)$. Besides, the term $\gamma_F^2\coloneqq 3G^2 \alpha^2 \gamma_H^2 + 192 \gamma_G^2$ in Theorem~\ref{thm:fed_moml} explicitly shows the impact of the dissimilarity of data distribution on the final performance. To better understand the complexities of LocalMOML, we present two corollaries below corresponding to the cross-silo $B=n$ and cross-device $B=\O(1)$ settings. 
\begin{cor}[Cross-device Setting]
	In this setting, we set $\beta = O(\epsilon^2), \eta=\O(\epsilon^4), H= \O(1/\epsilon^2)$, $|\S_0^i| = K_0, |\S_1^i| = |\S_2^i| = |\S_3^i|= K$, $K=O(1), K_0= H$. Then, we can conclude that $T = \O(1/\epsilon^6)$ and $R= \O(1/\epsilon^4)$. Hence, the average number of data points per-iteration is $(K_0 + 3HK)/H = \O(1)$, the total sample complexity is $KBT + RK_0 = \O(1/\epsilon^6)$ and the communication complexity is $R= \O(1/\epsilon^4)$. 
\end{cor}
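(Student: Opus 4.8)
The plan is to obtain the corollary as a direct specialization of \Cref{thm:fed_moml}: substitute the prescribed schedule into the six-term upper bound of that theorem, check that every term is $\O(\epsilon^2)$, and then read off the complexities. First I would fix the orders of the relevant quantities under $\beta=\Theta(\epsilon^2)$, $\eta=\Theta(\epsilon^4)$, $H=\Theta(\epsilon^{-2})$, $|\S_1^i|=|\S_2^i|=|\S_3^i|=K=\Theta(1)$, $|\S_0^i|=K_0=H=\Theta(\epsilon^{-2})$, and $R=\Theta(\epsilon^{-4})$, so that $T=RH=\Theta(\epsilon^{-6})$. Since $K=\Theta(1)$, the quantities $\hat\sigma^2$, $\gamma_F^2$ and the constants $C_1,C_2,C_3,C_4,C_5$ and $L_F$ appearing in \Cref{thm:fed_moml} are all $\Theta(1)$, independent of $\epsilon$. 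Before applying the theorem I would verify the stepsize admissibility hypothesis $\eta\le\min\{C_4/H,\ C_5\beta/\mathbb{I}[\beta\in(0,1)]\}$: for small $\epsilon$ we have $\beta\in(0,1)$, so both $C_4/H$ and $C_5\beta$ are $\Theta(\epsilon^2)$ while $\eta=\Theta(\epsilon^4)$, hence the constraint holds once the leading constant of $\eta$ is taken below $\min\{C_4 c_H,\ C_5 c_\beta\}$; pinning down these constants is the only delicate bookkeeping in the whole argument.

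Next I would bound the six terms of \Cref{thm:fed_moml} in the cross-device regime, where $B<n$ forces $\mathbb{I}[B<n]=1$, $\mathbb{I}[B=n]=0$ and $S_0=K_0=H$. The bounded-objective term $\tfrac{4F(\w_1)}{\eta T}=\Theta(1)/(\epsilon^2 R)$ is $\O(\epsilon^2)$ by the choice $R=\Theta(\epsilon^{-4})$; the local-drift term $32\eta^2 H(H-1)C_1(\hat\sigma^2+2\gamma_F^2)=\O(\eta^2H^2)=\O(\epsilon^4)$; the sampling-noise term $\tfrac{4\eta}{B}(\hat\sigma^2+\tfrac{4(n-B)}{n-1}H\gamma_F^2)=\O(\eta H)=\O(\epsilon^2)$; the momentum-bias term $\tfrac{64C_3\beta\alpha^2\sigma_G^2}{|\S_1^i|}=\O(\beta)=\O(\epsilon^2)$; the tracking-reset term $\tfrac{64C_3\sigma_G^2}{\beta H S_0}=\Theta(1)/(\beta H^2)=\O(\epsilon^2)$ --- and this is precisely why $K_0$ must scale like $H$ rather than being a constant, since $K_0=\Theta(1)$ would leave this term $\Theta(1)$; and the tracking-drift term $\tfrac{3072C_3\eta^2(\hat\sigma^2+2\gamma_F^2)}{\beta^2}=\O(\eta^2/\beta^2)=\O(\epsilon^4)$. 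Summing yields $\tfrac1R\sum_{r=1}^R\E[\|\nabla F(\w_r)\|^2]\le\O(\epsilon^2)$; tuning the hidden constants so this bound is at most $\epsilon^2$ and drawing $\tau$ uniformly from $\{1,\dots,R\}$ gives $\E[\|\nabla F(\w_\tau)\|]\le\sqrt{\E[\|\nabla F(\w_\tau)\|^2]}\le\epsilon$ by Jensen's inequality, i.e.\ $\w_\tau$ is $\epsilon$-stationary.

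Finally I would tally resources. The number of synchronization rounds is $R=\O(\epsilon^{-4})$ and the number of local meta-updates is $T=RH=\O(\epsilon^{-6})$. In each round every sampled client draws $K_0$ samples to reset its personalized model plus $3K$ samples on each of its $H$ local steps, i.e.\ $K_0+3HK$ samples, so the average per local iteration is $(K_0+3HK)/H=1+3K=\O(1)$ and the overall sample complexity is $RB(K_0+3HK)=\O(RK_0+KBT)=\O(\epsilon^{-6})$. The single real obstacle is the constant-chasing that simultaneously enforces the stepsize hypothesis of \Cref{thm:fed_moml} and drives the final right-hand side below $\epsilon^2$; once those constants are fixed, everything else is substitution and counting.
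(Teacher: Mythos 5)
Your proposal is correct and follows exactly the route the paper intends: the corollary is a direct specialization of Theorem~\ref{thm:fed_moml}, obtained by substituting $\beta=\Theta(\epsilon^2)$, $\eta=\Theta(\epsilon^4)$, $H=K_0=\Theta(\epsilon^{-2})$, $K=\Theta(1)$, $R=\Theta(\epsilon^{-4})$ into the six-term bound, checking the stepsize admissibility condition, and tallying $T=RH$, $RK_0+KBT$, and $R$. Your term-by-term order checks (including the observation that $K_0=\Theta(H)$ is what keeps the tracking-reset term at $\O(\epsilon^2)$) all verify, so nothing further is needed.
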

\begin{cor}[Cross-silo Setting]
	In this setting, we set $\beta = \O(\epsilon^2), \eta=\O(\epsilon^3),  H= \O(1/\epsilon^2)$, $|\S_0^i| = 0 , |\S_1^i| = |\S_2^i|= |\S_3^i| = K$, and $K=\O(1)$. Then, we can conclude that  $T = \O(1/\epsilon^5)$ and $R= \O(1/\epsilon^3)$. Hence, the total sample complexity is $KNT = \O(n/\epsilon^5)$ and the communication complexity is $R= \O(1/\epsilon^3)$. 
\end{cor}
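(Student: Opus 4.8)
The plan is to treat this as a direct instantiation of Theorem~\ref{thm:fed_moml}: since the cross-silo regime fixes $B=n$, I would first specialize the master bound and then substitute the prescribed schedule, checking that every term on the right-hand side is driven to $\O(\epsilon^2)$. Setting $B=n$ kills the dissimilarity contribution $\frac{4(n-B)}{(n-1)}H\gamma_F^2$ and activates the indicator $\mathbb{I}[B=n]$ in the fifth term, so the bound collapses to six pieces: the optimization term $\frac{4F(\w_1)}{\eta T}$, the local-drift term $32\eta^2 H(H-1)C_1(\hat{\sigma}^2+2\gamma_F^2)$, the variance term $\frac{4\eta}{n}\hat{\sigma}^2$, the momentum-bias term $\frac{64C_3\beta\alpha^2\sigma_G^2}{|\S_1^i|}$, the tracking term $\frac{64C_3\sigma_G^2}{\beta HR}$, and the tracking-variance term $\frac{3072C_3\eta^2(\hat{\sigma}^2+2\gamma_F^2)}{\beta^2}$. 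Because $K=\O(1)$, all of $\hat{\sigma}^2$, $\gamma_F^2$ and the constants $C_1,C_3$ are $\O(1)$, so each term reduces to a monomial in $\eta,\beta,H,R,T$.

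Next I would fix the parameters by a short balancing chain, each choice pinned by one term. The momentum-bias term is $\O(\beta)$, forcing $\beta=\O(\epsilon^2)$. The tracking-variance term is $\O(\eta^2/\beta^2)$, so to keep it $\O(\epsilon^2)$ I need $\eta\leq\O(\beta\epsilon)=\O(\epsilon^3)$, hence $\eta=\O(\epsilon^3)$. The local-drift term is $\O(\eta^2H^2)$, which requires $\eta H\leq\O(\epsilon)$, giving $H=\O(\epsilon/\eta)=\O(1/\epsilon^2)$. With $\beta,\eta,H$ so chosen, the variance term $\frac{4\eta}{n}\hat{\sigma}^2=\O(\epsilon^3/n)$ is already negligible. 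The only remaining freedom is $T$: the optimization term $\frac{4F(\w_1)}{\eta T}$ is the binding constraint, demanding $\eta T\geq\O(1/\epsilon^2)$, i.e. $T=\O(1/(\eta\epsilon^2))=\O(1/\epsilon^5)$, exactly the claimed iteration complexity.

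Then $R$ follows for free: since $T=RH$ and $H=\O(1/\epsilon^2)$, we get $R=T/H=\O(1/\epsilon^3)$, matching the communication complexity. It remains to clear the two constraints that did not drive a choice. The tracking term is $\O(1/(\beta HR))$; because $\beta H=\O(1)$ this is just $\O(1/R)=\O(\epsilon^3)$, comfortably below $\epsilon^2$, so it imposes nothing beyond $R\geq\O(1/\epsilon^2)$, which $R=\O(1/\epsilon^3)$ satisfies. I would also confirm the admissibility condition $\eta\leq\min\{C_4/H,\,C_5\beta\}$ together with $\beta\in(0,1)$: both $C_4/H$ and $C_5\beta$ are $\O(\epsilon^2)$ while $\eta=\O(\epsilon^3)$, and $\beta=\O(\epsilon^2)<1$, so the schedule is feasible once $\epsilon$ lies below a fixed constant. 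Having bounded the average squared gradient by $\O(\epsilon^2)$, I would draw $\tau$ uniformly from $\{1,\dots,R\}$ and apply Jensen's inequality to get $\E[\Norm{\nabla F(\w_\tau)}]\leq\sqrt{\O(\epsilon^2)}=\O(\epsilon)$, so $\w_\tau$ is $\epsilon$-stationary. The sample count is then immediate: with $B=n$ and $|\S_0^i|=0$, each of the $n$ clients draws $\O(K)$ samples in each of its $T=RH$ inner iterations, for $\O(nKT)=\O(n/\epsilon^5)$ in total; each round costs one aggregate-and-broadcast, for $R=\O(1/\epsilon^3)$ communications.

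The routine algebra above hides the one genuine point of care, namely the \emph{simultaneous} solvability of all six constraints rather than any single one. The local-drift and tracking-variance terms pull $\eta$ and $H$ in opposite directions (small $\eta H$ versus small $\eta/\beta$), and the tracking term couples $\beta$, $H$ and $R$; the reason the schedule closes cleanly is the cancellation $\beta H=\O(1)$, which decouples the tracking term from $H$ and lets $T$ be dictated by the optimization term alone. The main thing I would verify carefully is that this cancellation is exact up to constants under the chosen exponents, since it is precisely what prevents a spurious extra factor of $\epsilon$ from appearing in either $T$ or $R$.
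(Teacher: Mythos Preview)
Your proposal is correct and is exactly the intended approach: the paper states this corollary without a separate proof, treating it as a direct instantiation of Theorem~\ref{thm:fed_moml} with $B=n$, and your term-by-term balancing (together with the stepsize-feasibility check) is precisely how one verifies it. The observation that $\beta H=\O(1)$ decouples the tracking term is the right diagnosis for why the schedule closes without extra $\epsilon$-factors.
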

Thus, in both cross-silo and cross-device settings, our LocalMOML not only removes the unrealistic requirement of the large batch size $\O(1/\epsilon^2)$ in every iteration for the convergence of Per-FedAvg but also improves the sample complexity. 

\section{Experiments}

We evaluate the performance of our proposed algorithms, \momlvo, \momlvs~and LocalMOML, on sinewave regression and one-shot classification tasks in the single-node setting. Furthermore, we demonstrate the effectiveness of LocalMOML in the simulated federated learning setting for the image classification task.

\subsection{Sinewave Regression in the Single-Node Setting} 
First, we compare our proposed \momlvo, \momlvs~and LocalMOML with baselines NASA, MAML (BSGD), BSpiderBoost, and Reptile~\citep{nichol2018first} on the sinewave regression problem~\citep{finn2017model} in the single-node setting. We generate 25 tasks for training in total, each of which is to fit the function $f(x) = A \sin(\phi + x)$, where $A = \{1,2,3,4,5\}$, $\phi = \frac{i\pi}{5}$, $i = 1,2,3,4,5$. Similar to \cite{finn2017model}, we choose the feedforward neural network with ReLU nonlinearities and 2 hidden layers of size 40 as the model and the mean-square error as the loss function. The training and validation data of each task are randomly sampled in an online manner $x\sim\text{Unif}(-5,5)$. NASA uses all $n$ tasks while the others sample $B=3$ out of $n$ tasks. We consider two possible minibatch sizes of data points: $K=1$ and $K=3$ in one iteration. The meta-learned model is adapted to 5 randomly sampled unseen tasks  $A\sim\text{Unif}(1,5)$, $\phi \sim\text{Unif}(\frac{\pi}{5},\pi)$, $x\in[-5, 5]$ after 10 steps of gradient descent with learning rate 0.01 and 10 test data points (in other words, 10-shots). After the adaptation, we evaluate the final test error on another 100 data points from the unseen task. The inner step size $\alpha$ is set to 0.01 for all algorithms. The outer step size $\eta$ is decayed 10 times at 75\% of the total iterations\footnote{except for BSpiderBoost, which requires a $\O(1)$ step size in its theory.} and its initial value is tuned for the algorithms separately by grid search in $\{0.1, 0.05, 0.01, 0.005, 0.001\}$. We also tune $\beta$ for \momlvo, \momlvs~and LocalMOML. It turns out that $\beta = 0.3$ and $\beta = 0.5$ work reasonably well for \momlvo~and LocalMOML while $\beta=0.1$ is good for \momlvs. For LocalMOML, we set the size of the initial number of samples $K_0$ of each round to be 2 times $K$ and $H=5$. The results are averaged over 5 trials with different random seeds. 

\begin{figure*}[t]
	\minipage{0.49\textwidth}
	\includegraphics[width=\linewidth]{./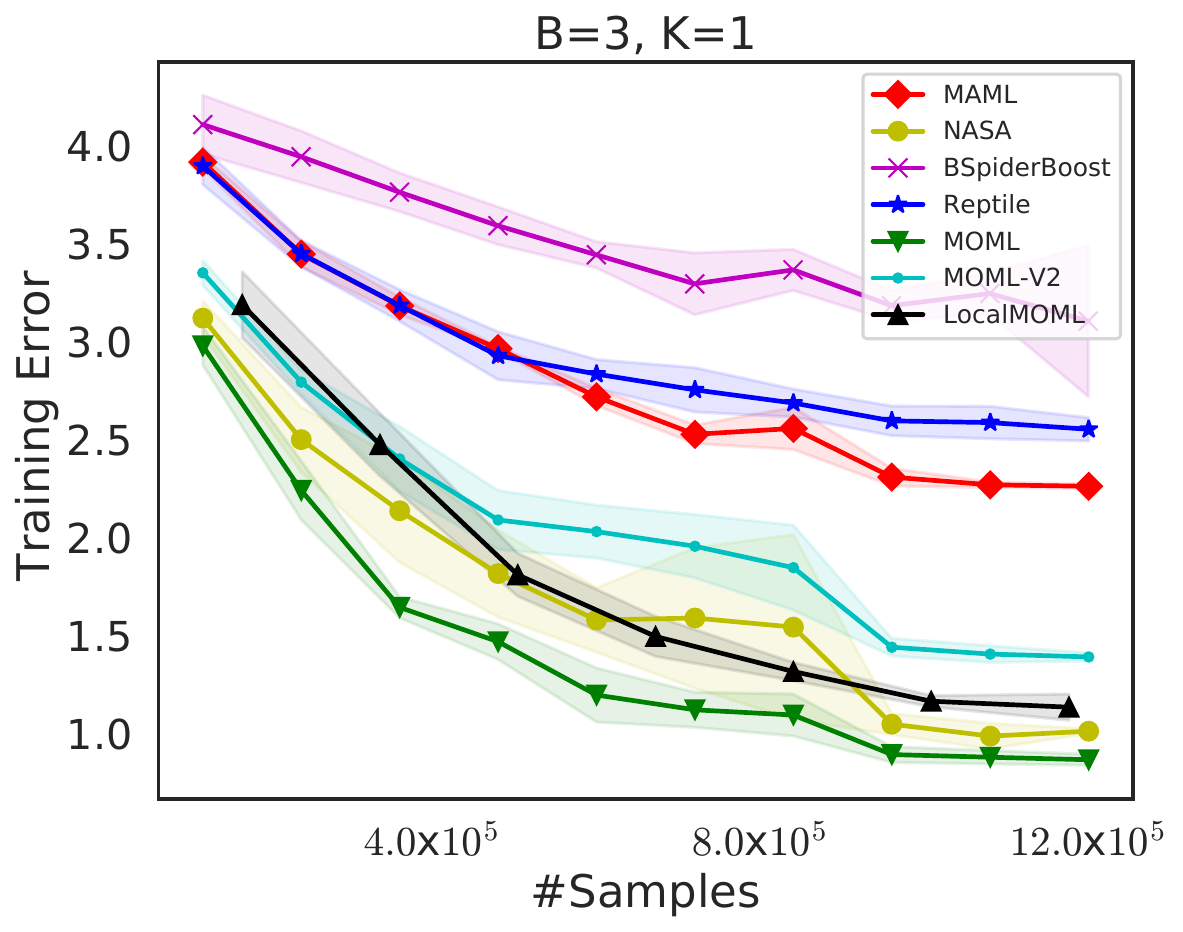}
	\endminipage\hfill  
	\minipage{0.49\textwidth}
	\includegraphics[width=\linewidth]{./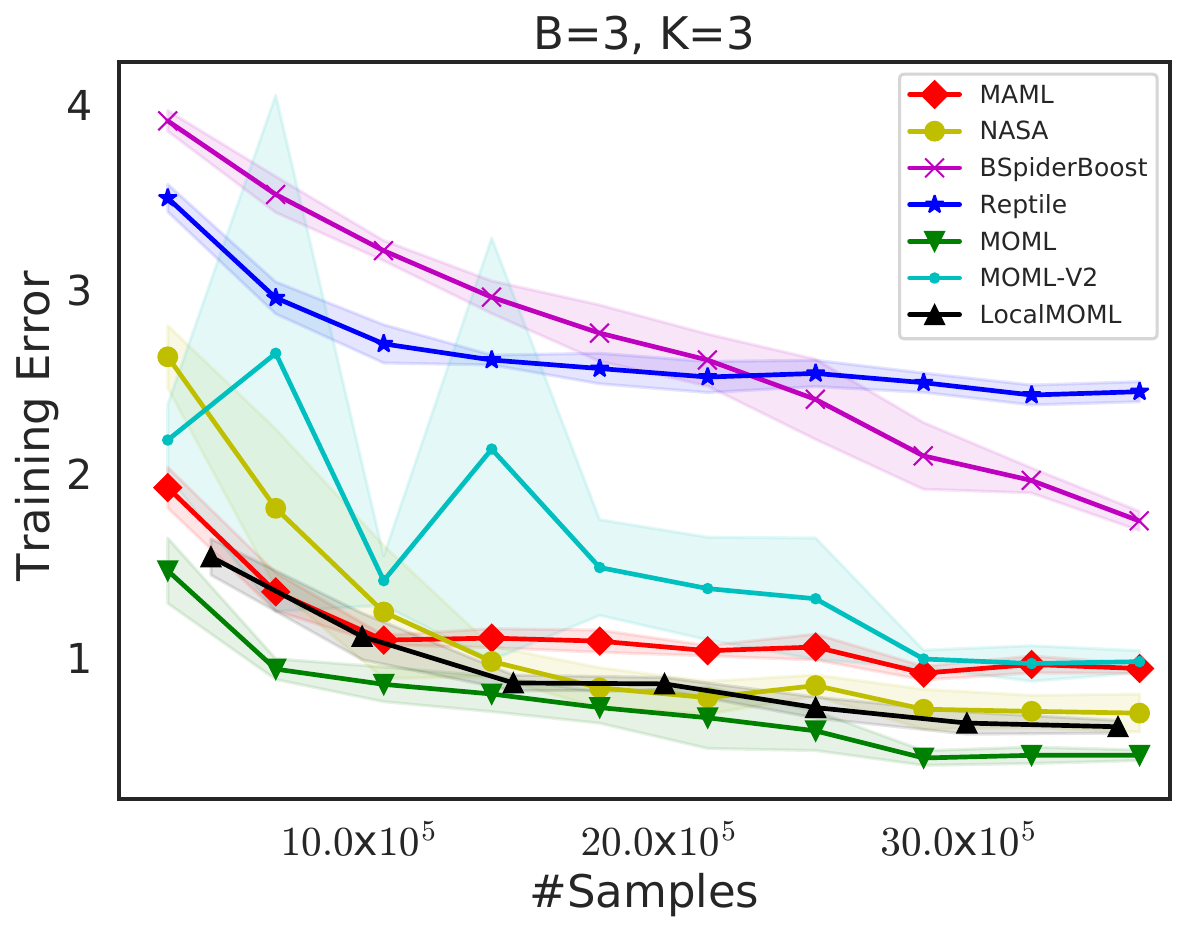}
	\endminipage\hfill  
	\caption{Convergence comparison in terms of the number of samples.}
	\label{fig:sinwave}
\end{figure*}

We compare the convergence of our proposed algorithms and the baselines in terms of the number of samples. The training error Eq.~\eqref{eq:maml} is approximated by $100n$ data points sampled from $n$ tasks. As seen in Figure~\ref{fig:sinwave}, \momlvo~converges the fastest among the algorithms. We also report the final test errors and wall-clock time per iteration in Table~\ref{tab:sinewave}. The differences between MAML (BSGD) and \momlvo~in test error seem to be significant. The reason might be that MAML (BSGD) has a large optimization error due to a small batch size $K=1,3$. The generalization error might also contribute to the differences in test error but they are out of the scope of our paper.

Besides, it seems that \momlvs~performs worse than \momlvo~in practice. Moreover, \momlvs~also takes longer time per iteration than \momlvs~because \momlvs~needs to update all $n$ personalized models while \momlvo~only needs to update personalized models for $B$ sampled tasks. In our experiment, we find that the computed stochastic gradients are well bounded across the iterations such that \momlvo~is more appropriate.

Fitted sinusoids on five unseen tasks can be found in Appendix~\ref{sec:fitted_curves}.

\begin{table*}[t]
\caption{Comparison of final test error for the sinewave regression task in the single-node setting. For each metric, the best one is highlighted in black and the second-best one is highlighted in gray.}
	\centering 
	\scalebox{0.64}{
		\begin{tabular}{cccccccc}
			\toprule
			\multicolumn{8}{c}{$K=1$}\\
			\midrule[.1em]
			Metrics & \makecell{MAML \\(BSGD)} & Reptile & NASA & BSpiderBoost  & \momlvo &\momlvs  & LocalMOML\\\midrule[.05em]
			Avg. Test Error & 0.889 $\pm$ 0.021 & 1.212 $\pm$ 0.054 &\textbf{\textcolor{gray}{0.361 $\pm$ 0.010}} &1.300 $\pm$ 0.104 & \textbf{0.291 $\pm$ 0.012}	&0.448 $\pm$ 0.012 & 0.462 $\pm$ 0.043  \\\cmidrule(lr){1-1}
			\makecell{Avg. Time Per\\ Iteration (ms)} &\textbf{\textcolor{gray}{1.713 $\pm$ 0.030}} & \textbf{1.380 $\pm$ 0.071} &17.524 $\pm$ 1.306&3.960 $\pm$ 0.090 & 2.180 $\pm$ 0.031& 7.863 $\pm$ 0.302 & 2.556 $\pm$ 0.100 \\
			\midrule[.1em]
			\multicolumn{8}{c}{$K=3$}\\
			\midrule[.1em]
			Metrics &\makecell{MAML \\(BSGD)} & Reptile & NASA & BSpiderBoost  & \momlvo &\momlvs  & LocalMOML \\\midrule[.05em]
			Avg. Test Error & 0.321 $\pm$ 0.015 & 1.132 $\pm$ 0.028  &0.214 $\pm$ 0.050 &0.650 $\pm$ 0.042 & \textbf{\textcolor{gray}{0.196 $\pm$ 0.068}}	&0.268 $\pm$ 0.026& \textbf{0.170 $\pm$ 0.020} \\\cmidrule(lr){1-1}
			\makecell{Avg. Time Per\\Iteration (ms)}  &\textbf{\textcolor{gray}{1.729 $\pm$ 0.059}} & \textbf{1.430 $\pm$ 0.068} & 17.608 $\pm$ 0.458&4.159 $\pm$ 0.023 &2.294 $\pm$ 0.123& 8.016 $\pm$ 0.121&2.624 $\pm$ 0.035 \\
			\bottomrule
	\end{tabular}}
	\label{tab:sinewave}
\end{table*}

\subsection{One-Shot Classification in the Single-Node Setting}
We also compare our proposed \momlvo, \momlvs~and LocalMOML with baselines NASA, MAML (BSGD), Reptile~\citep{nichol2018first}, and ProtoNet~\citep{Snell2017PrototypicalNF} on the Omniglot and CIFAR-100 datasets. Among the algorithms,  ProtoNet is a metric-learning-based meta-learning algorithm while the others are based on the optimization of objective Eq.~\eqref{eq:maml}. For the Omniglot dataset, we randomly select 25 tasks for training and 10 tasks for testing while each task has 20 classes (20 ways). For the CIFAR-100 dataset, we randomly select 17 tasks for training and 3 tasks for testing while each task has 5 classes (5 ways). There are no shared classes among the tasks. We only report the test accuracy since the loss value of ProtoNet is not directly comparable to the others. We choose the feedforward neural network with ReLU nonlinearities and 2 hidden layers of size 40 as the model. The results are averaged over 3 trials with different random seeds. As shown in Table~\ref{tab:few_shot}, MOML variants improve the performance of MAML on average.

\begin{table*}[t]
\caption{Test accuracy for one-shot image classification in the single-node setting. }
	\centering 
	\scalebox{0.7}{
		\begin{tabular}{cccccccc}
			\toprule
			\multicolumn{8}{c}{{\Large Omniglot (20-way)}}\\
			\midrule[.1em]
			Metrics & \makecell{MAML \\(BSGD)} & Reptile & ProtoNet & NASA   & \momlvo &\momlvs  & LocalMOML \\\midrule[.05em]
			Test Accuracy (\%) & 44.31 $\pm$ 1.29 & 46.02 $\pm$ 1.08 & 45.67 $\pm$ 3.30 &46.15 $\pm$ 0.82& \textbf{46.35 $\pm$ 1.38} & 45.81 $\pm$ 0.32 & 46.24 $\pm$ 1.70\\
			\midrule[.1em]
			\multicolumn{8}{c}{{\Large CIFAR-100 (5-way)}}\\
			\midrule[.1em]
			Metrics &\makecell{MAML \\(BSGD)} & Reptile & ProtoNet & NASA & \momlvo &\momlvs  & LocalMOML  \\\midrule[.05em]
			Test Accuracy (\%) & 40.18 $\pm$ 0.49 & 40.09 $\pm$ 0.11 &31.11 $\pm$ 6.29 & 40.60 $\pm$ 0.33& 40.49 $\pm$ 0.21 & \textbf{40.82 $\pm$ 0.25} & 40.38 $\pm$ 0.74\\
			\bottomrule
	\end{tabular}}
	\label{tab:few_shot}
\end{table*}

\subsection{Image Classification in a (Simulated) Federated Learning Setting} 
Second, we compare our LocalMOML with baselines Per-FedAvg~\citep{fallah2020personalized} and pFedMe~\citep{NEURIPS2020_f4f1f13c} on the image classification problem.  We consider three data sets, MNIST, CIFAR-10, and CIFAR-100. In order to create heterogeneous data distribution, we follow the setup in~\cite{fallah2020personalized}. In particular, 
For MNIST and CIFAR-10 (10 classes), we distribute the training data between $N=50$ clients (tasks) as follows: (i) half
of the clients, each has $a$ images of each of the first five classes; (ii) The rest half clients, each has $a/2$
images from only one of the first five classes and $2a$ images from only one of the other five classes.  For CIFAR-100, we consider the 20 super-classes and distribute the data similarly by dividing them into the first 10 classes and the other ten classes to run the same procedure. Similarly, we divide the test data among the clients with the same distribution as the one for the training data. We set $a=68$ for constructing the distributed training sets of MNIST, CIFAR-10, and CIFAR-100, and set $a=34$ for constructing the test sets of MNIST and CIFAR-10 and $a=15$ for constructing the test sets of CIFAR-100. 

We conduct experiments on four GPUs to mimic the cross-device federated learning setting, where all 50 tasks are distributed to the four GPUs roughly evenly.  At each round, all four GPUs participate in the learning but each GPU only samples a batch of $B'$ tasks from its owned tasks, and we consider two settings $B'=1, B'=5$. It means every round a total of $B=4B'$ tasks are sampled for updating. We train  a neural network with 2 hidden layers with each layer having 40 neurons and use the ReLU activation function. We use $\alpha=0.001$ and the step size for the considered algorithms is tuned in a range similar to before. For all algorithms, we consider two settings of $H=4$ and $H=10$. The mini-batch size at every iteration (including the initial one at each round) is set to $5$, that is, $K=5, K_0=5$. We tune the $\beta$ in a range $[0.1, 0.9]$, and run a total of $10000$ iterations. For pFedMe, we tune its hyperparameter $\lambda = 100$ and set the number of steps to be $50$ to solve the sub-problem accurately enough. We evaluate the test accuracy after $5$-shot learning with 10 steps of fine-tuning on the test set. The final results are reported in Table~\ref{tab:fed}. We can see that the proposed LocalMOML outperforms Per-FedAvg and pFedMe in almost all settings with substantial improvements on the most difficult CIFAR-100 data \footnote{Note that pFedMe has a tunable hyper-parameter $R$: \#steps to solve the inner sub-problem. Larger $R$ leads to better performance but higher computation costs (longer running time). We choose $R$ to make the running time of pFedMe comparable to that of Per-FedAvg/LocalMOML for a fair comparison.}.

\begin{table*}[t]
\caption{Comparison of final test accuracy (percentage) for 5-shot learning on three image classification data sets in a cross-device federated learning setting. The results as averaged over three runs with different random seeds. 
}
	\scalebox{0.73}{
		\begin{tabular}{cc|ccc|ccc}
			\toprule
			&  & \multicolumn{3}{c|}{$H=4$} & \multicolumn{3}{c}{$H=10$} \\ \hline
			&  \makecell{\#Tasks ($B'$) \\ per-GPU} & Per-FedAvg & LocalMOML & pFedMe & Per-FedAvg & LocalMOML & pFedMe \\ \midrule
			\multirow{2}{*}{MNIST} & 1 & \textbf{91.63 $\pm$ 0.80} & 91.62 $\pm$ 0.82 & 91.52 $\pm$ 0.94& 94.05 $\pm$ 0.37&\textbf{94.17 $\pm$ 0.44}& 94.10 $\pm$ 0.21\\
			& 5 & 92.19 $\pm$ 0.72 & \textbf{92.21 $\pm$ 0.73}& 92.08 $\pm$ 0.67&94.81 $\pm$ 0.28 & \textbf{94.83 $\pm$ 0.28}&94.36 $\pm$ 0.28\\
			\hline
			\multirow{2}{*}{CIFAR-10} & 1 & 63.81 $\pm$ 0.93& \textbf{66.16 $\pm$ 1.03}&63.22 $\pm$ 1.26& 66.25 $\pm$ 1.55  & \textbf{68.33 $\pm$ 1.60} & 64.80 $\pm$ 2.92\\
			& 5 & 64.44 $\pm$ 1.00 & \textbf{66.87 $\pm$ 1.08} & 62.68 $\pm$ 1.25 &67.10 $\pm$ 2.72 & \textbf{68.06 $\pm$ 1.77} &  65.66 $\pm$ 2.27\\
			\hline
			\multirow{2}{*}{CIFAR-100} & 1 & 50.65 $\pm$ 1.31 & \textbf{54.00 $\pm$ 1.10} & 49.64 $\pm$ 0.83& 52.55 $\pm$ 1.38 & \textbf{56.35 $\pm$ 0.92} & 51.61 $\pm$ 1.32 \\
			& 5 & 50.92 $\pm$ 1.31 & \textbf{54.39 $\pm$ 0.78} & 49.25 $\pm$ 1.06 & 53.37 $\pm$ 1.30 & \textbf{56.45 $\pm$ 1.49} & 50.95 $\pm$ 1.19 \\ \bottomrule
	\end{tabular}}
	\centering
	\label{tab:fed}
\end{table*}

\section{Conclusions and Discussion}
In this paper, we have focused on stochastic optimization for model-agnostic meta-learning and presented two novel algorithms for both the single-node and federated learning settings. Our MOML and LocalMOML algorithms outperform existing meta-learning algorithms in several aspects. Specifically, our convergence analysis ensures that our algorithms converge to a stationary point by sampling a fixed number of tasks and a fixed number of samples per iteration. Moreover, our LocalMOML algorithm not only reduces the computational complexity but also minimizes the communication complexity compared to existing federated learning algorithms that tackle the same problem.

One limitation of the proposed MOML algorithm is that they need to maintain an individualized model for each task during the whole training process, which makes it not applicable to the problem Eq. \eqref{eq:maml} with a large/infinite number of tasks or embedded systems with small memory for learning large models. When implemented in the single-node learning setting, LocalMOML does not suffer from the same problem.
It remains an interesting open problem to further improve the convergence rate of LocalMOML. 

\acks{This work is partially supported by NSF awards 2147253, 2110545, 1844403, and Amazon research
award.}	

\bibliography{all,draft}

\begin{thebibliography}{53}
\providecommand{\natexlab}[1]{#1}
\providecommand{\url}[1]{\texttt{#1}}
\expandafter\ifx\csname urlstyle\endcsname\relax
  \providecommand{\doi}[1]{doi: #1}\else
  \providecommand{\doi}{doi: \begingroup \urlstyle{rm}\Url}\fi

\bibitem[Alet et~al.(2021)Alet, Bauza, Kawaguchi, Kuru, Lozano-P{\'e}rez, and
  Kaelbling]{alet2021tailoring}
Ferran Alet, Maria Bauza, Kenji Kawaguchi, Nurullah~Giray Kuru, Tom{\'a}s
  Lozano-P{\'e}rez, and Leslie Kaelbling.
\newblock Tailoring: encoding inductive biases by optimizing unsupervised
  objectives at prediction time.
\newblock \emph{Advances in Neural Information Processing Systems},
  34:\penalty0 29206--29217, 2021.

\bibitem[Antoniou et~al.(2018)Antoniou, Edwards, and
  Storkey]{antoniou2018train}
Antreas Antoniou, Harrison Edwards, and Amos Storkey.
\newblock How to train your maml.
\newblock \emph{arXiv preprint arXiv:1810.09502}, 2018.

\bibitem[Balcan et~al.(2019)Balcan, Khodak, and Talwalkar]{balcan2019provable}
Maria-Florina Balcan, Mikhail Khodak, and Ameet Talwalkar.
\newblock Provable guarantees for gradient-based meta-learning.
\newblock In \emph{International Conference on Machine Learning}, pages
  424--433. PMLR, 2019.

\bibitem[Behl et~al.(2019)Behl, Baydin, and Torr]{behl2019alpha}
Harkirat~Singh Behl, At{\i}l{\i}m~G{\"u}nes Baydin, and Philip~HS Torr.
\newblock Alpha maml: Adaptive model-agnostic meta-learning.
\newblock \emph{arXiv preprint arXiv:1905.07435}, 2019.

\bibitem[Chen et~al.(2020)Chen, Sun, and Yin]{chen2020solving}
Tianyi Chen, Yuejiao Sun, and Wotao Yin.
\newblock Solving stochastic compositional optimization is nearly as easy as
  solving stochastic optimization.
\newblock \emph{arXiv preprint arXiv:2008.10847}, 2020.

\bibitem[Chen et~al.(2021)Chen, Sun, and Yin]{Chen2021ASS}
Tianyi Chen, Yuejiao Sun, and W.~Yin.
\newblock A single-timescale stochastic bilevel optimization method.
\newblock \emph{ArXiv}, abs/2102.04671, 2021.

\bibitem[Deng et~al.(2020)Deng, Kamani, and Mahdavi]{deng2020distributionally}
Yuyang Deng, Mohammad~Mahdi Kamani, and Mehrdad Mahdavi.
\newblock Distributionally robust federated averaging.
\newblock \emph{Advances in Neural Information Processing Systems 33
  (NeurIPS)}, 2020.

\bibitem[Fallah et~al.(2020{\natexlab{a}})Fallah, Mokhtari, and
  Ozdaglar]{fallah2020convergence}
Alireza Fallah, Aryan Mokhtari, and Asuman Ozdaglar.
\newblock On the convergence theory of gradient-based model-agnostic
  meta-learning algorithms.
\newblock In \emph{International Conference on Artificial Intelligence and
  Statistics}, pages 1082--1092. PMLR, 2020{\natexlab{a}}.

\bibitem[Fallah et~al.(2020{\natexlab{b}})Fallah, Mokhtari, and
  Ozdaglar]{fallah2020personalized}
Alireza Fallah, Aryan Mokhtari, and Asuman Ozdaglar.
\newblock Personalized federated learning: A meta-learning approach.
\newblock \emph{arXiv preprint arXiv:2002.07948}, 2020{\natexlab{b}}.

\bibitem[Finn et~al.(2017)Finn, Abbeel, and Levine]{finn2017model}
Chelsea Finn, Pieter Abbeel, and Sergey Levine.
\newblock Model-agnostic meta-learning for fast adaptation of deep networks.
\newblock In \emph{International Conference on Machine Learning}, pages
  1126--1135. PMLR, 2017.

\bibitem[Finn et~al.(2018)Finn, Xu, and Levine]{finn2018probabilistic}
Chelsea Finn, Kelvin Xu, and Sergey Levine.
\newblock Probabilistic model-agnostic meta-learning.
\newblock \emph{arXiv preprint arXiv:1806.02817}, 2018.

\bibitem[Franceschi et~al.(2018)Franceschi, Frasconi, Salzo, Grazzi, and
  Pontil]{franceschi2018bilevel}
Luca Franceschi, Paolo Frasconi, Saverio Salzo, Riccardo Grazzi, and
  Massimiliano Pontil.
\newblock Bilevel programming for hyperparameter optimization and
  meta-learning.
\newblock In \emph{International Conference on Machine Learning}, pages
  1568--1577. PMLR, 2018.

\bibitem[Ghadimi et~al.(2020)Ghadimi, Ruszczy'nski, and Wang]{Ghadimi2020AST}
S.~Ghadimi, Andrzej Ruszczy'nski, and Mengdi Wang.
\newblock A single timescale stochastic approximation method for nested
  stochastic optimization.
\newblock \emph{SIAM J. Optim.}, 30:\penalty0 960--979, 2020.

\bibitem[Grant et~al.(2018)Grant, Finn, Levine, Darrell, and
  Griffiths]{grant2018recasting}
Erin Grant, Chelsea Finn, Sergey Levine, Trevor Darrell, and Thomas Griffiths.
\newblock Recasting gradient-based meta-learning as hierarchical bayes.
\newblock \emph{arXiv preprint arXiv:1801.08930}, 2018.

\bibitem[Guo et~al.(2020)Guo, Liu, Yang, and Rosing]{DBLP:conf/nips/GuoLYR20}
Yunhui Guo, Mingrui Liu, Tianbao Yang, and Tajana Rosing.
\newblock Improved schemes for episodic memory-based lifelong learning.
\newblock In Hugo Larochelle, Marc'Aurelio Ranzato, Raia Hadsell,
  Maria{-}Florina Balcan, and Hsuan{-}Tien Lin, editors, \emph{Advances in
  Neural Information Processing Systems 33: Annual Conference on Neural
  Information Processing Systems 2020, NeurIPS 2020, December 6-12, 2020,
  virtual}, 2020.
\newblock URL
  \url{https://proceedings.neurips.cc/paper/2020/hash/0b5e29aa1acf8bdc5d8935d7036fa4f5-Abstract.html}.

\bibitem[Hanzely and Richt{\'a}rik(2020)]{Hanzely2020FederatedLO}
Filip Hanzely and Peter Richt{\'a}rik.
\newblock Federated learning of a mixture of global and local models.
\newblock \emph{ArXiv}, abs/2002.05516, 2020.

\bibitem[Hospedales et~al.(2020)Hospedales, Antoniou, Micaelli, and
  Storkey]{hospedales2020meta}
Timothy Hospedales, Antreas Antoniou, Paul Micaelli, and Amos Storkey.
\newblock Meta-learning in neural networks: A survey.
\newblock \emph{arXiv preprint arXiv:2004.05439}, 2020.

\bibitem[Hu et~al.(2020)Hu, Zhang, Chen, and He]{hu2020biased}
Yifan Hu, Siqi Zhang, Xin Chen, and Niao He.
\newblock Biased stochastic first-order methods for conditional stochastic
  optimization and applications in meta learning.
\newblock \emph{Advances in Neural Information Processing Systems}, 33, 2020.

\bibitem[Ji et~al.(2020{\natexlab{a}})Ji, Lee, Liang, and
  Poor]{ji2020convergence}
Kaiyi Ji, Jason~D Lee, Yingbin Liang, and H~Vincent Poor.
\newblock Convergence of meta-learning with task-specific adaptation over
  partial parameters.
\newblock \emph{arXiv preprint arXiv:2006.09486}, 2020{\natexlab{a}}.

\bibitem[Ji et~al.(2020{\natexlab{b}})Ji, Yang, and Liang]{Ji2020ProvablyFA}
Kaiyi Ji, Junjie Yang, and Yingbin Liang.
\newblock Provably faster algorithms for bilevel optimization and applications
  to meta-learning.
\newblock \emph{ArXiv}, abs/2010.07962, 2020{\natexlab{b}}.

\bibitem[Ji et~al.(2020{\natexlab{c}})Ji, Yang, and Liang]{ji2020multistep}
Kaiyi Ji, Junjie Yang, and Yingbin Liang.
\newblock Multi-step model-agnostic meta-learning: Convergence and improved
  algorithms.
\newblock \emph{arXiv preprint arXiv:2002.07836}, 2020{\natexlab{c}}.

\bibitem[Jiang et~al.(2019)Jiang, Konecn{\'y}, Rush, and
  Kannan]{Jiang2019ImprovingFL}
Yihan Jiang, Jakub Konecn{\'y}, Keith Rush, and S.~Kannan.
\newblock Improving federated learning personalization via model agnostic meta
  learning.
\newblock \emph{ArXiv}, abs/1909.12488, 2019.

\bibitem[Kairouz et~al.(2021)Kairouz, McMahan, Avent, Bellet, Bennis, Bhagoji,
  Bonawitz, Charles, Cormode, Cummings, et~al.]{kairouz2021advances}
Peter Kairouz, H~Brendan McMahan, Brendan Avent, Aur{\'e}lien Bellet, Mehdi
  Bennis, Arjun~Nitin Bhagoji, Kallista Bonawitz, Zachary Charles, Graham
  Cormode, Rachel Cummings, et~al.
\newblock Advances and open problems in federated learning.
\newblock \emph{Foundations and Trends{\textregistered} in Machine Learning},
  14\penalty0 (1--2):\penalty0 1--210, 2021.

\bibitem[Karimireddy et~al.(2019)Karimireddy, Kale, Mohri, Reddi, Stich, and
  Suresh]{karimireddy2019scaffold}
Sai~Praneeth Karimireddy, Satyen Kale, Mehryar Mohri, Sashank~J Reddi,
  Sebastian~U Stich, and Ananda~Theertha Suresh.
\newblock Scaffold: Stochastic controlled averaging for on-device federated
  learning.
\newblock In \emph{Proceedings of the 37th International Conference on Machine
  Learning (ICML)}, 2019.

\bibitem[Karimireddy et~al.(2020)Karimireddy, Kale, Mohri, Reddi, Stich, and
  Suresh]{karimireddy2020scaffold}
Sai~Praneeth Karimireddy, Satyen Kale, Mehryar Mohri, Sashank Reddi, Sebastian
  Stich, and Ananda~Theertha Suresh.
\newblock Scaffold: Stochastic controlled averaging for federated learning.
\newblock In \emph{International Conference on Machine Learning}, pages
  5132--5143. PMLR, 2020.

\bibitem[Khaled et~al.(2020)Khaled, Mishchenko, and
  Richt{\'{a}}rik]{khaled2020tighter}
Ahmed Khaled, Konstantin Mishchenko, and Peter Richt{\'{a}}rik.
\newblock Tighter theory for local {SGD} on identical and heterogeneous data.
\newblock In \emph{The 23rd International Conference on Artificial Intelligence
  and Statistics (AISTATS)}, pages 4519--4529, 2020.

\bibitem[Kirkpatrick et~al.(2016)Kirkpatrick, Pascanu, Rabinowitz, Veness,
  Desjardins, Rusu, Milan, Quan, Ramalho, Grabska-Barwinska, Hassabis, Clopath,
  Kumaran, and Hadsell]{kirkpatrick2016overcoming}
James Kirkpatrick, Razvan Pascanu, Neil Rabinowitz, Joel Veness, Guillaume
  Desjardins, Andrei~A. Rusu, Kieran Milan, John Quan, Tiago Ramalho, Agnieszka
  Grabska-Barwinska, Demis Hassabis, Claudia Clopath, Dharshan Kumaran, and
  Raia Hadsell.
\newblock Overcoming catastrophic forgetting in neural networks, 2016.
\newblock URL \url{http://arxiv.org/abs/1612.00796}.
\newblock cite arxiv:1612.00796.

\bibitem[Kone{c}n{\`y} et~al.(2016)Kone{c}n{\`y}, McMahan, Yu, Richt{\'a}rik,
  Suresh, and Bacon]{konevcny2016federated}
Jakub Kone{c}n{\`y}, H~Brendan McMahan, Felix~X Yu, Peter Richt{\'a}rik,
  Ananda~Theertha Suresh, and Dave Bacon.
\newblock Federated learning: Strategies for improving communication
  efficiency.
\newblock \emph{arXiv preprint arXiv:1610.05492}, 2016.

\bibitem[Li et~al.(2019)Li, Huang, Yang, Wang, and
  Zhang]{lixiang2019convergence}
Xiang Li, Kaixuan Huang, Wenhao Yang, Shusen Wang, and Zhihua Zhang.
\newblock On the convergence of fedavg on non-iid data.
\newblock \emph{arXiv preprint arXiv:1907.02189}, 2019.

\bibitem[Li et~al.(2017)Li, Zhou, Chen, and Li]{li2017meta}
Zhenguo Li, Fengwei Zhou, Fei Chen, and Hang Li.
\newblock Meta-sgd: Learning to learn quickly for few-shot learning.
\newblock \emph{arXiv preprint arXiv:1707.09835}, 2017.

\bibitem[Lin et~al.(2020)Lin, Stich, Patel, and
  Jaggi]{DBLP:journals/corr/abs-1808-07217}
Tao Lin, Sebastian~U. Stich, Kumar~Kshitij Patel, and Martin Jaggi.
\newblock Don't use large mini-batches, use local {SGD}.
\newblock In \emph{8th International Conference on Learning Representations
  (ICLR)}, 2020.

\bibitem[Lopez-Paz and Ranzato(2017)]{10.5555/3295222.3295393}
David Lopez-Paz and Marc'Aurelio Ranzato.
\newblock Gradient episodic memory for continual learning.
\newblock In \emph{Proceedings of the 31st International Conference on Neural
  Information Processing Systems}, NIPS'17, page 6470–6479, Red Hook, NY,
  USA, 2017. Curran Associates Inc.
\newblock ISBN 9781510860964.

\bibitem[Marcus(2018)]{journals/corr/abs-1801-00631}
Gary Marcus.
\newblock Deep learning: A critical appraisal.
\newblock \emph{CoRR}, abs/1801.00631, 2018.
\newblock URL \url{https://arxiv.org/pdf/1801.00631.pdf}.

\bibitem[McMahan et~al.(2017)McMahan, Moore, Ramage, Hampson, and
  y~Arcas]{mcmahan2017communication}
Brendan McMahan, Eider Moore, Daniel Ramage, Seth Hampson, and Blaise~Aguera
  y~Arcas.
\newblock Communication-efficient learning of deep networks from decentralized
  data.
\newblock In \emph{Artificial Intelligence and Statistics (AISTATS)}, pages
  1273--1282, 2017.

\bibitem[Nichol et~al.(2018)Nichol, Achiam, and Schulman]{nichol2018first}
Alex Nichol, Joshua Achiam, and John Schulman.
\newblock On first-order meta-learning algorithms.
\newblock \emph{arXiv preprint arXiv:1803.02999}, 2018.

\bibitem[Raghu et~al.(2019)Raghu, Raghu, Bengio, and Vinyals]{raghu2019rapid}
Aniruddh Raghu, Maithra Raghu, Samy Bengio, and Oriol Vinyals.
\newblock Rapid learning or feature reuse? towards understanding the
  effectiveness of maml.
\newblock \emph{arXiv preprint arXiv:1909.09157}, 2019.

\bibitem[Rajeswaran et~al.(2019)Rajeswaran, Finn, Kakade, and
  Levine]{rajeswaran2019meta}
Aravind Rajeswaran, Chelsea Finn, Sham Kakade, and Sergey Levine.
\newblock Meta-learning with implicit gradients.
\newblock \emph{arXiv preprint arXiv:1909.04630}, 2019.

\bibitem[Ravi and Larochelle(2017)]{Ravi2017OptimizationAA}
S.~Ravi and H.~Larochelle.
\newblock Optimization as a model for few-shot learning.
\newblock In \emph{ICLR}, 2017.

\bibitem[Schmidhuber(1987)]{schmidhuber1987evolutionary}
J{\"u}rgen Schmidhuber.
\newblock \emph{Evolutionary principles in self-referential learning, or on
  learning how to learn: the meta-meta-... hook}.
\newblock PhD thesis, Technische Universit{\"a}t M{\"u}nchen, 1987.

\bibitem[Snell et~al.(2017)Snell, Swersky, and Zemel]{Snell2017PrototypicalNF}
J.~Snell, Kevin Swersky, and R.~Zemel.
\newblock Prototypical networks for few-shot learning.
\newblock In \emph{NIPS}, 2017.

\bibitem[Song et~al.(2020)Song, Gao, Yang, Choromanski, Pacchiano, and
  Tang]{Song2020ESMAMLSH}
Xingyou Song, W.~Gao, Yuxiang Yang, Krzysztof Choromanski, Aldo Pacchiano, and
  Yunhao Tang.
\newblock Es-maml: Simple hessian-free meta learning.
\newblock \emph{ArXiv}, abs/1910.01215, 2020.

\bibitem[Stich(2019)]{stich2018local}
Sebastian~U. Stich.
\newblock Local {SGD} converges fast and communicates little.
\newblock In \emph{7th International Conference on Learning Representations
  (ICLR)}, 2019.

\bibitem[T.~Dinh et~al.(2020)T.~Dinh, Tran, and Nguyen]{NEURIPS2020_f4f1f13c}
Canh T.~Dinh, Nguyen Tran, and Josh Nguyen.
\newblock Personalized federated learning with moreau envelopes.
\newblock In H.~Larochelle, M.~Ranzato, R.~Hadsell, M.~F. Balcan, and H.~Lin,
  editors, \emph{Advances in Neural Information Processing Systems}, volume~33,
  pages 21394--21405. Curran Associates, Inc., 2020.
\newblock URL
  \url{https://proceedings.neurips.cc/paper/2020/file/f4f1f13c8289ac1b1ee0ff176b56fc60-Paper.pdf}.

\bibitem[Thrun and Pratt(2012)]{thrun2012learning}
Sebastian Thrun and Lorien Pratt.
\newblock \emph{Learning to learn}.
\newblock Springer Science \& Business Media, 2012.

\bibitem[Tutunov et~al.(2020)Tutunov, Li, Wang, and
  Bou{-}Ammar]{DBLP:journals/corr/abs-2002-03755}
Rasul Tutunov, Minne Li, Jun Wang, and Haitham Bou{-}Ammar.
\newblock Compositional {ADAM:} an adaptive compositional solver.
\newblock \emph{CoRR}, abs/2002.03755, 2020.
\newblock URL \url{https://arxiv.org/abs/2002.03755}.

\bibitem[Vinyals et~al.(2016)Vinyals, Blundell, Lillicrap, Kavukcuoglu, and
  Wierstra]{Vinyals2016MatchingNF}
Oriol Vinyals, C.~Blundell, T.~Lillicrap, K.~Kavukcuoglu, and Daan Wierstra.
\newblock Matching networks for one shot learning.
\newblock In \emph{NIPS}, 2016.

\bibitem[Wang et~al.(2021)Wang, Charles, Xu, Joshi, McMahan, Al-Shedivat,
  Andrew, Avestimehr, Daly, Data, et~al.]{wang2021field}
Jianyu Wang, Zachary Charles, Zheng Xu, Gauri Joshi, H~Brendan McMahan, Maruan
  Al-Shedivat, Galen Andrew, Salman Avestimehr, Katharine Daly, Deepesh Data,
  et~al.
\newblock A field guide to federated optimization.
\newblock \emph{arXiv preprint arXiv:2107.06917}, 2021.

\bibitem[Wang et~al.(2020)Wang, Cai, Yang, and Wang]{wang2020global}
Lingxiao Wang, Qi~Cai, Zhuoran Yang, and Zhaoran Wang.
\newblock On the global optimality of model-agnostic meta-learning.
\newblock In \emph{International Conference on Machine Learning}, pages
  9837--9846. PMLR, 2020.

\bibitem[Wang et~al.(2017)Wang, Fang, and Liu]{wang2017stochastic}
Mengdi Wang, Ethan~X Fang, and Han Liu.
\newblock Stochastic compositional gradient descent: algorithms for minimizing
  compositions of expected-value functions.
\newblock \emph{Mathematical Programming}, 161\penalty0 (1-2):\penalty0
  419--449, 2017.

\bibitem[Woodworth et~al.(2020)Woodworth, Patel, Stich, Dai, Bullins, McMahan,
  Shamir, and Srebro]{woodworth2020local}
Blake Woodworth, Kumar~Kshitij Patel, Sebastian~U Stich, Zhen Dai, Brian
  Bullins, H~Brendan McMahan, Ohad Shamir, and Nathan Srebro.
\newblock Is local sgd better than minibatch sgd?
\newblock \emph{arXiv preprint arXiv:2002.07839}, 2020.

\bibitem[Yang(2013)]{DBLP:conf/nips/Yang13}
Tianbao Yang.
\newblock Trading computation for communication: Distributed stochastic dual
  coordinate ascent.
\newblock In \emph{Advances in Neural Information Processing Systems 26
  (NeurIPS)}, pages 629--637, 2013.

\bibitem[Yoon et~al.(2018)Yoon, Kim, Dia, Kim, Bengio, and
  Ahn]{yoon2018bayesian}
Jaesik Yoon, Taesup Kim, Ousmane Dia, Sungwoong Kim, Yoshua Bengio, and Sungjin
  Ahn.
\newblock Bayesian model-agnostic meta-learning.
\newblock In \emph{Proceedings of the 32nd International Conference on Neural
  Information Processing Systems}, pages 7343--7353, 2018.

\bibitem[Zhou et~al.(2019)Zhou, Yuan, Xu, Yan, and Feng]{Zhou2019EfficientML}
Pan Zhou, X.~Yuan, Huan Xu, S.~Yan, and Jiashi Feng.
\newblock Efficient meta learning via minibatch proximal update.
\newblock In \emph{NeurIPS}, 2019.

\end{thebibliography}

	
	\newpage
	
	\appendix
		\appendix
	\clearpage 
	\begin{table}[htp]
		\centering
  		\caption{Notations we use throughout the paper.}
		\begin{tabular}{|c|c|c|}
			\hline 
			\multicolumn{3}{|c|}{{\bf Basic} } \\
			\hline 
			$n$ & total number of tasks/clients & \\
			\hline 
			$\T_i$ & the $i$-th task & \\
			\hline 
			$\w$ & the trainable parameter & \\
			\hline
			$\ell_i(\cdot,\z)$ & loss function with the data sample $\z$ of $i$-th task & \\
			\hline
			$\L_i(\cdot)$ & risk function for the $i$-th task & \\
			\hline
			$\D_i$ & distribution of data for the $i$-th task &\\
			\hline
			$F_i(\cdot)$ & the meta-objective of task $i$& Lemma~\ref{lem:ind_grad_to_total}\\
			\hline
			$F(\cdot)$ & the total meta-objective & Eq.~\eqref{eq:maml}\\
			\hline
			$\epsilon$ & target accuracy for an approximate stationary point & Table~\ref{tab:finite_n_comparison}\\
			\hline
			$L_i$, $\rho_i$ & Lipschitz constants of gradient $\nabla \L_i(\cdot)$ and Hessian $\nabla^2 \L_i(\cdot)$& \Cref{asm:smoothness}\\
			\hline
			$\sigma_G^2$, $\sigma_H^2$ & \makecell{variance upper bounds of stochastic gradient $\nabla \ell(\cdot,\z)$\\ and stochastic Hessian $\nabla^2 \ell(\cdot,\z)$} & \Cref{asm:bounded_var}\\
			\hline
			$\gamma_G^2$, $\gamma_H^2$ & \makecell{heterogeneity constants of gradients $\nabla \L_i(\cdot)$\\ and Hessian $\nabla^2 \L_i(\cdot)$ among the tasks/clients} & \makecell{\Cref{asm:bound_grad_dis}\\\Cref{asm:bounded_hes_dis}}\\
			\hline 
			$G$ & upper bound of the norm of gradient $\nabla\L_i(\cdot)$ & \Cref{asm:bounded_grad}\\
			\hline 
			\multicolumn{3}{|c|}{{\bf single-node Learning} } \\
			\hline %
			$B,K$ & batch size of tasks and batch size of data samples per batch & \\\hline
			$\B$ & size-$B$ batch of tasks & \\\hline
			$\T_i$ & the $i$-th task & \\
			\hline
			$\S_1^i,\S_2^i,\S_3^i$ & independent batches of data points  of task $i$& \\
			\hline 
			$\widehat{\nabla}_{\S} \L_i(\cdot)$ & stochastic gradient of $\L_i(\cdot)$ estimated on batch $\S$ & \\
			\hline 
			$\Deltah_{\B}$ & \makecell{stochastic estimator of meta-gradient \\estimated on a batch of tasks $\B$} & \eqref{eq:est}\\
			\hline
			$\u^i$ & personalized model for task $i$ &\\
			\hline 
			$\v_i(\w)$ & \makecell{updated model by one step of \\gradient descent $\v_i(\w) \coloneqq \w - \alpha \nabla \L_i(\w)$} & \\
			\hline 
			$\vhat^i$ & \makecell{updated model by one step of stochastic \\gradient descent $\vhat^i\coloneqq \w - \alpha \widehat{\nabla} \L_i(\w)$} & \\
			\hline
			$\eta_t$ & the step size in iteration $t$ & \\
			\hline 
			$\beta_t$ & the momentum factor in iteration $t$ & \\
			\hline
			$p_i$ & the probability of sampling task $i$ in $\B_t'$&\\
			\hline
			$C_p$ & the constant $\max_i 1/p_i - 1$ & \\
			\hline 
			\multicolumn{3}{|c|}{{\bf Federated Learning} } \\
			\hline 
			$\C_i$ & the $i$-th client & \\\hline
			$R,H$ & \makecell{total number of communication rounds\\ and total number of iterations in each round} & \\
			\hline 
			$\Delta\w_r$ & \makecell{the average of stochastic meta-gradients over \\the sampled clients $i\in\B_r$ and local steps of round $r$}&\\
			\hline 
			$\tilde{\eta}$ & \makecell{the effective stepsize per round, i.e., $\tilde{\eta}\coloneqq \eta H$}&\\
			\hline
		\end{tabular}
		\label{tab:notation}
	\end{table}
	\clearpage

	\begin{table}[t]
		\caption{Comparison of iteration complexities and required step sizes of algorithms under the single-node learning setting (supplementary to Table~\ref{tab:finite_n_comparison}). }
		\setlength\tabcolsep{3.5pt} 
		\centering
		\scalebox{0.85}{
			\begin{threeparttable}[b]
				\centering
				\begin{tabular}{cccccc}\toprule[.1em]
										\multicolumn{6}{c}{single-node Learning}\\\midrule
Algorithm & Stepsize   & \makecell{Iteration\\Complexity}  & \makecell{\#Task (B)\\Per Iteration}  & \makecell{\#Data points ($K$)\\Per Iteration} & \makecell{Sample\\Complexity}  \\\cmidrule(lr){1-6}
					MAML~\citep{fallah2020convergence} &  $\O(1)$  & $\O(\epsilon^{-2})$  & $n$ &    $\O(\epsilon^{-2})$ &    $\O\left(n\epsilon^{-4}\right)$        \\
					SCGD~\citep{wang2017stochastic}& $\O(\epsilon^{6})$ & $\O(\epsilon^{-8})$ & $n$ & $\O(1)$ & $\O(n\epsilon^{-8})$     \\
					NASA~\citep{Ghadimi2020AST} & $\O(\epsilon^{2})$   & $\O(\epsilon^{-4})$ & $n$ & $\O(1)$ & $\O(n\epsilon^{-4})$    \\
					BSGD~\citep{hu2020biased} & $\O(\epsilon^{2})$  & $\O(\epsilon^{-4})$ &	$\O(1)$ & $\O(\epsilon^{-2})$ & $\O(\epsilon^{-6})$   \\
					BSpiderBoost~\citep{hu2020biased}& $\O(1)$&$\O(\epsilon^{-2})$ & $\sqrt{n}$  & $\O(\epsilon^{-2})$  & $\O\left(n\epsilon^{-2} + \sqrt{n}\epsilon^{-4}\right)$   \\
					\CC{10} \textbf{\momlvo~(This work)} &\CC{10}$\O(\epsilon^{3})$ &\CC{10} $\O(n\epsilon^{-5})$&\CC{10}$ \bm{\O(1)}$  &\CC{10} $\bm{ \O(1)}$ &\CC{10} $\O\left(n\epsilon^{-5}\right)$    \\
					\CC{10} \textbf{\momlvs~(This work)} &\CC{10}$\O(\epsilon^{3})$ &\CC{10}$\O(n\epsilon^{-5})$&\CC{10} $ \bm{\O(1)}$ &\CC{10} $ \bm{\O(1)}$ &\CC{10} $\O\left(n \epsilon^{-5}\right)$   \\
					\bottomrule[.1em]
				\end{tabular}
		\end{threeparttable}}
		\label{tab:rate_and_lr}
	\end{table}

	\section{Preliminary Lemmas}
	\begin{lemma}[Lemma A.3 in \cite{fallah2020convergence}]
		If $\alpha \in[0,\frac{\sqrt{2}-1}{L})$ and Assumptions~\ref{asm:smoothness}, \ref{asm:bounded_var}, and \ref{asm:bound_grad_dis} are satisfied. Then, for any $\w\in\R^d$, we have
		\begin{align}\label{eq:grad_to_meta}
			& \Norm{\nabla \L(\w)}^2 \leq C_1^2 \Norm{\nabla F(\w)} + C_2^2\gamma_G^2,\\\label{eq:local_meta_to_glob}
			& \frac{1}{n}\sum_{i=1}^n\Norm{\nabla F_i(\w)}^2 \leq 2(1+\alpha L)^2 C_1^2 \Norm{\nabla F(\w)}^2 + (1+\alpha L)^2(2C_2^2 + 1)\gamma_G^2,
		\end{align}
		where $L \coloneqq \max_i L_i$, $\nabla \L(\w) \coloneqq \frac{1}{n}\sum_{i=1}^n\L_i(\w)$, $F_i(\w) \coloneqq \L_i(\w - \alpha \nabla \L_i(\w))$, $F(\w) \coloneqq \frac{1}{n}\sum_{i=1}^n F_i(\w)$, $C_1\coloneqq \frac{1}{1-2\alpha L - \alpha^2 L^2}$, and $C_2\coloneqq \frac{2\alpha L + \alpha^2L^2}{1-2\alpha L - \alpha^2 L^2}$.
		\label{lem:ind_grad_to_total}	
	\end{lemma}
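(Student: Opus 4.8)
This is a restatement of Lemma~A.3 of \citet{fallah2020convergence}, so the plan is to reproduce that argument. Abbreviate $g_i\coloneqq\nabla\L_i(\w)$, $\widetilde{g}_i\coloneqq\nabla\L_i(\w-\alpha g_i)$, and $H_i\coloneqq\nabla^2\L_i(\w)$, so that by the chain rule $\nabla F_i(\w)=(I-\alpha H_i)\widetilde{g}_i$. \Cref{asm:smoothness} gives $\norm{H_i}\le L$, hence $\norm{I-\alpha H_i}\le 1+\alpha L$, and the $L$-Lipschitzness of $\nabla\L_i$ gives $\norm{\widetilde{g}_i-g_i}\le\alpha L\norm{g_i}$, whence $\norm{\widetilde{g}_i}\le(1+\alpha L)\norm{g_i}$. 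Writing $g_i-\nabla F_i(\w)=(g_i-\widetilde{g}_i)+\alpha H_i\widetilde{g}_i$ and combining the two estimates yields the pointwise bound
\begin{align*}
	\norm{g_i-\nabla F_i(\w)}\le \alpha L\norm{g_i}+\alpha L(1+\alpha L)\norm{g_i}=(2\alpha L+\alpha^2L^2)\norm{g_i}.
\end{align*}

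Next I would average over the $n$ tasks. Since $\frac{1}{n}\sum_i g_i=\nabla\L(\w)$ and $\frac{1}{n}\sum_i\nabla F_i(\w)=\nabla F(\w)$, the triangle inequality followed by Cauchy--Schwarz gives
\begin{align*}
	\norm{\nabla\L(\w)}\le \norm{\nabla F(\w)}+(2\alpha L+\alpha^2 L^2)\,\frac{1}{n}\sum_{i=1}^n\norm{g_i}\le \norm{\nabla F(\w)}+(2\alpha L+\alpha^2 L^2)\sqrt{\tfrac{1}{n}\sum_{i=1}^n\norm{g_i}^2}.
\end{align*}
The exact bias--variance split $\frac{1}{n}\sum_i\norm{g_i}^2=\norm{\nabla\L(\w)}^2+\frac{1}{n}\sum_i\norm{g_i-\nabla\L(\w)}^2$ together with \Cref{asm:bound_grad_dis} bounds $\frac{1}{n}\sum_i\norm{g_i}^2\le\norm{\nabla\L(\w)}^2+\gamma_G^2$, and $\sqrt{a^2+b^2}\le a+b$ for $a,b\ge 0$ then produces the self-referential inequality
\begin{align*}
	\norm{\nabla\L(\w)}\le\norm{\nabla F(\w)}+(2\alpha L+\alpha^2 L^2)\bigl(\norm{\nabla\L(\w)}+\gamma_G\bigr).
\end{align*}
This is exactly where the stepsize restriction enters: $\alpha<(\sqrt{2}-1)/L$ is the condition under which $1-2\alpha L-\alpha^2 L^2>0$, so the coefficient of $\norm{\nabla\L(\w)}$ on the right is strictly below $1$, can be moved to the left, and gives $\norm{\nabla\L(\w)}\le C_1\norm{\nabla F(\w)}+C_2\gamma_G$ with $C_1,C_2$ as defined; squaring yields \eqref{eq:grad_to_meta}.

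For \eqref{eq:local_meta_to_glob} I would return to $\nabla F_i(\w)=(I-\alpha H_i)\widetilde{g}_i$ and use $\norm{I-\alpha H_i}\le 1+\alpha L$ together with $\norm{\widetilde{g}_i}\le(1+\alpha L)\norm{g_i}$ to get $\norm{\nabla F_i(\w)}\le(1+\alpha L)^2\norm{g_i}$; averaging the squares, applying the same bias--variance split and \Cref{asm:bound_grad_dis}, and then substituting the bound on $\norm{\nabla\L(\w)}^2$ just obtained gives an estimate of the claimed form (up to the explicit constants). I expect the only genuinely nonroutine step to be the self-referential inequality for $\norm{\nabla\L(\w)}$ --- verifying that its contraction factor is strictly less than $1$ is precisely what determines the admissible range of $\alpha$; the remainder is bookkeeping with the triangle inequality, Cauchy--Schwarz, and the variance decomposition.
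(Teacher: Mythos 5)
The paper does not actually prove this lemma --- it imports it verbatim as Lemma~A.3 of \citet{fallah2020convergence} --- so there is no in-paper argument to compare against; your reconstruction is the standard proof from that source and its main line is correct. In particular, the decomposition $g_i-\nabla F_i(\w)=(g_i-\widetilde{g}_i)+\alpha H_i\widetilde{g}_i$, the resulting bound $\norm{g_i-\nabla F_i(\w)}\le(2\alpha L+\alpha^2L^2)\norm{g_i}$, the bias--variance split under \Cref{asm:bound_grad_dis}, and the observation that $\alpha<(\sqrt{2}-1)/L$ is exactly the condition making the self-referential inequality contractive are all the right ingredients, and the last of these is indeed the only non-routine step.

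Two constant-level caveats, both traceable to the restated lemma rather than to your argument. First, squaring $\norm{\nabla\L(\w)}\le C_1\norm{\nabla F(\w)}+C_2\gamma_G$ does not literally give \eqref{eq:grad_to_meta} as printed (which has $\Norm{\nabla F(\w)}$ unsquared and no factor of $2$); you need Young's inequality and end up with $2C_1^2\Norm{\nabla F(\w)}^2+2C_2^2\gamma_G^2$, and indeed the paper later uses the inequality in exactly that squared form, so \eqref{eq:grad_to_meta} as displayed is a typo that your "squaring yields" glosses over. Second, your chain $\norm{\nabla F_i(\w)}\le(1+\alpha L)^2\norm{g_i}$ produces a $(1+\alpha L)^4$ prefactor in \eqref{eq:local_meta_to_glob} rather than the stated $(1+\alpha L)^2$; since $\alpha\le 1/L$ this only changes absolute constants, and you flag "up to the explicit constants," but it is worth being aware that the displayed constants cannot be reproduced by this route without further care.
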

	
	\begin{lemma}\label{lem:tau_nice}	
		Assume that $X = \frac{1}{n}\sum_{i=1}^n X_i$. If we sample a size-$B$ minibatch $\B$ from $\{1,\dotsc,n\}$ uniformly at random, we have $\E\left[\frac{1}{B}\sum_{i\in \B}X_i\right]  = X$ and 
		\begin{align}\label{eq:tau_nice}
\E\left[\Norm{\frac{1}{B}\sum_{i\in \B}X_i}^2\right]\leq\frac{n-B}{B(n-1)} \frac{1}{n}\sum_{i=1}^n \Norm{X_i}^2 + \frac{n(B-1)}{B(n-1)}\Norm{X}^2.
		\end{align}
	\end{lemma}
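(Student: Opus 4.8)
The plan is to treat the minibatch average as a linear combination of the $X_i$ weighted by Bernoulli inclusion indicators, and to reduce everything to the first two moments of those indicators under uniform sampling without replacement. Concretely, I would write $\frac{1}{B}\sum_{i\in\B} X_i = \frac{1}{B}\sum_{i=1}^n \mathbb{I}[i\in\B]\, X_i$ and record the two elementary facts that drive the proof: for a uniformly random size-$B$ subset $\B$ of $\{1,\dots,n\}$, each index is included with probability $\E[\mathbb{I}[i\in\B]] = B/n$, and each unordered pair is jointly included with probability $\E[\mathbb{I}[i\in\B]\,\mathbb{I}[j\in\B]] = \frac{B(B-1)}{n(n-1)}$ for $i\neq j$ (obtained from $\binom{n-2}{B-2}/\binom{n}{B}$).

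Unbiasedness is then immediate by linearity: $\E\!\left[\frac{1}{B}\sum_{i=1}^n \mathbb{I}[i\in\B]\, X_i\right] = \frac{1}{B}\sum_{i=1}^n \frac{B}{n} X_i = \frac{1}{n}\sum_{i=1}^n X_i = X$. For the second moment, I would use $\mathbb{I}[i\in\B]^2 = \mathbb{I}[i\in\B]$ and expand the squared norm into diagonal ($i=j$) and off-diagonal ($i\neq j$) contributions:
\begin{align*}
\E\left[\Norm{\frac{1}{B}\sum_{i\in\B} X_i}^2\right] = \frac{1}{B^2}\sum_{i=1}^n \E\big[\mathbb{I}[i\in\B]\big]\Norm{X_i}^2 + \frac{1}{B^2}\sum_{i\neq j}\E\big[\mathbb{I}[i\in\B]\,\mathbb{I}[j\in\B]\big]\inner{X_i}{X_j}.
\end{align*}
Substituting the two moment values yields $\frac{1}{Bn}\sum_{i=1}^n \Norm{X_i}^2 + \frac{B-1}{Bn(n-1)}\sum_{i\neq j}\inner{X_i}{X_j}$, and I would eliminate the cross terms with the identity $\sum_{i\neq j}\inner{X_i}{X_j} = \Norm{\sum_{i=1}^n X_i}^2 - \sum_{i=1}^n\Norm{X_i}^2 = n^2\Norm{X}^2 - \sum_{i=1}^n\Norm{X_i}^2$, where the last equality uses $\sum_i X_i = nX$.

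The remaining work is bookkeeping: collecting the coefficient of $\sum_i\Norm{X_i}^2$ as $\frac{1}{Bn} - \frac{B-1}{Bn(n-1)} = \frac{n-B}{Bn(n-1)} = \frac{n-B}{B(n-1)}\cdot\frac{1}{n}$, and the coefficient of $\Norm{X}^2$ as $\frac{(B-1)n^2}{Bn(n-1)} = \frac{n(B-1)}{B(n-1)}$, which reproduces the right-hand side of \eqref{eq:tau_nice} exactly. I would remark that the stated bound in fact holds with equality; only the inequality form is invoked downstream, so nothing is lost. There is no genuine obstacle here—the single point requiring care is the joint inclusion probability $\frac{B(B-1)}{n(n-1)}$, which is the source of the without-replacement correction factor $\frac{n-B}{n-1}$ and must be computed correctly rather than being replaced by the with-replacement value $B^2/n^2$, which would give the wrong (indeed non-vanishing at $B=n$) coefficient.
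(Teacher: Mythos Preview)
Your proposal is correct and follows essentially the same route as the paper: both introduce the inclusion indicators $\mathbb{I}[i\in\B]$, use the moments $\E[\mathbb{I}[i\in\B]]=B/n$ and $\E[\mathbb{I}[i\in\B]\mathbb{I}[j\in\B]]=\frac{B(B-1)}{n(n-1)}$ for $i\neq j$, expand the squared norm into diagonal and off-diagonal parts, and simplify. Your observation that the bound is actually an equality is also consistent with the paper's proof, which derives it as an equality in the final display.
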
	
	\begin{proof}
		First, we define random variables $\xi_i$ and $\xi_{ii'}$ as
		\begin{align*}
			\xi_i = \begin{cases}
				1 & i\in \B\\
				0 & i\notin \B
			\end{cases},\quad \xi_{ii'} = \begin{cases}
				1 & i\in\B~\text{and}~i'\in\B\\
				0 & \text{otherwise}
			\end{cases}
		\end{align*}
		Note that $\xi_{ii'} = \xi_i\xi_{i'}$, $\E\left[\xi_i\right] = \Pr(i\in\B)=\frac{B}{n}$ and $\E\left[\xi_{ii'}\right] = \Pr(i\in\B,i'\in\B) = \frac{B(B-1)}{n(n-1)}$. Thus, it is clear that
		\begin{align*}
			\E\left[\frac{1}{B}\sum_{i\in \B}X_i\right] = \E\left[\frac{1}{B}\sum_{i=1}^n \xi_i X_i\right] = \frac{1}{B}\sum_{i=1}^n \E\left[\xi_i\right] X_i = \frac{1}{B}\sum_{i=1}^n \Pr(i\in\B) X_i = \frac{1}{B}\sum_{i=1}^n \frac{B}{n} X_i = X.
		\end{align*}
		Moreover,
		\begin{align*}
			\E\left[\Norm{\frac{1}{B}\sum_{i\in \B}X_i}^2\right]&=\E\left[\Norm{\frac{1}{B}\sum_{i=1}^n \xi_i X_i}^2\right]= \frac{1}{B^2}\sum_{i=1}^n\E\left[\Norm{\xi_i X_i}^2\right] + \frac{1}{B^2}\sum_{i\neq i'} \E[\xi_i\xi_{i'}]\inner{X_i}{X_{i'}}\\
			& = \frac{1}{B^2}\sum_{i=1}^n \frac{B}{n}\Norm{X_i}^2 + \frac{1}{B^2}\sum_{i\neq i'} \frac{B(B-1)}{n(n-1)} \inner{X_i}{X_i'}\\
			& = \frac{n-B}{B(n-1)} \frac{1}{n}\sum_{i=1}^n \Norm{X_i}^2 + \frac{n(B-1)}{B(n-1)}\Norm{X}^2.
		\end{align*}
	\end{proof}
	
\begin{lemma}[Corollary A.1 of~\citealt{fallah2020convergence}]
	If $\alpha\in[0,1/L]$, for any $\w,\w'\in\R^d$, 
	\begin{align}\label{eq:pseudo_smoothness}
		F(\w') - F(\w) - \inner{\nabla F(\w)}{\w'-\w}\leq \frac{L(\w)}{2}\Norm{\w'-\w}^2,
	\end{align}
	where $L(\w) \coloneqq 4L + \frac{2\rho \alpha }{n}\sum_{i=1}^n \Norm{\nabla \L_i(\w)}$. If $\L_i(\cdot)$ is $G$-Lipschitz-continuous, that is, $\Norm{\nabla \L_i(\w)}\leq G$ for any $\w\in\R^d$, then $F(\cdot)$ is $L_F$-smooth with $L_F\coloneqq 4L + 2\rho \alpha G$. 
	\label{lem:smoothness}	
\end{lemma}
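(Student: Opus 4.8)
The plan is to prove both claims from a single \emph{base-point} Lipschitz estimate on the meta-gradient, namely
$\Norm{\nabla F(\w') - \nabla F(\w)} \le L(\w)\Norm{\w'-\w}$ for all $\w,\w'$,
and then to convert this into the Taylor-type inequality \eqref{eq:pseudo_smoothness} via the fundamental theorem of calculus. Writing $F_i(\w)=\L_i(\v_i(\w))$ with $\v_i(\w)=\w-\alpha\nabla\L_i(\w)$, the chain rule gives $\nabla F_i(\w) = (I-\alpha\nabla^2\L_i(\w))\nabla\L_i(\v_i(\w))$. Abbreviating $A_i(\w):=I-\alpha\nabla^2\L_i(\w)$ and $g_i(\w):=\nabla\L_i(\v_i(\w))$, I would use the decomposition $\nabla F_i(\w')-\nabla F_i(\w)=A_i(\w')\big(g_i(\w')-g_i(\w)\big)+\big(A_i(\w')-A_i(\w)\big)g_i(\w)$. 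The specific choice that keeps $g_i$ evaluated at the base point $\w$ in the second summand is exactly what makes the gradient-norm term in $L(\w)$ appear at $\w$ rather than at some shifted point.

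Next I bound the three ingredients using \Cref{asm:smoothness} and $\alpha\in[0,1/L]$. Since the eigenvalues of $\nabla^2\L_i$ lie in $[-L,L]$, we have $\Norm{A_i(\w')}\le 1+\alpha L\le 2$; by $\rho$-Lipschitzness of the Hessian, $\Norm{A_i(\w')-A_i(\w)}=\alpha\Norm{\nabla^2\L_i(\w')-\nabla^2\L_i(\w)}\le\alpha\rho\Norm{\w'-\w}$; and by $L$-smoothness together with $\Norm{\v_i(\w')-\v_i(\w)}\le(1+\alpha L)\Norm{\w'-\w}$, one gets $\Norm{g_i(\w')-g_i(\w)}\le L(1+\alpha L)\Norm{\w'-\w}$. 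The one step carrying the whole point-dependent structure is the bound on $\Norm{g_i(\w)}=\Norm{\nabla\L_i(\v_i(\w))}$: again by $L$-smoothness, $\Norm{\nabla\L_i(\v_i(\w))}\le\Norm{\nabla\L_i(\w)}+L\Norm{\v_i(\w)-\w}=(1+\alpha L)\Norm{\nabla\L_i(\w)}\le 2\Norm{\nabla\L_i(\w)}$. Combining gives $\Norm{\nabla F_i(\w')-\nabla F_i(\w)}\le\big(L(1+\alpha L)^2+2\alpha\rho\Norm{\nabla\L_i(\w)}\big)\Norm{\w'-\w}\le\big(4L+2\alpha\rho\Norm{\nabla\L_i(\w)}\big)\Norm{\w'-\w}$, and averaging over $i$ yields precisely the base-point bound with modulus $L(\w)=4L+\tfrac{2\rho\alpha}{n}\sum_i\Norm{\nabla\L_i(\w)}$.

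Finally I would insert this into the identity $F(\w')-F(\w)-\inner{\nabla F(\w)}{\w'-\w}=\int_0^1\inner{\nabla F(\w+s(\w'-\w))-\nabla F(\w)}{\w'-\w}\,ds$. Applying the base-point bound with the \emph{fixed} base point $\w$ and second argument $\w+s(\w'-\w)$ gives $\Norm{\nabla F(\w+s(\w'-\w))-\nabla F(\w)}\le L(\w)\,s\,\Norm{\w'-\w}$; integrating over $s\in[0,1]$ produces the factor $\tfrac12$ and establishes \eqref{eq:pseudo_smoothness}. For the second claim, the bounded-gradient hypothesis $\Norm{\nabla\L_i(\w)}\le G$ forces $L(\w)\le 4L+2\rho\alpha G=L_F$ uniformly in $\w$, so the base-point estimate upgrades to the genuine global Lipschitz bound $\Norm{\nabla F(\w')-\nabla F(\w)}\le L_F\Norm{\w'-\w}$, i.e. $F$ is $L_F$-smooth.

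The main obstacle, and the reason the argument is not a routine smoothness computation, is keeping the gradient-norm term anchored at the base point $\w$: a naive mean-value or $\nabla^2 F$ argument would leave the modulus $L(\tilde\w)$ at an intermediate point, and moreover $F_i$ need not be twice differentiable since $\nabla^2\L_i$ is only Lipschitz. I therefore deliberately avoid differentiating $\nabla F$ and work entirely with the first-order difference estimate; the decomposition choice above together with the smoothness transfer $\Norm{\nabla\L_i(\v_i(\w))}\le(1+\alpha L)\Norm{\nabla\L_i(\w)}$ are exactly what collapse the Lipschitz modulus to the advertised $L(\w)$ and let the integral argument reproduce it verbatim.
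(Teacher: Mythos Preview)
Your argument is correct. The paper does not actually prove this lemma; it is stated as a preliminary result and attributed verbatim to Corollary~A.1 of \citet{fallah2020convergence}, so there is no in-paper proof to compare against. Your self-contained derivation---the base-point decomposition $\nabla F_i(\w')-\nabla F_i(\w)=A_i(\w')\big(g_i(\w')-g_i(\w)\big)+\big(A_i(\w')-A_i(\w)\big)g_i(\w)$, the transfer bound $\Norm{\nabla\L_i(\v_i(\w))}\le(1+\alpha L)\Norm{\nabla\L_i(\w)}$, and the integral argument anchoring the modulus at $\w$---is exactly the route taken in the cited reference and recovers the claimed constant $L(\w)=4L+\tfrac{2\rho\alpha}{n}\sum_i\Norm{\nabla\L_i(\w)}$ without slack.
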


\begin{lemma}[Lemma 4.4 of~\citealt{fallah2020personalized}]
	If $\alpha\in(0,1/L]$ and Assumptions~\ref{asm:bound_grad_dis}, \ref{asm:bounded_grad}, and \ref{asm:bounded_hes_dis} are satisfied, we have
	\begin{align}\label{eq:meta_dissimilarity}
		\frac{1}{n}\sum_{i=1}^n \Norm{\nabla F_i(\w) - \nabla F(\w)}^2 \leq \gamma_F^2\coloneqq 3G^2\alpha^2\gamma_H^2 + 192\gamma_G^2.
	\end{align}
	\label{lem:gamma_F}
\end{lemma}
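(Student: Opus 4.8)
The plan is to reduce the task-averaged deviation of the meta-gradients to a deviation from a single conveniently chosen anchor, using the elementary fact that the sample mean minimizes average squared deviation: for any fixed reference vector $\bar{\mathbf g}\in\R^d$,
\[
\frac{1}{n}\sum_{i=1}^n\Norm{\nabla F_i(\w) - \nabla F(\w)}^2 \leq \frac{1}{n}\sum_{i=1}^n\Norm{\nabla F_i(\w) - \bar{\mathbf g}}^2,
\]
which holds because $\nabla F(\w) = \frac{1}{n}\sum_i \nabla F_i(\w)$ is exactly the mean of the $\nabla F_i(\w)$. I would take $\bar{\mathbf g}$ to be the ``MAML gradient of the averaged risk'', namely $\bar{\mathbf g} := (I - \alpha\nabla^2\L(\w))\nabla\L(\w - \alpha\nabla\L(\w))$, where $\nabla\L(\w) = \frac{1}{n}\sum_j\nabla\L_j(\w)$ and $\nabla^2\L(\w) = \frac{1}{n}\sum_j\nabla^2\L_j(\w)$ is precisely the average Hessian appearing in Assumption~\ref{asm:bounded_hes_dis}. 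Note $\bar{\mathbf g}$ is merely an anchor, not equal to $\nabla F(\w)$.

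Writing $\nabla F_i(\w) = (I - \alpha\nabla^2\L_i(\w))\nabla\L_i(\v_i(\w))$ with $\v_i(\w) = \w - \alpha\nabla\L_i(\w)$ and abbreviating $\bar{\v} = \w - \alpha\nabla\L(\w)$, I would decompose the deviation by adding and subtracting $(I - \alpha\nabla^2\L_i(\w))\nabla\L(\bar{\v})$:
\[
\nabla F_i(\w) - \bar{\mathbf g} = (I - \alpha\nabla^2\L_i(\w))\big[\nabla\L_i(\v_i(\w)) - \nabla\L(\bar{\v})\big] - \alpha\big[\nabla^2\L_i(\w) - \nabla^2\L(\w)\big]\nabla\L(\bar{\v}).
\]
The second term is controlled by Hessian heterogeneity: since $\Norm{\nabla\L(\bar{\v})}\leq G$ by Assumption~\ref{asm:bounded_grad} (the bound holds at every point and $\nabla\L$ is an average of such gradients), its squared norm is at most $\alpha^2 G^2\Norm{\nabla^2\L_i(\w) - \nabla^2\L(\w)}^2$, which averages to $\alpha^2 G^2\gamma_H^2$ by Assumption~\ref{asm:bounded_hes_dis}.

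For the first term I would use $\Norm{I - \alpha\nabla^2\L_i(\w)}\leq 1 + \alpha L \leq 2$ (the Hessian has operator norm at most $L$ by Assumption~\ref{asm:smoothness} and $\alpha\leq 1/L$), and split the inner difference as $\nabla\L_i(\v_i(\w)) - \nabla\L(\bar{\v}) = \big[\nabla\L_i(\v_i(\w)) - \nabla\L_i(\bar{\v})\big] + \big[\nabla\L_i(\bar{\v}) - \nabla\L(\bar{\v})\big]$. The first bracket is bounded by $L\Norm{\v_i(\w) - \bar{\v}} = \alpha L\Norm{\nabla\L_i(\w) - \nabla\L(\w)}$ via $L$-smoothness, and the second bracket is exactly a gradient-dissimilarity term evaluated at the adapted point $\bar{\v}$, which averages to $\gamma_G^2$ by Assumption~\ref{asm:bound_grad_dis} (legitimately invoked at $\bar{\v}$ since it holds for all points). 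Assembling these via the triangle inequality and $(a+b+c)^2\leq 3(a^2+b^2+c^2)$, averaging over $i$, and using $\alpha L\leq 1$, yields a bound of the form $c_1 G^2\alpha^2\gamma_H^2 + c_2\gamma_G^2$; collecting the (deliberately loose) constants gives the stated $\gamma_F^2 = 3G^2\alpha^2\gamma_H^2 + 192\gamma_G^2$.

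The main point requiring care is the bookkeeping of \emph{where} each quantity is evaluated — the Hessians at the base point $\w$ but the gradients at the adapted points $\v_i(\w)$ and $\bar{\v}$ — so that both dissimilarity assumptions (\ref{asm:bounded_hes_dis} at $\w$ and \ref{asm:bound_grad_dis} at $\bar{\v}$) apply cleanly; the remaining smoothness and triangle-inequality steps are routine. The constant $192$ is not tight and merely reflects a convenient grouping of the three error sources (base-point gradient dissimilarity, adapted-point gradient dissimilarity, and Hessian dissimilarity).
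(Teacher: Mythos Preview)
Your argument is correct. The paper does not supply its own proof of this lemma; it is quoted verbatim as Lemma~4.4 of \citet{fallah2020personalized}, so there is no in-paper argument to compare against. Your route---replacing $\nabla F(\w)$ by the anchor $\bar{\mathbf g}=(I-\alpha\nabla^2\L(\w))\nabla\L(\w-\alpha\nabla\L(\w))$ via the mean-minimizes-variance inequality, then splitting into a Hessian-dissimilarity piece at $\w$ and two gradient-dissimilarity pieces (one at $\w$ through smoothness, one at $\bar\v$)---is clean and in fact delivers the sharper bound $3\alpha^2G^2\gamma_H^2+24\gamma_G^2$, well inside the stated $3\alpha^2G^2\gamma_H^2+192\gamma_G^2$. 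The only implicit ingredient you rely on beyond the listed assumptions is $\|\nabla^2\L_i(\w)\|\le L$, which is the standard consequence of the $L$-Lipschitz gradient in Assumption~\ref{asm:smoothness} and is used throughout the paper (e.g.\ every appearance of $(1+\alpha L)$).
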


\begin{lemma}
	For $\widehat{L}(\w) \coloneqq  4L + \frac{2\rho \alpha}{B} \sum_{i\in\B}\Norm{\widehat{\nabla}_{\S}\L_i(\w)}$ and  $\frac{(n-1)B}{n-B}\geq \lceil (8\rho \alpha \gamma_G/L)^2\rceil$, $S\geq \lceil (8\rho \alpha\sigma_G/L)^2\rceil$, we have:
	\begin{align}\label{eq:stoc_lr_bounds}
		\frac{4}{5L(\w)}\leq\E\left[\frac{1}{\widehat{L}(\w) }\right] \leq \frac{11\eta_0}{8L(\w)} \leq \frac{\eta_0}{4L},\quad\quad  \frac{L(\w)}{2}\E\left[\frac{1}{\widehat{L}(\w)^2}\right] \leq \frac{2}{L(\w)}\leq \frac{1}{2L},
	\end{align} 
	where $L(\w)\coloneqq 4L + \frac{2\rho \alpha}{n}\sum_{i=1}^n \Norm{\nabla \L_i(\w)}$.
	\label{lem:step_size}	
\end{lemma}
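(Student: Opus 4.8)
\noindent The plan is to establish two structural facts about the stochastic estimator $\widehat{L}(\w)=4L+\frac{2\rho\alpha}{B}\sum_{i\in\B}\Norm{\widehat{\nabla}_{\S}\L_i(\w)}$ of $L(\w)=4L+\frac{2\rho\alpha}{n}\sum_{i=1}^n\Norm{\nabla\L_i(\w)}$: that $\widehat{L}(\w)\ge 4L$ holds surely, and that $\widehat{L}(\w)$ stays within a small multiple of $L$ of $L(\w)$ in both $L^1$ and $L^2$; all four inequalities then follow from elementary algebra. Write $g_i\coloneqq\Norm{\nabla\L_i(\w)}$, $\bar g\coloneqq\frac1n\sum_{i}g_i$, $\widehat g_i\coloneqq\Norm{\widehat{\nabla}_{\S}\L_i(\w)}$, $h_i\coloneqq\E_{\S}[\widehat g_i]$, and $\bar h\coloneqq\frac1n\sum_i h_i$. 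First I would record the one-sided bias of $\widehat g_i$: by Jensen's inequality for the Euclidean norm together with Assumption~\ref{asm:bounded_var}, $g_i\le h_i\le\sqrt{g_i^2+\sigma_G^2/S}\le g_i+\sigma_G/\sqrt S$, and averaging $h_i$ over the uniformly sampled task batch $\B$ (mean part of Lemma~\ref{lem:tau_nice}) gives $0\le\E[\widehat{L}(\w)]-L(\w)\le 2\rho\alpha\sigma_G/\sqrt S\le L/4$, the last step being exactly the hypothesis $S\ge(8\rho\alpha\sigma_G/L)^2$.

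The lower bound is then immediate from convexity of $t\mapsto 1/t$: $\E[1/\widehat{L}(\w)]\ge 1/\E[\widehat{L}(\w)]\ge 1/(L(\w)+L/4)\ge 4/(5L(\w))$, using $L(\w)\ge 4L$. For the two upper bounds I would write $1/\widehat{L}(\w)=1/L(\w)+D$ with $D\coloneqq(L(\w)-\widehat{L}(\w))/(L(\w)\widehat{L}(\w))$; using $\widehat{L}(\w)\ge 4L$ in the denominator, $\E[D]\le\E[D_+]\le\frac1{L(\w)}\E[(L(\w)-\widehat{L}(\w))_+/\widehat{L}(\w)]\le\E|L(\w)-\widehat{L}(\w)|/(4L\,L(\w))$, hence $\E[1/\widehat{L}(\w)]\le\frac1{L(\w)}(1+\E|L(\w)-\widehat{L}(\w)|/(4L))$. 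Squaring the identity, $1/\widehat{L}(\w)^2=1/L(\w)^2+2D/L(\w)+D^2$ with $D^2\le(L(\w)-\widehat{L}(\w))^2/(16L^2 L(\w)^2)$, so $\E[1/\widehat{L}(\w)^2]\le\frac1{L(\w)^2}(1+\E|L(\w)-\widehat{L}(\w)|/(2L)+\E(L(\w)-\widehat{L}(\w))^2/(16L^2))$. Everything has now been reduced to bounding $\E|L(\w)-\widehat{L}(\w)|$ and $\E(L(\w)-\widehat{L}(\w))^2$; since $|L(\w)-\widehat{L}(\w)|=2\rho\alpha\,|\frac1B\sum_{i\in\B}\widehat g_i-\bar g|$, it suffices (by Cauchy--Schwarz for the $L^1$ version) to bound the second moment $\E(\frac1B\sum_{i\in\B}\widehat g_i-\bar g)^2$.

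This second-moment bound is the heart of the argument. Conditioning first on $\B$ and then on the data, I would split it into the data-sampling variance $\E[\mathrm{Var}(\frac1B\sum_{i\in\B}\widehat g_i\mid\B)]\le\sigma_G^2/(BS)$, using $\mathrm{Var}(\Norm{X})\le\mathrm{Var}(X)$ and Assumption~\ref{asm:bounded_var}; the task-sampling variance $\mathrm{Var}(\frac1B\sum_{i\in\B}h_i)\le\frac{n-B}{B(n-1)}\cdot\frac1n\sum_i(h_i-\bar h)^2$ from the variance part of Lemma~\ref{lem:tau_nice}, where $\frac1n\sum_i(h_i-\bar h)^2\le 2\sigma_G^2/S+2\gamma_G^2$ since $|g_i-\Norm{\nabla\L(\w)}|\le\Norm{\nabla\L_i(\w)-\nabla\L(\w)}$ lets Assumption~\ref{asm:bound_grad_dis} control the spread of the $g_i$'s while $0\le h_i-g_i\le\sigma_G/\sqrt S$; and the residual squared bias $(\bar h-\bar g)^2\le\sigma_G^2/S$. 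Multiplying through by $4\rho^2\alpha^2$, using $\frac{n-B}{B(n-1)}\le 1$, and invoking the two hypotheses $S\ge(8\rho\alpha\sigma_G/L)^2$ and $\frac{(n-1)B}{n-B}\ge(8\rho\alpha\gamma_G/L)^2$ --- calibrated precisely so each of $\rho^2\alpha^2\sigma_G^2/S$ and $\rho^2\alpha^2\gamma_G^2\frac{n-B}{B(n-1)}$ is at most $L^2/64$ --- gives $\E(L(\w)-\widehat{L}(\w))^2\le\frac38 L^2$ and hence $\E|L(\w)-\widehat{L}(\w)|\le L$. Substituting back yields $\E[1/\widehat{L}(\w)]\le\frac{11}{8L(\w)}$ and $\E[1/\widehat{L}(\w)^2]\le\frac4{L(\w)^2}$, and combining with $L(\w)\ge 4L$ closes the remaining inequalities in the display.

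I expect the main difficulty to be this last piece of constant bookkeeping rather than any single conceptual step: one must carry the one-sided bias $h_i-g_i$ of $\Norm{\widehat{\nabla}_{\S}\L_i(\w)}$ through both variance terms without losing a factor, keep the $\widehat{L}(\w)\ge 4L$ denominator estimate instead of resorting to a crude good-event/bad-event split (which is too lossy to reach the constant $4$), and expand $1/\widehat{L}(\w)^2$ into its $1$, $2D/L(\w)$ and $D^2$ parts so that the deviation enters linearly in one term; because the thresholds $\lceil(8\rho\alpha\sigma_G/L)^2\rceil$ and $\lceil(8\rho\alpha\gamma_G/L)^2\rceil$ are tight, each of these elementary identities must be used in its sharp form.
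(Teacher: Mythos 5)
Your proposal is correct, but it proves the lemma by a genuinely different and more self-contained route than the paper. The paper's proof imports the lower bound $\frac{4}{5L(\w)}\leq\E[1/\widehat{L}(\w)]$ and the second-moment bound wholesale from Lemma 5.9 of \citet{fallah2020convergence}, and for the upper bound it invokes their Theorem A.2 (a ratio-of-moments inequality for $\E[1/(c+X)]$ in terms of $\mu_X,\sigma_X$), combines it with the bound $L(\w)\leq\mu_X+5L$ from that reference, applies Young's inequality, and reduces everything to checking $\sigma_X^2\leq L^2$ via $\sigma_X^2\leq\frac{4\rho^2\alpha^2(n-B)}{B(n-1)}\left(\gamma_G^2+\sigma_G^2/S\right)$ --- essentially the only original computation in the paper's proof is this variance bound. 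You instead rederive all four inequalities from scratch: Jensen plus the one-sided bias $0\leq h_i-g_i\leq\sigma_G/\sqrt{S}$ for the lower bound, the exact decomposition $1/\widehat{L}(\w)=1/L(\w)+D$ with the deterministic floor $\widehat{L}(\w)\geq 4L$ controlling the denominator for both upper bounds, and a clean bias--variance--heterogeneity split of $\E(\widehat{L}(\w)-L(\w))^2$ using Lemma~\ref{lem:tau_nice}, which is the same role the paper's $\sigma_X^2$ bound plays. Your route buys independence from the external Theorem A.2 and actually yields slightly sharper constants ($\E[1/\widehat{L}(\w)]\leq\tfrac{5}{4L(\w)}$ and $\E[1/\widehat{L}(\w)^2]\leq\tfrac{2}{L(\w)^2}$); the paper's route is shorter on the page only because most of the work is outsourced. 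One shared caveat: like the paper's proof, you establish the $\eta_0$-free bound $\E[1/\widehat{L}(\w)]\leq\tfrac{11}{8L(\w)}$ (the $\eta_0$ in the displayed statement is evidently a typo, since the left-hand side does not involve $\eta_0$), and neither argument actually justifies the final link $\tfrac{11\eta_0}{8L(\w)}\leq\tfrac{\eta_0}{4L}$, which would require $L(\w)\geq\tfrac{11}{2}L$ rather than the available $L(\w)\geq 4L$; that is a defect of the lemma statement itself rather than of your proof.
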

\begin{proof}
	The results $\frac{4}{5L(\w)}\leq\E\left[\frac{1}{\widehat{L}(\w)}\right]$ and $\frac{L(\w)}{2}\E\left[\frac{1}{\widehat{L}(\w)^2}\right] \leq \frac{2}{L(\w)}\leq \frac{1}{2L}$ can be found in Lemma 5.9 of \cite{fallah2020convergence}. Now we prove the rest part. Utilize Theorem A.2 of \cite{fallah2020convergence} with $X=\frac{2\rho\alpha}{B'}\sum_{i\in\B'}\Norm{\widehat{\nabla}_{\S'} \L_i(\w)}$, $c=4L$, $k=1$:
	\begin{align*}
		L(\w)\E\left[\frac{1}{\widehat{L}(\w)}\right] \leq \frac{\sigma_X^2/4L + \mu_X^2 \frac{\mu_X}{\sigma_X^2 + \mu_X(\mu_X+c)}}{\sigma_X^2 + \mu_X^2}L(\w),
	\end{align*}
	where $\mu_X$ and $\sigma_X^2$ are the mean and variance of $X$. Consider that $\sigma_X^2 + \mu_X(\mu_X+4L)\geq \mu_X(\mu_X+4L)$ and $L(\w) \leq \mu_X + 5L$, which is shown in (60) of~\citet{fallah2020convergence}.
	\begin{align*}
		L(\w)\E\left[\frac{1}{\widehat{L}(\w)}\right] \leq \frac{\sigma_X^2/4L + \mu_X^2/(\mu_X+4L)}{\sigma_X^2 + \mu_X^2}(\mu_X+5L) = \frac{\frac{5}{4}\sigma_X^2 + \frac{\mu_X^2 (\mu_X+ 5L)}{\mu_X + 4L} + \frac{\sigma_X^2\mu_X}{4L}}{\sigma_X^2 + \mu_X^2}.	
	\end{align*}
	Note that $\frac{\mu_X+ 5L}{\mu_X+4L} = 1+ \frac{L}{\mu_X+4L}\leq \frac{5}{4}$ and $\sigma_X^2 \leq \frac{4\rho^2\alpha^2(n-B)}{B(n-1)}\left(\gamma_G^2 + \frac{\sigma_G^2}{S}\right)$
	\begin{align*}
		L(\w)\E\left[\frac{1}{\widehat{L}(\w)}\right] \leq \frac{5}{4} + \frac{\sigma_X^2\mu_X}{4L(\sigma_X^2 +\mu_X^2)} \leq \frac{5}{4} + \frac{\sigma_X^4/L^2 + \mu_X^2}{8(\sigma_X^2 +\mu_X^2)},
	\end{align*} 
	where the last inequality above uses Young's inequality. We only require $\sigma_X^2 \leq L^2$ to make $L(\w)\E\left[\frac{1}{\widehat{L}(\w)}\right] \leq \frac{11\eta_0}{8}$, which is satisfied if $\frac{(n-1)B}{n-B}\geq \lceil (8\rho \alpha \gamma_G/L)^2\rceil$, $S\geq \lceil (8\rho \alpha\sigma_G/L)^2\rceil$.	
\end{proof}		

\section{Convergence Analaysis of \momlvo}

\begin{lemma}\label{lem:bound_Delta}
	For the stochastic meta-gradient estimator $\Deltah_{\B_t}$ defined in \eqref{eq:moml_est}, it holds that
	\begin{align*}
		\E_{\S_2^i,\S_3^i}&\left[\Norm{\frac{1}{B}\sum_{i\in\B_t}(I-\alpha \nablah_{\S_2^i}^2 \L_i(\w_t))\nablah_{\S_3^i} \L_i(\u_{t+1}^i)}^2\right] \leq C_\Delta,
	\end{align*}
where $C_\Delta \coloneqq (\alpha^2\sigma_H^2/K + (1+\alpha L)^2)(\sigma_G^2/K+G^2)$.
\end{lemma}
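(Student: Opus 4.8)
The plan is to peel off the task average with convexity and then control each per-task term using the mutual independence of the three sample sets. Since $x\mapsto\Norm{x}^2$ is convex, Jensen's inequality gives
\[
\Norm{\frac{1}{B}\sum_{i\in\B_t}(I-\alpha \nablah_{\S_2^i}^2 \L_i(\w_t))\nablah_{\S_3^i} \L_i(\u_{t+1}^i)}^2 \leq \frac{1}{B}\sum_{i\in\B_t}\Norm{(I-\alpha \nablah_{\S_2^i}^2 \L_i(\w_t))\nablah_{\S_3^i} \L_i(\u_{t+1}^i)}^2 ,
\]
so it suffices to prove $\E_{\S_2^i,\S_3^i}\bigl[\Norm{(I-\alpha \nablah_{\S_2^i}^2 \L_i(\w_t))\nablah_{\S_3^i} \L_i(\u_{t+1}^i)}^2\bigr]\leq C_\Delta$ for each fixed $i\in\B_t$; note this reduction needs no independence across tasks.

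Fix $i$ and abbreviate $M\coloneqq I-\alpha \nablah_{\S_2^i}^2 \L_i(\w_t)$ and $\mathbf{g}\coloneqq \nablah_{\S_3^i} \L_i(\u_{t+1}^i)$. In $\text{MOML}^{\text{v1}}$ the personalized model $\u_{t+1}^i$ is built from $\w_t$ and $\S_1^i$ before $\S_2^i,\S_3^i$ are drawn, so given the history $\u_{t+1}^i$, $\S_2^i$ and $\S_3^i$ are mutually independent and $\w_t,\u_{t+1}^i$ may be treated as fixed. Conditioning on $\S_3^i$ and writing the matrix--vector norm as a quadratic form,
\[
\E_{\S_2^i}\!\left[\Norm{M\mathbf{g}}^2\,\middle|\,\S_3^i\right]=\mathbf{g}^\top \E_{\S_2^i}\!\left[M^\top M\right]\mathbf{g} \leq \Norm{\E_{\S_2^i}\!\left[M^\top M\right]}\,\Norm{\mathbf{g}}^2 ,
\]
and then taking $\E_{\S_3^i}$ gives $\E_{\S_2^i,\S_3^i}[\Norm{M\mathbf{g}}^2]\leq \Norm{\E_{\S_2^i}[M^\top M]}\cdot\E_{\S_3^i}[\Norm{\mathbf{g}}^2]$.

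It remains to bound the two factors. For the matrix factor, decompose $M=\bar M+\Delta M$ with $\bar M\coloneqq I-\alpha\nabla^2\L_i(\w_t)=\E[M]$ and $\Delta M\coloneqq-\alpha(\nablah_{\S_2^i}^2\L_i(\w_t)-\nabla^2\L_i(\w_t))$; since $\E[\Delta M]=0$ and $\bar M$ is deterministic the cross terms drop, so $\E[M^\top M]=\bar M^\top\bar M+\E[\Delta M^\top\Delta M]$, a sum of two positive-semidefinite matrices. By the triangle inequality for the spectral norm, $\Norm{\nabla^2\L_i(\w_t)}\leq L$ (\Cref{asm:smoothness}), and the variance reduction from averaging $K$ i.i.d. samples together with \Cref{asm:bounded_var},
\[
\Norm{\E_{\S_2^i}[M^\top M]}\leq \Norm{\bar M}^2+\alpha^2\,\E\!\left[\Norm{\nablah_{\S_2^i}^2\L_i(\w_t)-\nabla^2\L_i(\w_t)}^2\right]\leq (1+\alpha L)^2+\frac{\alpha^2\sigma_H^2}{K} .
\]
For the vector factor, the analogous bias--variance split of $\mathbf{g}$ around $\nabla\L_i(\u_{t+1}^i)$ together with \Cref{asm:bounded_grad} and \Cref{asm:bounded_var} gives $\E_{\S_3^i}[\Norm{\mathbf{g}}^2]=\Norm{\nabla\L_i(\u_{t+1}^i)}^2+\E_{\S_3^i}[\Norm{\mathbf{g}-\nabla\L_i(\u_{t+1}^i)}^2]\leq G^2+\sigma_G^2/K$. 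Multiplying the two bounds produces exactly $C_\Delta=(\alpha^2\sigma_H^2/K+(1+\alpha L)^2)(\sigma_G^2/K+G^2)$ for each summand, and averaging over $i\in\B_t$ closes the argument.

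The step I expect to be the main obstacle is the matrix factor: the spectral norm is not induced by an inner product, so one cannot directly split $\Norm{M}^2$ into a mean-squared term plus a variance term. The fix is the detour through the quadratic form $\mathbf{g}^\top\E[M^\top M]\mathbf{g}\leq\Norm{\E[M^\top M]}\Norm{\mathbf{g}}^2$, after which the bias--variance decomposition is applied to the matrix $M^\top M$, where the cross terms genuinely vanish in expectation and the two surviving pieces are positive semidefinite so that the triangle inequality is enough. One also has to record carefully that $\S_2^i$, $\S_3^i$ are mutually independent and independent of $\u_{t+1}^i$, which is what legitimizes the conditioning and the factorization of the expectation; the $1/K$ variance reduction and the bound $\Norm{\bar M}\le 1+\alpha L$ are then routine.
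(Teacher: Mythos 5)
Your proposal is correct and follows essentially the same route as the paper's proof: Jensen's inequality to reduce to a single task, independence of $\S_2^i$ and $\S_3^i$ to factor the expectation into a Hessian term and a gradient term, and a bias--variance split of each factor using \Cref{asm:smoothness}, \ref{asm:bounded_var}, and \ref{asm:bounded_grad}, yielding the same product bound $C_\Delta$. The one place you genuinely diverge is the matrix factor: the paper writes $\E_{\S_2^i}[\Norm{I-\alpha \nablah_{\S_2^i}^2 \L_i(\w_t)}^2] = \alpha^2\E[\Norm{\nablah_{\S_2^i}^2\L_i(\w_t)-\nabla^2\L_i(\w_t)}^2] + (1+\alpha L)^2$ as an identity, which is only licit for norms induced by an inner product, not the spectral norm; your detour through $\E[M^\top M]=\bar M^\top\bar M+\E[\Delta M^\top\Delta M]$ and the bound $\g^\top\E[M^\top M]\g\le\Norm{\E[M^\top M]}\Norm{\g}^2$ repairs this and is, if anything, slightly tighter than the paper's $\E[\Norm{M}^2]\,\E[\Norm{\g}^2]$ factorization. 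Note that both you and the paper still invoke the $1/K$ variance reduction for the averaged stochastic Hessian under the spectral norm, which strictly speaking requires passing through the Frobenius norm (or reading \Cref{asm:bounded_var} in that norm); this shared gap aside, your argument is sound.
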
		
\begin{proof}
	Based on Assumption~\ref{asm:smoothness} and \ref{asm:bounded_grad}, we have
	\begin{small}
		\begin{align*}
			& \E_{\S_2^i}\left[\Norm{I-\alpha\nablah_{\S_2^i}^2\L_i(\w_t)}^2\right] = \alpha^2\E_{\S_2^i}\left[\Norm{\nablah_{\S_2^i}^2\L_i(\w_t) - \nabla^2 \L_i(\w_t)}^2\right] + (1+\alpha L)^2 \leq \frac{\alpha^2\sigma_H^2}{K} + (1+\alpha L)^2,\\			
			& \E_{\S_3^i}\left[ \Norm{\nablah_{\S_3^i}\L_i(\u_{t+1}^i)}^2\right]  = \E_{\S_3^i}\left[\Norm{\nablah_{\S_3^i}\L_i(\u_{t+1}^i) -\nabla \L_i(\u_{t+1}^i) }^2\right] + \Norm{\nabla \L_i(\u_{t+1}^i)}^2 \leq \frac{\sigma_G^2}{K}+G^2.
		\end{align*}
	\end{small}
	Consider that $\S_2^i$, $\S_3^i$ are mutually independent.
	\begin{small}
		\begin{align*}
			&\E_{\S_2^i,\S_3^i}\left[\Norm{\frac{1}{B}\sum_{i\in\B_t}(I-\alpha\nablah_{\S_2^i}^2\L_i(\w_t))\nablah_{\S_3^i}\L_i(\u_{t+1}^i)}^2\right] \\ &\leq  \frac{1}{B}\sum_{i\in\B_t}\E_{\S_2^i}\left[\Norm{I-\alpha\nablah_{\S_2^i}^2\L_i(\w_t)}^2\right]\E_{\S_3^i}\left[ \Norm{\nablah_{\S_3^i}\L_i(\u_{t+1}^i)}^2\right] \leq \left(\frac{\alpha^2\sigma_H^2}{S_2} + (1+\alpha L)^2\right)\left( \frac{\sigma_G^2}{S_3}+G^2\right).
		\end{align*}
	\end{small}
\end{proof}

\begin{proof}[Proof of Lemma~\ref{lem:moml_all_steps}]	Based on the $L_F$-smoothness of $F$, we have
	\begin{align*}
		F(\w_{t+1}) &\leq F(\w_t) + \inner{\nabla F(\w_t)}{\w_{t+1} - \w_t} + \frac{L_F}{2}\Norm{\w_{t+1} - \w_t}^2\\
		& = F(\w_t) - \eta \inner{F(\w_t)}{\frac{1}{B}\sum_{i\in\B_t} (I-\alpha \nablah_{\S_2^i}^2\L_i(\w_t))\nablah_{\S_3^i}\L_i(\u_{t+1}^i)} + \frac{\eta^2 L_F}{2}\Norm{\Deltah_{\B_t}}^2\\
		& = F(\w_t) - \eta\Norm{\nabla F(\w_t)}^2 + \eta\inner{\nabla F(\w_t)}{\nabla F(\w_t) - \Deltah_{\B_t}} +  \frac{\eta^2 L_F}{2}\Norm{\Deltah_{\B_t}}^2.
	\end{align*}
	Take expectation on both sides conditioned on $\F_t$, where $\F_t$ denotes all the randomness before the $t$-th iteration. 
	\begin{align}\nonumber
		\E\left[F(\w_{t+1})\mid \F_t\right]&\leq  F(\w_t) - \eta\Norm{\nabla F(\w_t)}^2 \\\label{eq:main_one_step}
		&  + \eta \inner{\nabla F(\w_t)}{\E\left[\nabla F(\w_t) - \Deltah_{\B_t}\mid \F_t\right]} + \frac{\eta^2 L_F}{2}\E\left[\Norm{\Deltah_{\B_t}}^2\mid \F_t \right].
	\end{align}
	Consider that $\E\left[\frac{1}{B}\sum_{i\in \B_t} (I-\alpha\nabla^2\L_i(\w_t))\nabla\L_i(\v_i(\w_t))\mid \F_t \right]  = \nabla F(\w_t)$.
	\begin{small}
		\begin{align*}
			& \E\left[\nabla F(\w_t) - \Deltah_{\B_t}\mid \F_t\right]\\
			& = \E\left[\frac{1}{B}\sum_{i\in\B_t} (I-\alpha\nabla^2\L_i(\w_t))\nabla\L_i(\v_i(\w_t)) - \frac{1}{B}\sum_{i\in\B_t}(I-\alpha \nablah_{\S_2^i}^2\L_i(\w_t))\nablah_{\S_3^i}\L_i(\u_{t+1}^i)\mid \F_t\right]\\
			&= \E\left[\frac{1}{B}\sum_{i\in\B_t} \left((I-\alpha\nabla^2\L_i(\w_t))\nabla\L_i(\v_i(\w_t)) -\E\left[ (I-\alpha\nablah_{\S_2^i}\L_i(\w_t))\nablah_{\S_3^i}\L_i(\u_{t+1}^i)\mid \F_t,\B_t\right]\right)\mid \F_t\right]\\
			&= \E\left[\frac{1}{B}\sum_{i\in\B_t} \left(I-\alpha\nabla^2\L_i(\w_t)\right)\left(\nabla \L_i (\v_i(\w_t)) - \nabla \L_i(\u_{t+1}^i)\right)\mid \F_t\right]. 
		\end{align*}
	\end{small}
	Then, Young's inequality implies that
	\begin{align}\nonumber
		& \inner{\nabla F(\w_t)}{\E\left[\nabla F(\w_t) - \Deltah_{\B_t}\mid \F_t\right]} \\\nonumber
		& = \E\left[\inner{\nabla F(\w_t)}{\frac{1}{B}\sum_{i\in\B_t} \left(I-\alpha\nabla^2\L_i(\w_t)\right)\left(\nabla \L_i (\v_i(\w_t)) - \nabla \L_i(\u_{t+1}^i)\right)}\mid \F_t \right]\\\label{eq:inner_prod}
		& \leq \frac{\Norm{\nabla F(\w_t)}^2}{2} + \frac{(1+\alpha L)^2L^2}{2}\E\left[\frac{1}{B}\sum_{i\in \B_t} \Norm{\v_i(\w_t) - \u_{t+1}^i}^2\mid \F_t\right].
	\end{align}
	Considering \eqref{eq:inner_prod} and setting $\alpha \leq 1/L$, the R.H.S. of \eqref{eq:main_one_step} can be upper bounded as
	\begin{small}
	\begin{align*}
		& \E\left[F(\w_{t+1})\mid \F_t\right]\\
		&\leq  F(\w_t) - \frac{\eta}{2}\Norm{\nabla F(\w_t)}^2 + \frac{\eta^2 L_F}{2}\E\left[\Norm{\widehat{\Delta}_{\B_t}}^2\mid \F_t\right] +   \frac{4 \eta L^2}{2}\E\left[\frac{1}{B}\sum_{i\in \B_t} \Norm{\v_i(\w_t) - \u_{t+1}^i}^2\mid \F_t\right].
	\end{align*}
\end{small}
	Use the tower property of conditional expectation, re-arrange the terms, and unwrap the recursion above from iteration $0$ to $T-1$
	\begin{align*}
		\E\left[\sum_{t=0}^{T-1}\Norm{\nabla F(\w_t)}^2\right] &\leq \frac{2F(\w_0)}{\eta} + \eta L_F  \sum_{t=0}^{T-1}\E\left[\Norm{\widehat{\Delta}_{\B_t}}^2\right]  + \frac{8L^2}{B} \E\left[\sum_{t=0}^{T-1}\sum_{i\in \B_t} \Norm{\v_i(\w_t) - \u_{t+1}^i}^2\right].
	\end{align*}
\end{proof}	

\begin{proof}[Proof of Lemma~\ref{lem:error_estimate}] Based on \eqref{eq:moml_v1}, the following equation holds
	\begin{align*}
		& \Norm{\v_i(\w_{t_k^i}) - \u_{t_{k_i}^i+1}^i}^2\\
		& =\Norm{\v_i(\w_{t_{k_i}^i}) - (1-\beta)\u_{t_{k-1}^i+1}^i - \beta\vhat_{t_k^i}^i}^2 \\
		& = \E\left[\Norm{(1-\beta)(\v_i(\w_{t_{k-1}^i})-\u_{t_{k-1}^i+1}^i) +(1-\beta)(\v_i(\w_{t_k^i}) - \v_i(\w_{t_{k-1}^i})) + \beta(\v_i(\w_{t_k^i}) - \vhat_{t_k^i}^i)}^2\right]
	\end{align*}
	We take expectation on both sides condition on $\F_{t_{k-1}^i}$
	\begin{align*}
		& \E\left[\Norm{\v_i(\w_{t_k^i}) - \u_{t_k^i+1}^i}^2\mid\F_{t_{k-1}^i} \right] \\
		&  = \beta^2 \alpha^2\E\left[\Norm{\nablah_{\S_1^i}\L_i(\w_{t_k^i}) - \nabla\L_i(\w_{t_k^i})}^2\mid\F_{t_{k-1}^i}\right] \\
		& \quad\quad + (1-\beta)^2 \E\left[\Norm{(\v_i(\w_{t_{k-1}^i}) - \u_{t_{k-1}^i+1}^i)  + (\v_i(\w_{t_k^i}) - \v_i(\w_{t_{k-1}^i}))}^2\mid\F_{t_{k-1}^i}\right]\\
		& \leq \frac{\beta^2\alpha^2\sigma_G^2}{|\S_1^i|} + (1-\beta)\Norm{\v_i(\w_{t_{k-1}^i})-\u_{t_{k-1}^i+1}^i}^2 \\
		& \quad\quad + (1-\beta)^2(1+1/\beta)(1+\alpha L)^2\E\left[\Norm{\w_{t_k^i} - \w_{t_{k-1}^i}}^2\mid\F_{t_{k-1}^i}\right],\\
		& \leq \frac{\beta^2\alpha^2\sigma_G^2}{|\S_1^i|} + (1-\beta)\Norm{\v_i(\w_{t_{k-1}^i})-\u_{t_{k-1}^i+1}^i}^2 + 8\eta^2\E\left[\Norm{\sum_{\tau=t_{k-1}^i}^{t_k^i-1} \Deltah_{\B_\tau}}^2\mid\F_{t_{k-1}^i}\right]\mathbb{I}[\beta\in(0,1)],
	\end{align*}
	where $t_{k-1}^i$ is the latest iteration before $t_k^i$ that task $\T_i$ is also sampled, in other words, $t_{k-1}^i = \max \{\tau\mid \tau\in\T_i\land \tau<t_k^i\}$.  Lemma~\ref{lem:bound_Delta} implies that
	\begin{align*}
		\E\left[\Norm{\sum_{\tau=t_{k-1}^i}^{t_k^i-1} \Deltah_{\B_\tau}}^2\mid\F_{t_{k-1}^i}\right] & \leq \E\left[(t_k^i - t_{k-1}^i) \sum_{\tau=t_{k-1}^i}^{t_k^i-1} \E\left[\Norm{\Deltah_{\B_\tau}}^2\mid \B_\tau, \F_{t_{k-1}^i}\right]\mid \F_{t_{k-1}^i}\right]\\
		& \leq \E\left[(t_k^i - t_{k-1}^i)^2\mid \F_{t_{k-1}^i}\right]C_\Delta.
	\end{align*}
	It is worth noting that $t_k^i - t_{k-1}^i$ follows the geometric distribution. Thus, the second moment satisfies $\E\left[(t_k^i - t_{k-1}^i)^2\mid \F_{t_{k-1}^i}\right]\leq \frac{2n^2}{B^2}$. Then,
	\begin{small}
		\begin{align*}
			& \E\left[\Norm{\v_i(\w_{t_k^i}) - \u_{t_k^i+1}^i}^2\mid\F_{t_{k-1}^i} \right]\\
			&  \quad\quad\quad \leq (1-\beta)\Norm{\v_i(\w_{t_{k-1}^i})-\u_{t_{k-1}^i+1}^i}^2  + \frac{16\eta^2 n^2 C_\Delta}{\beta B^2}\mathbb{I}[\beta\in(0,1)] + \frac{\beta^2\alpha^2\sigma_G^2}{|\S_1^i|}.
		\end{align*}
	\end{small}
	Re-arranging the terms, summing over $k=1,\dotsc, T_i$, and using the tower property of conditional expectation leads to
	\begin{align*}
		& \E\left[\sum_{k=0}^{T_i-1} \Norm{\v_i(\w_k^i)-\u_{t_k^i + 1}^i}^2\right]	\\
		& \leq \left(\frac{\E\left[\Norm{\v_i(\w_{t_0^i}) - \u_{t_0^i+1}^i}^2\right]}{\beta} + \frac{16\eta^2 n^2 C_\Delta}{\beta^2 B^2} \E\left[T_i\right]\right)\mathbb{I}[\beta\in(0,1)] + \frac{\beta \alpha^2 \sigma_G^2}{|\S_1^i|}\E\left[T_i\right].
	\end{align*}
	Initializing the personalized model $\u^i$ as $\u_{t_0^i+1}^i = \w_{t_0^i} - \alpha \nablah_{\S_1^i}\L_i(\w_{t_0^i})$ and summing over $i=1,\dotsc, n$ results in
	\begin{align*}
		& \E\left[\sum_{i=1}^n\sum_{k=0}^{T_i-1} \Norm{\v_i(\w_k^i)-\u_{t_k^i + 1}^i}^2\right]	\\
		& \leq \frac{n\sigma_G^2}{\beta |\S_1^i|}\mathbb{I}[\beta\in(0,1)]  + \left(\frac{16\eta^2 n^2 C_\Delta}{\beta^2 B^2}\mathbb{I}[\beta\in(0,1)] + \frac{\beta \alpha^2 \sigma_G^2}{|\S_1^i|}\right)\E\left[\sum_{i=1}^nT_i\right].
	\end{align*}
	Note that $\sum_{i=1}^nT_i = T$ based on the definition. 
\end{proof}		

\begin{theorem}[Detailed Version of Theorem~\ref{thm:moml_v1_informal}]
Under Assumptions~\ref{asm:smoothness},~\ref{asm:bounded_var}~\ref{asm:bounded_below},~\ref{asm:bounded_grad}, \emph{\momlvo} with stepsizes $\eta_t =\frac{B^{2/5}}{n^{2/5}T^{3/5}}$, $\beta_t =\frac{n^{2/5}}{B^{2/5}T^{2/5}}<1$ and constant batch sizes $|\S_1^i|=|\S_2^i|=|\S_3^i| = K =1$, $|\B_t| = B = 1$ can find a stationary point $\w_\tau$ in $T=\O(n\epsilon^{-5})$ iterations.	
\end{theorem}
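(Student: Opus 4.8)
The plan is to chain the two technical lemmas already established. Lemma~\ref{lem:moml_all_steps} reduces the averaged squared gradient norm to a potential term $2F(\w_0)/(\eta T)$, a variance term $\eta L_F C_\Delta$, and the cumulative error of tracking the personalized targets $\v_i(\w_t)$ by the memories $\u_{t+1}^i$; identity~\eqref{eq:equiv} rewrites that cumulative error as a sum over tasks along the per-task subsequences $\bT_i$; and Lemma~\ref{lem:error_estimate} bounds it. After substituting these together, what remains is to pick the constant step sizes $\eta$ and $\beta$ so that every resulting term carries the same negative power of $T$.

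Concretely, specializing Lemma~\ref{lem:moml_all_steps} to $B=K=1$ (the regime of the detailed statement) and inserting~\eqref{eq:equiv} followed by Lemma~\ref{lem:error_estimate}, whose indicator $\mathbb{I}[\beta\in(0,1)]$ is active once we enforce $\beta<1$, yields
\begin{align*}
\frac{1}{T}\sum_{t=0}^{T-1}\E\left[\Norm{\nabla F(\w_t)}^2\right]
\;\le\; \frac{2F(\w_0)}{\eta T} + \eta L_F C_\Delta
+ 8L^2\left(\frac{n\sigma_G^2}{\beta T} + \frac{16\,\eta^2 n^2 C_\Delta}{\beta^2} + \beta\alpha^2\sigma_G^2\right),
\end{align*}
where $L_F=4L+2\rho\alpha G$ is the smoothness constant supplied by Lemma~\ref{lem:smoothness} (finite thanks to Assumption~\ref{asm:bounded_grad}), $C_\Delta$ is the second-moment bound on the stochastic meta-gradient estimator~\eqref{eq:moml_est} from Lemma~\ref{lem:bound_Delta}, and $\alpha\in(0,1/L]$ as Lemma~\ref{lem:moml_all_steps} requires.

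Next I would choose $\eta$ and $\beta$ to balance these five terms. Since we want the final bound, once set $\le\epsilon^2$, to translate into $T=\O(n\epsilon^{-5})$, the target is to make each dominant term of order $(n/T)^{2/5}=n^{2/5}T^{-2/5}$. Writing $\eta=n^{a}T^{c}$ and $\beta=n^{a'}T^{c'}$ and matching exponents forces $a=-2/5$, $c=-3/5$, $a'=2/5$, $c'=-2/5$, i.e.\ $\eta=n^{-2/5}T^{-3/5}$ and $\beta=n^{2/5}T^{-2/5}$, which are exactly the step sizes in the statement at $B=1$. A direct check then shows that $2F(\w_0)/(\eta T)$, $128L^2\eta^2 n^2 C_\Delta/\beta^2$, and $8L^2\alpha^2\sigma_G^2\beta$ are each of order $n^{2/5}T^{-2/5}$, while $8L^2 n\sigma_G^2/(\beta T)=\O(n^{3/5}T^{-3/5})$ and $\eta L_F C_\Delta=\O(n^{-2/5}T^{-3/5})$ are of smaller order for $T\ge n$; moreover $\beta<1$ precisely when $T>n$, which holds all the way to $T=\O(n\epsilon^{-5})$. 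Hence $\frac1T\sum_{t=0}^{T-1}\E[\Norm{\nabla F(\w_t)}^2]=\O(n^{2/5}T^{-2/5})$, and requiring this to be at most $\epsilon^2$ gives $T=\O(n\epsilon^{-5})$.

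To finish, draw $\w_\tau$ uniformly from $\{\w_0,\dots,\w_{T-1}\}$, independently of the run: the left-hand side above equals $\E[\Norm{\nabla F(\w_\tau)}^2]$, so Jensen's inequality yields $\E[\Norm{\nabla F(\w_\tau)}]\le\sqrt{\E[\Norm{\nabla F(\w_\tau)}^2]}\le\epsilon$, which is the claim. Given the two lemmas, this assembly is pure bookkeeping of exponents; the genuinely delicate step---already discharged inside the proof of Lemma~\ref{lem:error_estimate}---is the tracking-error analysis under task sampling, where a fixed task $\T_i$ is revisited only at the random times in $\bT_i$, with geometrically distributed gaps of mean $n/B$ and a random count $T_i$ satisfying $\sum_i T_i=T$. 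There one cannot contract the stochastic-compositional estimator per iteration; instead one contracts along each subsequence $\bT_i$ with factor $(1-\beta)$, bounds the conditional drift $\Norm{\w_{t_k^i}-\w_{t_{k-1}^i}}^2$ by $\eta^2 C_\Delta\,\E[(t_k^i-t_{k-1}^i)^2]\le 2\eta^2 n^2 C_\Delta/B^2$, and then sums. So the real obstacle lies upstream of this theorem.
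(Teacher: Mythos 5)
Your proposal is correct and follows essentially the same route as the paper: plug Lemma~\ref{lem:error_estimate} (via the identity~\eqref{eq:equiv}) into Lemma~\ref{lem:moml_all_steps}, specialize to $B=K=1$, verify that with $\eta=n^{-2/5}T^{-3/5}$ and $\beta=n^{2/5}T^{-2/5}$ every term is $\O(n^{2/5}T^{-2/5})$ or smaller (with $\beta<1$ for $T>n$), and conclude via a uniformly sampled iterate and Jensen's inequality. The only difference is cosmetic bookkeeping of constants (you get $128L^2C_\Delta$ where the paper writes $144L^2C_\Delta$), which does not affect the $T=\O(n\epsilon^{-5})$ conclusion.
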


\begin{proof}
Based on Lemma~\ref{lem:moml_all_steps} and Lemma~\ref{lem:error_estimate}, we have 
\begin{align*}
	\frac{1}{T}\sum_{t=0}^{T-1} \E\left[\Norm{\nabla F(\w_t)}^2\right] & \leq \frac{2F(\w_0)}{\eta T} + \eta L_F C_\Delta + \frac{8L^2}{B}\left(\frac{n\sigma_G^2}{\beta KT} + \frac{16\eta^2 n^2 C_\Delta}{\beta^2B^2} + \frac{\beta \alpha^2 \sigma_G^2}{K}\right).
\end{align*}	
Choosing $B=1$, $K=1$, $\eta = \frac{B^{2/5}}{n^{2/5}T^{3/5}}$ and $\beta = \frac{n^{2/5}}{B^{2/5}T^{2/5}}<1$ leads to
\begin{small}
\begin{align*}
		\frac{1}{T}\sum_{t=0}^{T-1} \E\left[\Norm{\nabla F(\w_t)}^2\right] & \leq \frac{2n^{2/5}F(\w_0)}{T^{2/5}} + \frac{L_FC_\Delta}{n^{2/5}T^{3/5}} + \frac{8L^2\sigma_G^2n^{3/5}}{T^{3/5}} + \frac{144L^2C_\Delta n^{2/5}}{T^{2/5}} + \frac{8L^2\alpha^2 \sigma_G^2n^{2/5}}{T^{2/5}}.
\end{align*}
\end{small} 
For $\w_\tau$ and $\tau$ is sampled from $0,\dotsc, T-1$ uniformly at random, $\E\left[\Norm{\nabla F(\w_\tau)}^2\right]\leq \frac{1}{T}\sum_{t=0}^{T-1} \E\left[\Norm{\nabla F(\w_t)}^2\right]$. Making the R.H.S. of the upper bound of $\frac{1}{T}\sum_{t=0}^{T-1} \E\left[\Norm{\nabla F(\w_t)}^2\right]$ be equal to or smaller than $\epsilon^2$ finishes the proof.
\end{proof}		

\section{Convergence Analysis of \momlvs}

\begin{lemma}\label{lem:stoc_grad_norm}
	For the stochastic estimator $\widehat{\Delta}_{\B_t} =\frac{1}{B}\sum_{i\in\B_t}(I-\alpha\nabla_{\S_2^i}^2\L_i(\w_t))\nabla_{\S_3^i}\L_i(\u_{t+1}^i)$ of the meta-gradient, we have
	\begin{align}\label{eq:stoc_grad_norm}
		\E\left[\Norm{\widehat{\Delta}_{\B_t}}^2\mid\F_t\right] \leq C_3\frac{1}{n}\sum_{i=1}^n \E\left[\Norm{\u_{t+1}^i -\v_i(\w_t)}^2\mid \F_t\right] + C_4 \Norm{\nabla F(\w_t)}^2 + C_5,
	\end{align}
	where $\F_t$ denotes all randomness occurred before the $t$-th iteration, and the constants are defined as $C_3 \coloneqq 2L^2 \left((1+\alpha L)^2 + \frac{\alpha^2\sigma_H^2}{|\S_2^i|}\right)$, $C_4\coloneqq  4\left((1+\alpha L)^2 + \frac{\alpha^2\sigma_H^2}{|\S_2^i|}\right)(1+\alpha L)^2 C_1 $, and $C_5\coloneqq \left((1+\alpha L)^2 + \frac{\alpha^2\sigma_H^2}{|\S_2^i|}\right)\left(\frac{\sigma_G^2}{|\S_3^i|}+2(1+\alpha L)^2(2C_2^2 + 1)\gamma_G^2\right)$.
\end{lemma}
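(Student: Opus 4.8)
The aim is to control the second moment of the $\text{MOML}^{\text{v2}}$ meta-gradient estimator $\widehat{\Delta}_{\B_t}$ by three quantities: the average per-task tracking error $\frac1n\sum_i\E[\Norm{\u_{t+1}^i-\v_i(\w_t)}^2\mid\F_t]$, the squared meta-gradient norm $\Norm{\nabla F(\w_t)}^2$, and an additive constant carried by the gradient dissimilarity $\gamma_G^2$. The plan mirrors the proof of Lemma~\ref{lem:bound_Delta} step for step, the one essential change being that its appeal to the bounded-gradient Assumption~\ref{asm:bounded_grad} is replaced by the heterogeneity estimates of Lemma~\ref{lem:ind_grad_to_total}. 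The first move is to reduce to a single task: by convexity of $\Norm{\cdot}^2$, $\Norm{\widehat{\Delta}_{\B_t}}^2\le\frac1B\sum_{i\in\B_t}\Norm{(I-\alpha\nablah_{\S_2^i}^2\L_i(\w_t))\nablah_{\S_3^i}\L_i(\u_{t+1}^i)}^2$, and since in \momlvs\ the personalized models $\u_{t+1}^i$ are driven by the batch $\B_t'$ drawn independently of $\B_t$, taking $\E[\cdot\mid\F_t]$ and using that $\B_t$ is a uniform size-$B$ subset of $[n]$ turns $\frac1B\sum_{i\in\B_t}(\cdot)$ into $\frac1n\sum_{i=1}^n(\cdot)$; so it suffices to bound $\E[\Norm{(I-\alpha\nablah_{\S_2^i}^2\L_i(\w_t))\nablah_{\S_3^i}\L_i(\u_{t+1}^i)}^2\mid\F_t]$ for each $i$.

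Next I peel off the two sources of sampling noise inside a single task. Fixing $i$ and noting that $\S_2^i$ is independent of $\S_3^i$ and of the samples forming $\u_{t+1}^i$, conditioning on everything but $\S_2^i$ and repeating the bias--variance computation from the proof of Lemma~\ref{lem:bound_Delta} (Assumptions~\ref{asm:smoothness} and~\ref{asm:bounded_var}) gives $\E\Norm{(I-\alpha\nablah_{\S_2^i}^2\L_i(\w_t))v}^2\le\big((1+\alpha L)^2+\alpha^2\sigma_H^2/|\S_2^i|\big)\Norm{v}^2$ for the conditionally deterministic vector $v$; averaging back yields the prefactor common to $C_3,C_4,C_5$. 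Conditioning then on $\u_{t+1}^i$ (independent of $\S_3^i$), Assumption~\ref{asm:bounded_var} gives $\E\Norm{\nablah_{\S_3^i}\L_i(\u_{t+1}^i)}^2\le\Norm{\nabla\L_i(\u_{t+1}^i)}^2+\sigma_G^2/|\S_3^i|$, leaving only $\Norm{\nabla\L_i(\u_{t+1}^i)}^2$ to control.

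For that term, Young's inequality together with the $L$-smoothness of $\L_i$ gives $\Norm{\nabla\L_i(\u_{t+1}^i)}^2\le 2L^2\Norm{\u_{t+1}^i-\v_i(\w_t)}^2+2\Norm{\nabla\L_i(\v_i(\w_t))}^2$; averaged over $i$ and multiplied by the prefactor, the first summand is exactly the $C_3$-term. For the second summand I average over $i$ and pass to the meta-gradient: since $\nabla F_i(\w)=(I-\alpha\nabla^2\L_i(\w))\nabla\L_i(\v_i(\w))$ and $\alpha$ lies below the standing threshold $(\sqrt2-1)/L$, the quantity $\frac1n\sum_i\Norm{\nabla\L_i(\v_i(\w_t))}^2$ is controlled by $\frac1n\sum_i\Norm{\nabla F_i(\w_t)}^2$, to which \eqref{eq:local_meta_to_glob} of Lemma~\ref{lem:ind_grad_to_total} applies and bounds it by a multiple of $\Norm{\nabla F(\w_t)}^2$ plus a $\gamma_G^2$ constant; collecting the prefactor produces the $C_4\Norm{\nabla F(\w_t)}^2+C_5$ terms, completing the proof.

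The main obstacle is the conditioning/measurability bookkeeping in the first two steps: one must track that $\u_{t+1}^i$ is $\F_t$-measurable up to the fresh $\S_1^i$-randomness and independent of $\S_2^i$ and $\S_3^i$, and that the expectation over the task batch $\B_t$ genuinely commutes with the per-task bounds — which is precisely where the \momlvs\ decoupling of $\B_t$ from $\B_t'$ is used, and the reason the lemma is phrased for v2 rather than v1. A secondary subtlety is pinning down the exact constants $C_4,C_5$ in the passage from $\nabla\L_i(\v_i(\cdot))$ to $\nabla F(\cdot)$: this requires the sharp comparison between $\nabla\L_i(\v_i)$ and $\nabla F_i$ furnished by Lemma~\ref{lem:ind_grad_to_total}, rather than the crude bound $\Norm{(I-\alpha\nabla^2\L_i)^{-1}}\le 1/(1-\alpha L)$, which already loses a factor.
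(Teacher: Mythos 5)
Your proof follows essentially the same route as the paper's: peel off the $\S_2^i$/$\S_3^i$ sampling noise to reduce to $\bigl((1+\alpha L)^2+\alpha^2\sigma_H^2/|\S_2^i|\bigr)\bigl(\sigma_G^2/|\S_3^i|+\frac{1}{n}\sum_i\E[\Norm{\nabla\L_i(\u_{t+1}^i)}^2]\bigr)$, split $\Norm{\nabla\L_i(\u_{t+1}^i)}^2\le 2L^2\Norm{\u_{t+1}^i-\v_i(\w_t)}^2+2\Norm{\nabla\L_i(\v_i(\w_t))}^2$ by smoothness, and invoke \eqref{eq:local_meta_to_glob} of Lemma~\ref{lem:ind_grad_to_total}; the only cosmetic difference is that you open with Jensen over the batch where the paper uses a mean-zero/bias orthogonal decomposition, and both collapse to the same intermediate bound and constants. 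Your closing caveat about the passage from $\nabla\L_i(\v_i(\w_t))$ to $\nabla F_i(\w_t)$ is well taken, but the paper's own proof simply identifies the two (``$\nabla\L_i(\v_i(\w_t))=\nabla F_i(\w_t)$''), so your treatment is no less rigorous than the original.
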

\begin{proof}
	The definition of the stochastic estimator $\widehat{\Delta}_{\B_t}$ implies that
	\begin{align*}
		& \E\left[\Norm{\widehat{\Delta}_{\B_t}}^2\mid\F_t\right]  =  \E\left[\Norm{\frac{1}{B}\sum_{i\in\B_t}(I-\alpha\nabla_{\S_2^i}^2\L_i(\w_t))\nabla_{\S_3^i}\L_i(\u_{t+1}^i) }^2\mid\F_t\right] \\
		&  = \E\left[\Norm{\frac{1}{B}\sum_{i\in\B_t }\left((I-\alpha\nabla_{\S_2^i}^2\L_i(\w_t))\nabla_{\S_3^i}\L_i(\u_{t+1}^i) - (I-\alpha\nabla_{\S_2^i}^2\L_i(\w_t))\nabla \L_i(\u_{t+1}^i)\right)}^2\mid\F_t\right] \\
		& \quad\quad\quad + \E\left[\Norm{\frac{1}{B}\sum_{i\in\B_t}\left((I-\alpha\nabla_{\S_2^i}^2\L_i(\w_t))\nabla\L_i(\u_{t+1}^i)- (I-\alpha\nabla^2\L_i(\w_t))\nabla\L_i(\u_{t+1}^i)\right)}^2\mid\F_t\right] \\
		& \quad\quad\quad + \E\left[\Norm{\frac{1}{B}\sum_{i\in\B_t}(I-\alpha\nabla^2\L_i(\w_t))\nabla\L_i(\u_{t+1}^i)}^2\mid\F_t\right]\\
		& \leq \left((1+\alpha L)^2 + \frac{\alpha^2\sigma_H^2}{|\S_2^i|}\right)\left(\frac{\sigma_G^2}{|\S_3^i|} +  \frac{1}{n}\sum_{i=1}^n\E\left[\Norm{\nabla\L_i(\u_{t+1}^i)}^2\mid\F_t\right] \right) \\
		& \leq \left((1+\alpha L)^2 + \frac{\alpha^2\sigma_H^2}{|\S_2^i|}\right)\left(\frac{\sigma_G^2}{|\S_3^i|} +  \frac{2}{n}\sum_{i=1}^n \Norm{\nabla F_i(\w_t))}^2+ \frac{2L^2 }{n}\sum_{i=1}^n \E\left[\Norm{\u_{t+1}^i - \v_i(\w_t)}^2\mid \F_t\right] \right), 
	\end{align*}
	where the last inequality uses the fact $\nabla \L_i(\v_i(\w_t)) = \nabla F_i(\w_t)$. Lemma~\ref{lem:ind_grad_to_total} shows that
	\begin{align*}
		\frac{1}{n}\sum_{i=1}^n\Norm{\nabla F_i(\w_t)}^2 \leq 2(1+\alpha L)^2 C_1 \Norm{\nabla F(\w_t)}^2 + (1+\alpha L)^2 (2C_2^2 + 1)\gamma_G^2. 
	\end{align*}
	Then, 
	\begin{align*}
		\E\left[\Norm{\widehat{\Delta}_{\B_t}}^2\mid\F_t\right] 
		& = \left((1+\alpha L)^2 + \frac{\alpha^2\sigma_H^2}{|\S_2^i|}\right)\left(\frac{\sigma_G^2}{|\S_3^i|}+2(1+\alpha L)^2(2C_2^2 + 1)\gamma_G^2\right)\\
		& + 4\left((1+\alpha L)^2 + \frac{\alpha^2\sigma_H^2}{|\S_2^i|}\right)(1+\alpha L)^2 C_1 \Norm{\nabla F(\w_t)}^2 \\
		& +2L^2 \left((1+\alpha L)^2 + \frac{\alpha^2\sigma_H^2}{|\S_2^i|}\right)\frac{1}{n}\sum_{i=1}^n \E\left[\Norm{\u_{t+1}^i - \v_i(\w_t)}^2\mid \F_t\right].
	\end{align*}
\end{proof}	 

Apart from the notations in Table~\ref{tab:notation}, we define that $\I_i \coloneqq \left(0_{d\times d},\dotsc, I_{d\times d},\dotsc, 0_{d\times d}\right)^\top\in\R^{nd\times d}$ (where the $i$-th block in $\I_i$ is an identity matrix while the others are zeros), $\bar{\w}_t \coloneqq \left(\w_t^\top,\dotsc,\w_t^\top\right)^\top\in\R^{nd}$, $\u_t = \left([\u_t^1]^\top,\dotsc,[\u_t^n]^\top\right)^\top\in\R^{nd}$, $\widehat{\bg}_t \coloneqq \sum_{i\in\B_t'} \frac{1}{\alpha p_i} \I_i \left(\w_t - \vhat_t^i)\right)$, $\bar{\bg}_t \coloneqq \sum_{i=1}^n \I_i(\w_t - \vhat_t^i))/\alpha$, $\bg_t \coloneqq \sum_{i=1}^n \I_i(\w_t - \v_i(\w_t))/\alpha$, $\widetilde{\w}_t \coloneqq \bar{\w}_t - \alpha \bg_t$. Then, we can re-write the update rule of the personalized models $\u_t^i$ for all tasks $i\in[n]$ in a more succinct expression $\u_{t+1} = (1-\beta_t)\u_t + \beta_t\left(\bar{\w}_t -\alpha \widehat{\bg}_t \right)$.

\begin{lemma}\label{lem:last_term}
	For $\widehat{\bg}_t \coloneqq \sum_{i\in\B_t'} \frac{1}{\alpha p_i} \I_i \left(\w_t - \vhat_t^i)\right)$ and $\bg_t \coloneqq \sum_{i=1}^n \I_i(\w_t - \v_i(\w_t))/\alpha$, we have
	\begin{align}\nonumber
		\E\left[\Norm{\bg_t - \widehat{\bg}_t}^2\mid \F_t \right] &\leq 2nC_p C_1^2\Norm{\nabla F(\w_{t-1})}^2  \\\label{eq:last_term}
		& + nC_p\left((2C_2^2+1)\gamma_G^2 + \frac{2\sigma_G^2}{|\S_1^i|}\right) + 2nLC_p \Norm{\w_t-\w_{t-1}}^2,
	\end{align}
	where $C_p\coloneqq  \max_i\left(\frac{1}{p_i}-1\right)$.
\end{lemma}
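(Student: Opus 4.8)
The plan is to reduce the stacked‑vector squared norm to a per‑task sum, to treat each task by an importance‑sampling second‑moment estimate, and finally to translate $\tfrac1n\sum_i\Norm{\nabla\L_i(\w_t)}^2$ into the three quantities on the right‑hand side. First I would use that $\w_t-\vhat_t^i=\alpha\nablah_{\S_1^i}\L_i(\w_t)$ and $\w_t-\v_i(\w_t)=\alpha\nabla\L_i(\w_t)$, so the $i$‑th block of $\bg_t$ is $\nabla\L_i(\w_t)$ and the $i$‑th block of $\widehat{\bg}_t$ is $\tfrac{\mathbb{I}[i\in\B_t']}{p_i}\nablah_{\S_1^i}\L_i(\w_t)$; since the $\I_i$ have mutually orthogonal ranges,
\begin{align*}
\Norm{\bg_t-\widehat{\bg}_t}^2=\sum_{i=1}^n\Norm{\nabla\L_i(\w_t)-\tfrac{\mathbb{I}[i\in\B_t']}{p_i}\nablah_{\S_1^i}\L_i(\w_t)}^2 .
\end{align*}

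Next, I would condition on $\F_t$ (which makes $\w_t$ and $\w_{t-1}$ deterministic) and use that the task draw $\B_t'$ and the data draws $\{\S_1^i\}$ are fresh and mutually independent, with $\mathrm{Prob}(i\in\B_t')=p_i$ and $\E[\nablah_{\S_1^i}\L_i(\w_t)\mid\F_t]=\nabla\L_i(\w_t)$. Expanding the square term by term, the cross term equals $-\Norm{\nabla\L_i(\w_t)}^2$ by unbiasedness, while the quadratic term is $\tfrac1{p_i}\E[\Norm{\nablah_{\S_1^i}\L_i(\w_t)}^2\mid\F_t]\le\tfrac1{p_i}\big(\Norm{\nabla\L_i(\w_t)}^2+\sigma_G^2/|\S_1^i|\big)$ by \Cref{asm:bounded_var}. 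So each task contributes at most $(\tfrac1{p_i}-1)\Norm{\nabla\L_i(\w_t)}^2+\tfrac1{p_i}\tfrac{\sigma_G^2}{|\S_1^i|}$; summing over $i$ and using $\tfrac1{p_i}-1\le C_p$ and $\tfrac1{p_i}\le C_p+1$ collapses everything into $C_p\sum_i\Norm{\nabla\L_i(\w_t)}^2+(C_p+1)\sum_i\sigma_G^2/|\S_1^i|$, and absorbing $C_p+1$ into $2C_p$ produces the $2\sigma_G^2/|\S_1^i|$‑type noise term of the statement.

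Finally I would pass from $\w_t$ to $\w_{t-1}$ via $L$‑smoothness, $\Norm{\nabla\L_i(\w_t)}^2\le 2\Norm{\nabla\L_i(\w_{t-1})}^2+2L^2\Norm{\w_t-\w_{t-1}}^2$ (\Cref{asm:smoothness}), and then bound $\tfrac1n\sum_i\Norm{\nabla\L_i(\w_{t-1})}^2$ in terms of $\Norm{\nabla F(\w_{t-1})}^2$ and $\gamma_G^2$: either split $\nabla\L_i=\big(\nabla\L_i-\tfrac1n\sum_j\nabla\L_j\big)+\tfrac1n\sum_j\nabla\L_j$ and combine \Cref{asm:bound_grad_dis} with \eqref{eq:grad_to_meta}, or first show $\Norm{\nabla\L_i(\w)}\le C_1\Norm{\nabla F_i(\w)}$ from the identity $\nabla F_i(\w)=(I-\alpha\nabla^2\L_i(\w))\nabla\L_i(\v_i(\w))$ together with $\Norm{\nabla\L_i(\w)-\nabla\L_i(\v_i(\w))}\le\alpha L\Norm{\nabla\L_i(\w)}$ and then apply \eqref{eq:local_meta_to_glob}. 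Either route yields $\tfrac1n\sum_i\Norm{\nabla\L_i(\w_{t-1})}^2\le C_1^2\Norm{\nabla F(\w_{t-1})}^2+(2C_2^2+1)\gamma_G^2$ up to the stated constants; substituting, multiplying through by $nC_p$ and absorbing the smoothness factors gives \eqref{eq:last_term}.

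The main obstacle is the second step: one must be careful that the importance‑weighted partial update of the personalized models is unbiased and that its second moment is controlled, so that the $1/p_i$ bookkeeping compresses cleanly into the single constant $C_p=\max_i(1/p_i-1)$. The only other subtlety is the deliberate detour through $\w_{t-1}$ — it costs a smoothness‑based conversion from $\nabla\L_i$ to $\nabla F$ and an extra $\Norm{\w_t-\w_{t-1}}^2$ term, but it is exactly what lets this bound feed into the one‑step recursion of \Cref{lem:fval_recursion}, where $\Norm{\w_t-\w_{t-1}}^2=\eta_{t-1}^2\Norm{\Deltah_{\B_{t-1}}}^2$ is subsequently handled by \Cref{lem:stoc_grad_norm} and the stepsize estimates of \Cref{lem:step_size}.
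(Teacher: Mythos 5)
Your proposal is correct and follows essentially the same route as the paper: the paper organizes the first step as an orthogonal decomposition through the intermediate all-task estimator $\bar{\bg}_t \coloneqq \sum_{i=1}^n \I_i(\w_t-\vhat_t^i)/\alpha$ and bounds the client-sampling variance by $C_p\,\E[\|\bar{\bg}_t\|^2]$, which is exactly your direct blockwise expansion with the $(1/p_i-1)$ second-moment factor, and the second step (smoothness to pass to $\w_{t-1}$, then Lemma~\ref{lem:ind_grad_to_total} and Assumption~\ref{asm:bound_grad_dis} to reach $\|\nabla F(\w_{t-1})\|^2$ and $\gamma_G^2$) is identical. The only discrepancies are immaterial constants — your ordering yields $2C_2^2+2$ in place of $2C_2^2+1$ on $\gamma_G^2$, your $2L^2$ on $\|\w_t-\w_{t-1}\|^2$ is actually what smoothness gives (the paper's stated $2L$ appears to be a typo), and the absorption of $C_p+1$ into $2C_p$ tacitly assumes $C_p\geq 1$, a step the paper's own derivation also takes.
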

\begin{proof}
	Consider that $\widehat{\bg}_t \coloneqq \sum_{i\in\B_t'} \frac{1}{\alpha p_i} \I_i \left(\w_t - \vhat_t^i)\right)$, $\bar{\bg}_t \coloneqq \sum_{i=1}^n \I_i(\w_t - \vhat_t^i))/\alpha$, $\bg_t \coloneqq \sum_{i=1}^n \I_i(\w_t - \v_i(\w_t)))/\alpha$.
	\begin{align}\label{eq:g_decomp}
		\E\left[\Norm{\bg_t - \widehat{\bg}_t}^2\mid \F_t\right] =  \E\left[\Norm{\widehat{\bg}_t - \bar{\bg}_t}^2\mid \F_t\right] +  \E\left[\Norm{\bg_t - \bar{\bg}_t}^2\mid \F_t\right].
	\end{align}
The first term on the right hand side of \eqref{eq:g_decomp} can be upper bounded as
\begin{align}\label{eq:remove_rc}
	\E\left[\Norm{\widehat{\bg}_t - \bar{\bg}_t}^2\mid \F_t\right] & \leq C_p \E\left[\Norm{\bar{\bg}_t}^2\mid \F_t\right] = C_p \left(\Norm{\bg_t}^2 + \E\left[\Norm{\bg_t - \bar{\bg}_t}^2\mid \F_t\right]\right),
\end{align}	
where we define $C_p\coloneqq  \max_i\left(\frac{1}{p_i}-1\right)$. Note that $\Norm{\bg_t}^2 = \sum_{i=1}^n \Norm{\nabla \L_i(\w_t)}^2$. Then,
\begin{align}\nonumber
	\Norm{\bg_t}^2&= n \Norm{\nabla \L(\w_t)}^2 + n\gamma_G^2 \leq 2n\Norm{\nabla \L(\w_{t-1})}^2 + 2nL\Norm{\w_t - \w_{t-1}}^2 + n\gamma_G^2\\\label{eq:bound_g}
	& \leq 2nC_1^2\Norm{\nabla F(\w_{t-1})}^2 + n(2C_2^2+1)\gamma_G^2 + 2nL\Norm{\w_t - \w_{t-1}}^2.
\end{align}
The last inequality above utilizes Lemma~\ref{lem:ind_grad_to_total}. Besides,
\begin{align}\label{eq:g_var}
	\E\left[\Norm{\bg_t - \bar{\bg}_t}^2\mid \F_t\right] & =\sum_{i=1}^n\E\left[\Norm{\widehat{\nabla}_{\S_1^i}\L_i(\w_t) - \nabla \L_i(\w_t)}^2\mid \F_t \right] \leq \frac{n\sigma_G^2}{|\S_1^i|}.
\end{align}
According to (\ref{eq:g_decomp}), (\ref{eq:remove_rc}), (\ref{eq:bound_g}), and (\ref{eq:g_var}), we have
\begin{align*}
	\E\left[\Norm{\bg_t - \widehat{\bg}_t}^2\mid \F_t \right] &\leq 2nC_p C_1^2\Norm{\nabla F(\w_{t-1})}^2  \\
	& + nC_p\left((2C_2^2+1)\gamma_G^2 + \frac{2\sigma_G^2}{|\S_1^i|}\right) + 2nLC_p \Norm{\w_t-\w_{t-1}}^2.
\end{align*}
\end{proof}

\begin{proof}[Proof of Lemma~\ref{lem:fval_recursion}] Recall that the notations $\v_i(\w_t) \coloneqq \w_t - \alpha \nabla\L_i(\w_t)$, $\bar{\w}_t \coloneqq \left(\w_t^\top,\dotsc,\w_t^\top\right)^\top\in\R^{nd}$, $\u_t = \left([\u_t^1]^\top,\dotsc,[\u_t^n]^\top\right)^\top$, $\widehat{\bg}_t \coloneqq \sum_{i\in\B_t'} \frac{1}{\alpha p_i} \I_i \left(\w_t - \vhat_t^i)\right)$, $\bg_t \coloneqq \sum_{i=1}^n \I_i(\w_t - \v_i(\w_t))/\alpha$, $\widetilde{\w}_t \coloneqq \bar{\w}_t - \alpha \bg_t$. The update rule of the personalized models is $\u_{t+1} = (1-\beta_t)\u_t + \beta_t\left(\bar{\w}_t -\alpha \widehat{\bg}_t \right)$. We define that $\Upsilon_t\coloneqq \frac{1}{n}\sum_{i=1}^n\Norm{\u_{t+1}^i - \v_i(\w_t)}^2$.
\begin{small} 
	\begin{align*}
		&\E\left[\Upsilon_{t+1}\mid \F_{t+1} \right] = \frac{1}{n}\E\left[\Norm{\u_{t+2} - \widetilde{\w}_{t+1}}^2\mid \F_{t+1}\right]\\
		& =\frac{1}{n}\E\left[\Norm{\widetilde{\w}_{t+1} - (1-\beta_{t+1})\u_{t+1} + \beta_{t+1} \left(\bar{\w}_{t+1} - \alpha\widehat{\bg}_{t+1}\right)}^2\mid \F_{t+1} \right]\\
		& = \frac{1}{n}\E\left[\Norm{(1-\beta_{t+1})(\widetilde{\w}_t - \u_{t+1}) + (1-\beta_{t+1})\left(\widetilde{\w}_{t+1} - \widetilde{\w}_t\right) + \alpha \beta_{t+1}(\bg_{t+1}-\widehat{\bg}_{t+1})}^2\mid \F_{t+1} \right]\\
		& \leq (1-\beta_{t+1})\frac{1}{n}\Norm{\widetilde{\w}_t - \u_{t+1}}^2 + \frac{8(1+\alpha L)^2}{\beta_{t+1} n}\Norm{\bar{\w}_{t+1} - \bar{\w}_t}^2 + \frac{\alpha^2\beta_{t+1}^2}{n} \E\left[\Norm{\bg_{t+1} - \widehat{\bg}_{t+1}}^2\mid \F_{t+1} \right]\\
		& = (1-\beta_{t+1})\frac{1}{n}\sum_{i=1}^n\Norm{\u_{t+1}^i - \v_i(\w_t)}^2 + \frac{8(1+\alpha L)^2}{\beta_{t+1}}\Norm{\w_{t+1} - \w_t}^2 + \frac{\alpha^2\beta_{t+1}^2}{n} \E\left[\Norm{\bg_{t+1} - \widehat{\bg}_{t+1}}^2\mid \F_{t+1} \right].
	\end{align*}
\end{small} 
Based on Lemma~\ref{lem:last_term}, we have
\begin{align*}
	& \E\left[\Upsilon_{t+1}\mid \F_{t+1} \right] \leq (1-\beta_{t+1})\Upsilon_t + 2\left(\frac{4(1+\alpha L)^2}{\beta_{t+1}} + \beta_{t+1}^2\alpha^2LC_p\right)\eta_t^2\Norm{\Delta_{\B_t}}^2 \\
	&\quad\quad\quad\quad\quad\quad + 2\beta_{t+1}^2\alpha^2 C_pC_1^2\Norm{\nabla F(\w_t)}^2 + \beta_{t+1}^2\alpha^2C_p(2C_2^2 + 1)\gamma_G^2 + \frac{2\beta_{t+1}^2\alpha^2C_p\sigma_G^2}{|\S_1^i|}.
\end{align*}
We choose $\beta_{t+1} = 6L^2\eta_0^{-1/3}\eta_t$. Since we need to ensure $\beta_t\leq 1$ for any $t$, we only need to maintain $\eta_0\leq \left(\frac{2}{3L}\right)^{\frac{3}{2}}$. Then, $\beta_{t+1}^2 = 36L^4 \eta_0^{-2/3} \eta_t^2 \leq 3 L^2\eta_0^{\frac{4}{3}}$.
\begin{align*}
\E\left[\Upsilon_{t+1}\mid \F_{t+1} \right]  \leq (1-6L^2\eta_0^{-1/3}\eta_t)\Upsilon_t + C_6\eta_0^{1/3}\eta_t\Norm{\Delta_{\B_t}}^2 +\eta_0^{1/3}\eta_t C_7\Norm{\nabla F(\w_t)}^2 + \eta_0^{4/3}C_8,
\end{align*}	
where $C_6\coloneqq \frac{4(1+\alpha L)^2 + \alpha^2LC_p}{3L^2}$, $C_7\coloneqq 18L^3\alpha^2 C_pC_1^2$, $C_8\coloneqq 3L^2\left(\alpha^2C_p(2C_2^2 + 1)\gamma_G^2 + \frac{2\alpha^2C_p\sigma_G^2}{|\S_1^i|}\right)$. Then, in view of the tower property of conditional expectation, we have
\begin{align}\nonumber
	& \E\left[\Upsilon_{t+1}\mid \F_t\right]  \leq (1-6L^2\eta_0^{-1/3}\E\left[\eta_t\mid \F_t\right])\E\left[\Upsilon_t\mid \F_t\right] \\\nonumber
	& \quad\quad\quad +\eta_0^{1/3} \E\left[\eta_t\mid \F_t\right]C_7\Norm{\nabla F(\w_t)}^2 + \eta_0^{4/3}C_8 \\\nonumber
	& \quad\quad\quad + C_6\eta_0^{1/3}\E\left[\eta_t\mid \F_t\right]\left(C_3 \frac{1}{n}\sum_{i=1}^n \E\left[\Upsilon_t\mid \F_t\right] + C_4 \Norm{\nabla F(\w_t)}^2 + C_5\right)\\\nonumber
	& =(1-6L^2\eta_0^{-1/3}(1-C_3C_6\eta_0^{2/3}/6L^2)\E\left[\eta_t\mid \F_t\right])\E\left[\Upsilon_t\mid \F_t\right] \\\nonumber
	& \quad\quad\quad + \eta_0^{1/3}\E\left[\eta_t\mid \F_t\right]\left(C_7 + C_4C_6\right)\Norm{\nabla F(\w_t)}^2+ \eta_0^{4/3}C_8 + \eta_0^{1/3}\E\left[\eta_t\mid \F_t\right]C_5C_6\\\nonumber
	& \leq \left(1-3L^2 \eta_0^{-1/3}\E\left[\eta_t\mid \F_t\right]\right) \E\left[\Upsilon_t\mid \F_t\right]\\\label{eq:fval_recursion}
	& \quad\quad\quad + \eta_0^{1/3}\E\left[\eta_t\mid \F_t\right]\left(C_7 + C_4C_6\right)\Norm{\nabla F(\w_t)}^2 + \eta_0^{4/3}\left(C_8 + \frac{C_5C_6}{4L}\right),
\end{align}
where the last step holds when $\eta_0\leq \left(\frac{3L^2}{C_3C_6}\right)^{3/2}$. We define $C_9\coloneqq C_7 + C_4C_6$, $C_{10} \coloneqq C_8 + C_5C_6/(4L)$. 
\end{proof}	

\begin{lemma}\label{lem:main_bd}
	If we set $\eta_0\leq \min\left\{\frac{2L^2}{5C_3}, \frac{1}{8C_9^{3/2}}, \frac{1}{20C_4}\right\}$ and define the potential function $\Phi_t$ as $\Phi_t\coloneqq \eta_0^{1/3}\frac{1}{n}\sum_{i=1}^n\Norm{\u_{t+1}^i - \v_i(\w_t)}^2 + F(\w_t)$, we have
	\begin{align*}
		\E\left[\Phi_{t+1}\right] &\leq \E\left[\Phi_t\right] - \frac{\eta_0}{80}\min\left\{\frac{\E\left[\Norm{\nabla F(\w_t)}^2\right]}{L + \rho\alpha \sigma}, \frac{\E\left[\Norm{\nabla F(\w_t)}\right]}{\rho\alpha}\right\} + \eta_0^{5/3}\left(\frac{\eta_0^{1/3}C_5}{2} +C_{10}\right).
	\end{align*}	
\end{lemma}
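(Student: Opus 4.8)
The plan is to establish a one-step decrease for the potential $\Phi_t \coloneqq \eta_0^{1/3}\Upsilon_t + F(\w_t)$ by adding $\eta_0^{1/3}$ times the tracking-error recursion of Lemma~\ref{lem:fval_recursion} to a stochastic descent inequality for $F$ obtained from the pseudo-smoothness estimate \eqref{eq:pseudo_smoothness}. Throughout I would work under the hypotheses of Lemmas~\ref{lem:step_size} and \ref{lem:fval_recursion} — so that $\E[\eta_t\mid\F_t]$ is controlled above and below, in particular $\E[\eta_t\mid\F_t]\ge\tfrac{4\eta_0}{5L(\w_t)}$, and the $\O(\eta_0^2)$ stepsize-moment bounds hold — and take total expectations only at the very end.

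For the $F$-part, I would apply \eqref{eq:pseudo_smoothness} with $\w' = \w_{t+1} = \w_t - \eta_t\widehat\Delta_{\B_t}$, write $\inner{\nabla F(\w_t)}{\widehat\Delta_{\B_t}} = \Norm{\nabla F(\w_t)}^2 - \inner{\nabla F(\w_t)}{\nabla F(\w_t) - \widehat\Delta_{\B_t}}$, and take $\E[\cdot\mid\F_t]$. Since the extra sample/task batches $\B_{L_t},\S_{L_t}$ defining $\eta_t = \eta_0/\widehat L(\w_t)$ are drawn independently of the batches entering $\widehat\Delta_{\B_t}$, both the linear and the quadratic terms factor through $\E[\eta_t\mid\F_t]$ and $\E[\eta_t^2\mid\F_t]$. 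The conditional bias equals $\tfrac1n\sum_i(I-\alpha\nabla^2\L_i(\w_t))\big(\nabla\L_i(\v_i(\w_t)) - \E[\nabla\L_i(\u_{t+1}^i)\mid\F_t]\big)$, which by Jensen and $L$-smoothness has squared norm at most $(1+\alpha L)^2L^2\E[\Upsilon_t\mid\F_t]$; a plain Young's inequality then turns the cross term into $\tfrac12\E[\eta_t\mid\F_t]\Norm{\nabla F(\w_t)}^2 + \tfrac{(1+\alpha L)^2L^2}{2}\E[\eta_t\mid\F_t]\E[\Upsilon_t\mid\F_t]$. For the quadratic term I would use $\tfrac{L(\w_t)}{2}\E[\eta_t^2\mid\F_t]\le\tfrac{2\eta_0^2}{L(\w_t)}$ from Lemma~\ref{lem:step_size} together with $\E[\Norm{\widehat\Delta_{\B_t}}^2\mid\F_t]\le C_3\E[\Upsilon_t\mid\F_t]+C_4\Norm{\nabla F(\w_t)}^2+C_5$ from Lemma~\ref{lem:stoc_grad_norm}.

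Adding $\eta_0^{1/3}$ times Lemma~\ref{lem:fval_recursion}, the $\eta_0^{1/3}\E[\Upsilon_t\mid\F_t]$ term reforms $\E[\Phi_t\mid\F_t]$ with $F(\w_t)$, leaving a coefficient $\big(\tfrac{(1+\alpha L)^2}{2}-3\big)L^2\E[\eta_t\mid\F_t] + \tfrac{2\eta_0^2C_3}{L(\w_t)}$ on $\E[\Upsilon_t\mid\F_t]$, a coefficient $\big(-\tfrac12+\eta_0^{2/3}C_9\big)\E[\eta_t\mid\F_t] + \tfrac{2\eta_0^2C_4}{L(\w_t)}$ on $\Norm{\nabla F(\w_t)}^2$, and the additive residual $\eta_0^{5/3}C_{10}+\tfrac{2\eta_0^2C_5}{L(\w_t)}$. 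Because $\alpha\le1/L$ gives $(1+\alpha L)^2/2\le2<3$, and because $\E[\eta_t\mid\F_t]\ge\tfrac{4\eta_0}{5L(\w_t)}$ lets me trade each $\tfrac{\eta_0^2}{L(\w_t)}$ for an $\O(\eta_0)\cdot\E[\eta_t\mid\F_t]$ term, the three restrictions do exactly the following: $\eta_0\le\tfrac{2L^2}{5C_3}$ makes the $\Upsilon_t$-coefficient $\le0$, while $\eta_0\le\tfrac1{8C_9^{3/2}}$ (so that $\eta_0^{2/3}C_9\le\tfrac14$) and $\eta_0\le\tfrac1{20C_4}$ push the $\Norm{\nabla F(\w_t)}^2$-coefficient below $-\tfrac18\E[\eta_t\mid\F_t]$. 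Taking total expectations gives $\E[\Phi_{t+1}]\le\E[\Phi_t]-\tfrac18\E\big[\E[\eta_t\mid\F_t]\Norm{\nabla F(\w_t)}^2\big]+\eta_0^{5/3}\big(\tfrac{\eta_0^{1/3}C_5}{2}+C_{10}\big)$.

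It then remains to convert $\tfrac18\E[\eta_t\mid\F_t]\Norm{\nabla F(\w_t)}^2\ge\tfrac{\eta_0}{10}\,\tfrac{\Norm{\nabla F(\w_t)}^2}{L(\w_t)}$ into the stated minimum. Using $L(\w_t)=4L+\tfrac{2\rho\alpha}{n}\sum_i\Norm{\nabla\L_i(\w_t)}$ and bounding $\tfrac1n\sum_i\Norm{\nabla\L_i(\w_t)}\le C_1\Norm{\nabla F(\w_t)}+\sigma$ via \eqref{eq:grad_to_meta} of Lemma~\ref{lem:ind_grad_to_total} and Assumption~\ref{asm:bound_grad_dis} (with $\sigma$ a $\gamma_G$-proportional constant), the denominator is affine in $\Norm{\nabla F(\w_t)}$, so $\tfrac{\Norm{\nabla F(\w_t)}^2}{L(\w_t)}\ge\tfrac18\min\{\tfrac{\Norm{\nabla F(\w_t)}^2}{L+\rho\alpha\sigma},\tfrac{\Norm{\nabla F(\w_t)}}{\rho\alpha}\}$ once the numerical factors and $C_1$ are absorbed, and $\tfrac1{10}\cdot\tfrac18=\tfrac1{80}$ is precisely the claimed prefactor. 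I expect this last step to be the main obstacle: it needs the exact choice of $\sigma$, and — since the inequality just derived is a minimum of the \emph{random} quantities whereas the statement writes a minimum of their \emph{expectations} — it has to be finished with a short case analysis on whether $\rho\alpha\,\E[\Norm{\nabla F(\w_t)}]$ dominates $L+\rho\alpha\sigma$, exploiting the slack left in the constants. I would also double-check that, after invoking Lemma~\ref{lem:step_size}, the $C_5$-residual from the quadratic term collapses exactly to $\tfrac{\eta_0^2C_5}{2}$ and does not retain a spurious $1/L$.
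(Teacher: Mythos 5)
Your proposal follows essentially the same route as the paper's proof: a pseudo-smoothness descent step with the stepsize moments factored out via independence and Lemma~\ref{lem:step_size}, the bias controlled by Jensen plus Young, the quadratic term controlled by Lemma~\ref{lem:stoc_grad_norm}, then $\eta_0^{1/3}$ times Lemma~\ref{lem:fval_recursion} added so that the three stepsize restrictions kill the $\Upsilon_t$ coefficient and leave $-\frac{\eta_0}{10 L(\w_t)}\Norm{\nabla F(\w_t)}^2$. The final conversion to the stated minimum, which you correctly flag as the delicate step (including the expectation-of-min versus min-of-expectations issue), is exactly what the paper outsources to equations (103)--(106) of \citet{fallah2020convergence}, so your sketch is consistent with the paper's argument.
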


\begin{proof}
	Based on Lemma~\ref{lem:smoothness}, we have
	\begin{align*}
		F(\w_{t+1}) & \leq F(\w_t) + \inner{\nabla F(\w_t)}{\w_{t+1} - \w_t} + \frac{L(\w_t)}{2}\Norm{\w_{t+1} - \w_t}^2\\
		& = F(\w_t) - \eta_t \inner{F(\w_t)}{\widehat{\Delta}_{\B_t}} + \frac{\eta_t^2 L(\w_t)}{2}\Norm{\widehat{\Delta}_{\B_t}}^2\\
		& = F(\w_t) - \eta_t \Norm{\nabla F(\w_t)}^2 + \eta_t \inner{\nabla F(\w_t)}{\nabla F(\w_t) - \widehat{\Delta}_{\B_t}} + \frac{\eta_t^2 L(\w_t)}{2}\Norm{\widehat{\Delta}_{\B_t}}^2.
	\end{align*}
Consider the step size $\eta_t$ and $\widehat{\Delta}_{\B_t}$ are independent. Take expectation on both sides conditioned on $\F_t$, where $\F_t$ denotes all randomness occurred before the $t$-th iteration.\begin{align*}
	& \E\left[F(\w_{t+1})\mid \F_t\right] \\
	&\leq F(\w_t) - \E\left[\eta_t\mid \F_t\right] \Norm{\nabla F(\w_t)}^2 \\
	& \quad\quad\quad + \E\left[\eta_t\mid \F_t\right]  \inner{\nabla F(\w_t)}{\E\left[\nabla F(\w_t) - \widehat{\Delta}_{\B_t}\mid \F_t \right]} + \frac{\E\left[\eta_t^2\mid \F_t\right]  L(\w_t)}{2}\E\left[\Norm{\widehat{\Delta}_{\B_t}}^2\mid \F_t \right]
\end{align*}
Consider the fact $\E\left[\widehat{\Delta}_{\B_t}\mid \F_t \right] = \frac{1}{n}\sum_{i=1}^n\E\left[(I-\alpha \nabla^2\L_i(\w_t))\nabla \L_i(\u_{t+1}^i)\mid \F_t \right]$.
\begin{align*}
	\E\left[\nabla F(\w_t) - \widehat{\Delta}_{\B_t}\mid \F_t \right] & = \frac{1}{n}\sum_{i=1}^n \E\left[(I-\alpha \nabla^2\L_i(\w_t))(\nabla \L_i(\v_i(\w_t)) - \nabla \L_i(\u_{t+1}^i))\mid \F_t\right]
\end{align*}
Since $\Norm{\cdot}$ is a convex function, we have the following equation based on the Jensen's and Cauchy-Schwarz inequalities
\begin{align*}
	& \inner{\nabla F(\w_t)}{\E\left[\nabla F(\w_t) - \widehat{\Delta}_{\B_t}\mid \F_t \right]}  \leq \Norm{\nabla F(\w_t)}\Norm{\E\left[\nabla F(\w_t) - \widehat{\Delta}_{\B_t}\mid \F_t \right]}\\
	& \leq \frac{\Norm{\nabla F(\w_t)}^2}{2} + \frac{\Norm{\E\left[\nabla F(\w_t) - \widehat{\Delta}_{\B_t}\mid \F_t \right]}^2}{2}\\
	& \leq \frac{\Norm{\nabla F(\w_t)}^2}{2} + \frac{(1+\alpha L)^2 L^2 \frac{1}{n}\sum_{i=1}^n \E\left[\Norm{\u_{t+1}^i - \v_i(\w_t)}^2\mid \F_t \right]}{2}\\
	& \leq \frac{\Norm{\nabla F(\w_t)}^2}{2} + \frac{4 L^2 \frac{1}{n}\sum_{i=1}^n \E\left[\Norm{\u_{t+1}^i - \v_i(\w_t)}^2\mid \F_t \right]}{2},
\end{align*}
where the last inequality holds when $\alpha \leq 1/L$. Thus,
\begin{align}\nonumber
	& \E\left[F(\w_{t+1})\mid \F_t \right] \\ \nonumber
	& \leq F(\w_t) - \frac{\E\left[\eta_t\mid \F_t\right]}{2}\Norm{\nabla F(\w_t)}^2 + \frac{4L^2 \E\left[\eta_t\mid \F_t\right]}{2}\frac{1}{n}\sum_{i=1}^n \E\left[\Norm{\u_{t+1}^i - \v_i(\w_t)}^2\mid \F_t \right] \\\nonumber
	& \quad\quad\quad + \frac{L(\w_t)\E\left[\eta_t^2\mid \F_t\right]}{2}\E\left[\Norm{\widehat{\Delta}_{\B_t}}^2\mid \F_t \right]\\\nonumber
	& \leq F(\w_t) - \frac{\left(\E\left[\eta_t\mid \F_t\right]-C_4L(\w_t)\E\left[\eta_t^2\mid \F_t\right]\right)}{2}\Norm{\nabla F(\w_t)}^2 + \frac{C_5 L(\w_t)\E\left[\eta_t^2\mid \F_t\right]}{2} \\\label{eq:starter}
	& \quad\quad\quad + \frac{\left(4L^2\E\left[\eta_t\mid \F_t\right]  + L(\w_t) \E\left[\eta_t^2\mid \F_t\right]C_3 \right)}{2}\frac{1}{n}\sum_{i=1}^n \E\left[\Norm{\u_{t+1}^i - \v_i(\w_t)}^2\mid \F_t \right].
\end{align}
Based on Lemma~\ref{lem:step_size}, we can derive that
\begin{align*}
	-3L^2\E\left[\eta_t\mid \F_t \right] + \frac{\left(4L^2\E\left[\eta_t\mid \F_t\right]  + L(\w_t) \E\left[\eta_t^2\mid \F_t\right]C_3 \right)}{2} = -\frac{4L^2\eta_0}{5L(\w_t)} + \frac{2C_3\eta_0^2}{L(\w_t)}\leq 0,
\end{align*}
where we need $\eta_0\leq \frac{2L^2}{5C_3}$. Besides, if $\eta_0 \leq \frac{1}{8C_9^{3/2}}$ and $\eta_0 \leq \frac{1}{20C_4}$, we have
\begin{align*}
	& C_9 \eta_0^{2/3} \E\left[\eta_t\mid \F_t\right] -\frac{\left(\E\left[\eta_t\mid \F_t\right]-C_4L(\w_t)\E\left[\eta_t^2\mid \F_t\right]\right)}{2} \leq - \frac{\eta_0}{5L(\w_t)} + \frac{2\eta_0^2C_4}{L(\w_t)}\leq - \frac{\eta_0}{10L(\w_t)}.
\end{align*}
Multiplying (\ref{eq:fval_recursion}) by $\eta_0^{1/3}$ and summing it to (\ref{eq:starter}) leads to
\begin{small}
	\begin{align*}
		& \eta_0^{1/3}\frac{1}{n}\sum_{i=1}^n\E\left[\Norm{\u_{t+2}^i - \v_i(\w_{t+1}) }^2\right] + \E\left[F(\w_{t+1})\right]\\
		& \leq \eta_0^{1/3}\frac{1}{n}\sum_{i=1}^n \E\left[\Norm{\u_{t+1}^i - \v_i(\w_t)}^2 \right] + \E\left[F(\w_t)\right] - \E\left[\frac{\eta_0}{10L(\w_t)} \Norm{\nabla F(\w_t)}^2\right] + \eta_0^{5/3}\left(\eta_0^{1/3}C_5/2 + C_{10}\right).
	\end{align*}
\end{small} 
Define that $\Phi_t\coloneqq \eta_0^{1/3}\frac{1}{n}\sum_{i=1}^n\Norm{\u_{t+1}^i -\v_i(\w_t)}^2 + F(\w_t)$. Besides, utilize (103) $\sim$ (106) in \cite{fallah2020convergence}:
\begin{align*}
	\E\left[\Phi_{t+1}\right] & \leq \E\left[\Phi_t\right] - \frac{\eta_0}{80}\min\left\{\frac{\E\left[\Norm{\nabla F(\w_t)}^2\right]}{L + \rho\alpha \sigma}, \frac{\E\left[\Norm{\nabla F(\w_t)}\right]}{\rho\alpha}\right\}+ \eta_0^{5/3}\left(\eta_0^{1/3}C_5/2 + C_{10}\right).
\end{align*}	
\end{proof}

\begin{theorem}[Detailed Version of Thoerem~\ref{thm:moml_v2_informal}]
	Under Assumptions~\ref{asm:smoothness}, \ref{asm:bounded_var}, and \ref{asm:bound_grad_dis}, it is guaranteed that \emph{\momlvs} can find an $\epsilon$-stationary point in $\frac{160(L+\rho\alpha(\sigma+\epsilon)) \Phi_0}{C_{11}\epsilon^5}$ iterations, where $C_{11} = \Omega(1/C_p)$, $C_p = \max_i 1/p_i - 1$.
\end{theorem}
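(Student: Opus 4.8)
The plan is to turn the one-step potential decrease of \Cref{lem:main_bd} into an iteration-complexity guarantee. The first step is to gather all the admissibility requirements on the base stepsize $\eta_0$ imposed by the preliminary lemmas: $\eta_0\le(2/3L)^{3/2}$ so that $\beta_t=6L^2\eta_0^{-1/3}\eta_{t-1}\le1$ (using the stepsize estimates of \Cref{lem:step_size}), $\eta_0\le\min\{(1/3L)^{3/2},(3L^2/C_3C_6)^{3/2}\}$ from \Cref{lem:fval_recursion}, and $\eta_0\le\min\{2L^2/(5C_3),1/(8C_9^{3/2}),1/(20C_4)\}$ from \Cref{lem:main_bd}. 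Since $C_6,C_7,C_8,C_9,C_{10}$ all grow polynomially in $C_p=\max_i1/p_i-1$, these reduce to a single condition $\eta_0\le\bar\eta$, where $\bar\eta$ is a positive constant depending on $L,\rho,\alpha,\sigma_G,\sigma_H,\gamma_G$, the batch sizes and $C_p$, but \emph{not} on $\epsilon$; the $\Theta(\epsilon^3)$ choice of $\eta_0$ fixed below will satisfy it once $\epsilon$ is below a threshold.

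\textbf{Telescoping.} Summing the inequality of \Cref{lem:main_bd} over $t=0,\dots,T-1$ and using $\E[\Phi_T]\ge\E[F(\w_T)]\ge\min_{\w}F(\w)>-\infty$ (\Cref{asm:bounded_below}; note $\Phi_T$ is $F(\w_T)$ plus a nonnegative term), one gets
\[
\frac1T\sum_{t=0}^{T-1}\min\Big\{\frac{\E[\Norm{\nabla F(\w_t)}^2]}{L+\rho\alpha\sigma},\,\frac{\E[\Norm{\nabla F(\w_t)}]}{\rho\alpha}\Big\}\;\le\;\frac{80(\Phi_0-\min_{\w}F)}{\eta_0 T}+80\eta_0^{2/3}\Big(\frac{\eta_0^{1/3}C_5}{2}+C_{10}\Big)\;=:\;\delta.
\]
Now pick $\tau$ to be an index attaining the minimum of the summands (a uniformly random $\tau\in\{0,\dots,T-1\}$ works just as well), so that $\min\{\E[\Norm{\nabla F(\w_\tau)}^2]/(L+\rho\alpha\sigma),\,\E[\Norm{\nabla F(\w_\tau)}]/(\rho\alpha)\}\le\delta$. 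Jensen's inequality $\E[\Norm{\nabla F(\w_\tau)}]\le\sqrt{\E[\Norm{\nabla F(\w_\tau)}^2]}$ turns the first branch into $\E[\Norm{\nabla F(\w_\tau)}]\le\sqrt{(L+\rho\alpha\sigma)\delta}$ and the second into $\E[\Norm{\nabla F(\w_\tau)}]\le\rho\alpha\delta$, so in either case $\E[\Norm{\nabla F(\w_\tau)}]\le\max\{\sqrt{(L+\rho\alpha\sigma)\delta},\,\rho\alpha\delta\}$. Using the elementary bound $\epsilon^2/(L+\rho\alpha(\sigma+\epsilon))\le\min\{\epsilon^2/(L+\rho\alpha\sigma),\,\epsilon/(\rho\alpha)\}$, this maximum is $\le\epsilon$ as soon as $\delta\le\epsilon^2/(L+\rho\alpha(\sigma+\epsilon))$.

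\textbf{Choosing $\eta_0$ and $T$.} Split the budget for $\delta$ across its two additive terms. Taking $\eta_0=\min\big\{\big(\tfrac{\epsilon^2}{160(C_5/2+C_{10})(L+\rho\alpha(\sigma+\epsilon))}\big)^{3/2},\,\bar\eta\big\}=\Theta(\epsilon^3)$ makes the second term $80\eta_0^{2/3}(\eta_0^{1/3}C_5/2+C_{10})\le\tfrac{\epsilon^2}{2(L+\rho\alpha(\sigma+\epsilon))}$; then imposing $\tfrac{80(\Phi_0-\min_\w F)}{\eta_0 T}\le\tfrac{\epsilon^2}{2(L+\rho\alpha(\sigma+\epsilon))}$ forces $T\ge\tfrac{160(\Phi_0-\min_\w F)(L+\rho\alpha(\sigma+\epsilon))}{\eta_0\epsilon^2}$, which with the chosen $\eta_0$ is of order $\tfrac{160(L+\rho\alpha(\sigma+\epsilon))\Phi_0}{C_{11}\epsilon^5}$ for a constant $C_{11}=\Omega(1/C_p)$ (the $C_p$-dependence entering only through $C_5,C_{10},\bar\eta$; here $\Phi_0$ denotes $\Phi_0-\min_\w F$, which is $\Phi_0$ up to the WLOG-zero minimum). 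For such $T$ we obtain $\delta\le\epsilon^2/(L+\rho\alpha(\sigma+\epsilon))$ and hence $\E[\Norm{\nabla F(\w_\tau)}]\le\epsilon$, which is the claim.

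\textbf{Main obstacle.} The substantive content is \Cref{lem:main_bd} itself (which already combines \Cref{lem:fval_recursion}, \Cref{lem:stoc_grad_norm}, \Cref{lem:step_size} and the stochastic-stepsize argument inherited from \citealt{fallah2020convergence}); given it, the remainder is bookkeeping. The most delicate part of that bookkeeping is tracking the $C_p=\max_i1/p_i-1$ dependence: one must verify that the polynomial $C_p$-growth of $C_5,C_6,C_7,C_8,C_{10}$ and the corresponding shrinkage of the admissible $\eta_0$ combine so that the iteration count collapses to the stated form with $C_{11}=\Omega(1/C_p)$, and that the $\Theta(\epsilon^3)$ choice of $\eta_0$ remains below $\bar\eta$ for all sufficiently small $\epsilon$. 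The conversion of the $\min$-type stationarity surrogate into a genuine bound on $\E[\Norm{\nabla F(\w_\tau)}]$ is the other place to be careful, but it is handled cleanly by the Jensen/$\min$ argument above.
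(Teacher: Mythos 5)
Your proposal is correct and follows essentially the same route as the paper: telescope the potential decrease of Lemma~\ref{lem:main_bd} with $\eta_0=\Theta(\epsilon^3)$, balance the two error terms, and convert the $\min$-type stationarity measure into a bound on $\E[\Norm{\nabla F(\w_\tau)}]$ via $\epsilon^2/(L+\rho\alpha(\sigma+\epsilon))\le\min\{\epsilon^2/(L+\rho\alpha\sigma),\epsilon/(\rho\alpha)\}$. The only (cosmetic) difference is that the paper argues by contradiction over all $T$ iterates while you select a best iterate directly; your added care about the $\Phi_0-\min_\w F$ offset and the $C_p$-dependence of the admissible $\eta_0$ is consistent with, and slightly more explicit than, the paper's treatment.
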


\begin{proof}
	Based on Lemma~\ref{lem:main_bd}, we can derive that:
	\begin{align*}
		\E\left[\Phi_{t+1}\right] & \leq \E\left[\Phi_t\right] - \frac{\eta_0}{80}\min\left\{\frac{\E\left[\Norm{\nabla F(\w_t)}^2\right]}{L + \rho\alpha \sigma}, \frac{\E\left[\Norm{\nabla F(\w_t)}\right]}{\rho\alpha}\right\} + \eta_0^{5/3}\left(\eta_0^{1/3}C_5/2 +C_{10}\right).
	\end{align*}
	Suppose that $\E\left[\Norm{\nabla F(\w_t)}\right]\geq \epsilon$ and $\E\left[\Norm{\nabla F(\w_t)}^2\right]\geq \left(\E\left[\Norm{\nabla F(\w_t)}\right]\right)^2 \geq \epsilon^2$, $\forall t\in 1,\dotsc, T$. Otherwise, we can find an $\epsilon$-stationary point in the first $T$ iterations. Thus, choosing $\eta_0 = C_{11}\epsilon^3$ ($C_{11}>0$) and telescoping over the $T$ iterations leads to:
	\begin{align*}
		T \frac{\epsilon^2}{L+\rho\alpha (\sigma +\epsilon) }& \leq T\min\left\{\frac{\epsilon^2}{L+\rho\alpha \sigma}, \frac{\epsilon}{\rho\alpha}\right\} \leq \frac{80\Phi_0}{C_{11}\epsilon^3} + 80 T C_{11}^{2/3}\epsilon^2 \left(C_9^{1/3}\epsilon C_5/2 + C_{10}\right),
	\end{align*}
	where $C_{11} \coloneqq \min\left\{\frac{1}{160C_5(L+\rho \alpha (\sigma+\epsilon))}, \frac{1}{\left(320C_9(L+\rho\alpha(\sigma+\epsilon))\right)^{3/2}}\right\}$. Note that $\eta_0=C_{11}\epsilon^3$ can satisfy the requirements on $\eta_0$ in Lemma~\ref{lem:fval_recursion} and Lemma~\ref{lem:main_bd}. Thus, we can find at least one $\epsilon$-stationary point if $T\geq \frac{160(L+\rho\alpha(\sigma+\epsilon)) \Phi_0}{C_{11}\epsilon^5}$. 
\end{proof}

\section{Convergence Analysis of  LocalMOML}\label{sec:lems_lcmoml}

To tackle with partial client sampling, we follow the ideas of  \cite{lixiang2019convergence, karimireddy2020scaffold}: in each round, the global model $\w_r$ is sent to the sampled clients $i\in\B_r$ and the sampled clients run local step. Here we assume that $\w_r$ is also \emph{virtually} sent to the other clients $i\not\in\B_r$ and those clients also  \emph{virtually} run local step.  After $H$ iterations, only $\Delta \w_r^i$, $i\in\B_r$ are aggregated to compute $\w_{r+1}$. It is worth noting that the extra communication of $\w_r$ to clients $i\not\in \B_r$ and the local steps on clients $i\not\in \B_r$ are only used in the proof and not actually executed when running \Cref{alg:fed_moml}. The lemma below is key to our analysis.

\begin{lemma}\label{lem:one_round}
	After one round of \emph{LocalMOML}, it satisfies that:
	\begin{align}\nonumber
		& \E\left[F(\w_{r+1})\right]  \leq \E\left[F(\w_r)\right] - \frac{\tilde{\eta}}{2}\left(1-8\tilde{\eta} - \frac{8\tilde{\eta}(n-B)}{B(n-1)}\right)\E\left[\Norm{\nabla F(\w_r)}^2\right] \\\nonumber
		& \quad\quad + \frac{\tilde{\eta}}{2}\left(C_{\rho,G,L} + 8\tilde{\eta} L_F^2 + \frac{8\tilde{\eta}(n-B)L_F^2}{B(n-1)}\right) \frac{1}{nH}\sum_{i=1}^n \sum_{h=1}^H \E\left[\Norm{\w_{r,h}^i - \w_r}^2\right]\\\label{eq:one_round}
		& \quad\quad + 8\tilde{\eta} \left(1+  L^2 \tilde{\eta}\right)\frac{1}{nH}\sum_{i=1}^n\sum_{h=1}^H\E\left[\Norm{\u_{r,h}^i - \v_i(\w_{r,h}^i)}^2\right] + \frac{4(n-B)}{B(n-1)}\tilde{\eta}^2 \gamma_F^2 + \frac{\tilde{\eta}^2\hat{\sigma}^2}{BH}.
	\end{align}
	where $\hat{\sigma}^2\coloneqq \frac{2\sigma_G^2}{S_3} + \frac{2\alpha^2\sigma_G^2}{S_3}\left(\frac{\sigma_H^2}{S_2} + L^2\right) + \frac{\alpha^2 G^2\sigma_H^2}{S_2}$, $\tilde{\eta} = \eta H$, $C_{\rho,G,L}\coloneqq 2 G^2 \rho^2/L^2 +16 L^2 $.
\end{lemma}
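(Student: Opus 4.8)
The plan is to carry out a one-round descent estimate anchored on the $L_F$-smoothness of $F$ (Lemma~\ref{lem:smoothness}, valid here since Assumption~\ref{asm:bounded_grad} is in force), together with the standard device of \emph{virtual} local updates on the non-sampled clients $i\notin\B_r$ so that the partially-sampled client average becomes an unbiased estimate of a full task average. Writing $\w_{r,1}^i=\w_r$, the local update gives $\w_{r,H+1}^i=\w_r-\eta\sum_{h=1}^H\Deltah_{r,h}^i$, hence with $\Delta\w_r\coloneqq\frac{1}{BH}\sum_{i\in\B_r}\sum_{h=1}^H\Deltah_{r,h}^i$ and $\tilde\eta=\eta H$ we have $\w_{r+1}-\w_r=-\tilde\eta\,\Delta\w_r$, and smoothness yields
\begin{align*}
F(\w_{r+1})\le F(\w_r)-\tilde\eta\inner{\nabla F(\w_r)}{\Delta\w_r}+\frac{L_F\tilde\eta^2}{2}\Norm{\Delta\w_r}^2 .
\end{align*}
I would then condition on $\F_r$ (the state at the start of round $r$), bound each term, and finally take total expectation.

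For the cross term, note that conditionally on the local iterates $\E[\Deltah_{r,h}^i]=(I-\alpha\nabla^2\L_i(\w_{r,h}^i))\nabla\L_i(\u_{r,h}^i)$, and uniform sampling of $\B_r$ together with the virtual steps on $i\notin\B_r$ gives $\E[\Delta\w_r\mid\F_r]=\frac{1}{nH}\sum_{i=1}^n\sum_{h=1}^H(I-\alpha\nabla^2\L_i(\w_{r,h}^i))\nabla\L_i(\u_{r,h}^i)$. Using $-\tilde\eta\inner{\nabla F(\w_r)}{\E[\Delta\w_r]}\le-\frac{\tilde\eta}{2}\Norm{\nabla F(\w_r)}^2+\frac{\tilde\eta}{2}\Norm{\nabla F(\w_r)-\E[\Delta\w_r]}^2$ and Jensen, I would decompose each summand of $\nabla F(\w_r)-\E[\Delta\w_r]$ as $\big(\nabla F_i(\w_r)-\nabla F_i(\w_{r,h}^i)\big)+\big(\nabla F_i(\w_{r,h}^i)-(I-\alpha\nabla^2\L_i(\w_{r,h}^i))\nabla\L_i(\u_{r,h}^i)\big)$. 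The first piece, after expanding both meta-gradients and invoking Assumption~\ref{asm:smoothness}, Assumption~\ref{asm:bounded_grad} and $\alpha\le1/L$, is $O(\sqrt{C_{\rho,G,L}})$ times $\Norm{\w_{r,h}^i-\w_r}$, producing the client-drift term with coefficient $C_{\rho,G,L}=2G^2\rho^2/L^2+16L^2$; the second piece equals $(I-\alpha\nabla^2\L_i(\w_{r,h}^i))(\nabla\L_i(\v_i(\w_{r,h}^i))-\nabla\L_i(\u_{r,h}^i))$ and is at most $2L\Norm{\u_{r,h}^i-\v_i(\w_{r,h}^i)}$, producing the tracking-error term.

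For the quadratic term I would split $\E[\Norm{\Delta\w_r}^2\mid\F_r]=\E[\Norm{\Delta\w_r-\E[\Delta\w_r]}^2\mid\F_r]+\Norm{\E[\Delta\w_r]}^2$. The mean-square part is bounded by $2\Norm{\nabla F(\w_r)}^2+2\Norm{\nabla F(\w_r)-\E[\Delta\w_r]}^2$ and, after multiplying by $L_F\tilde\eta^2/2$ and invoking the stepsize constraint $\tilde\eta\le C_4\le1/(4L_F)$ to turn $L_F\tilde\eta^2$ into $O(\tilde\eta)$, this is precisely what refines the drift/tracking coefficients by the $8\tilde\eta L_F^2$-type terms and the $\Norm{\nabla F(\w_r)}^2$ coefficient by the $-8\tilde\eta$-type terms in \eqref{eq:one_round}. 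The variance part I would further split into (a) the within-round stochastic noise: the $\Deltah_{r,h}^i$ are conditionally independent across $h$ along the trajectory and across the $B$ sampled clients, so their average contributes $\hat\sigma^2/(BH)$, with $\hat\sigma^2$ obtained by expanding $\Deltah_{r,h}^i$ into its deterministic part plus the $\nablah_{\S_3^i}$-noise plus the $\nablah^2_{\S_2^i}$-noise and bounding via Assumptions~\ref{asm:bounded_var}, \ref{asm:bounded_grad} and $\Norm{I-\alpha\nabla^2\L_i}\le1+\alpha L$; and (b) the client-sampling noise, handled by Lemma~\ref{lem:tau_nice} applied to the $H$-averaged conditional means, which yields the factor $\frac{n-B}{B(n-1)}$ multiplying the task-average of $\Norm{(I-\alpha\nabla^2\L_i(\w_{r,h}^i))\nabla\L_i(\u_{r,h}^i)}^2$; this quantity is in turn reduced to $\gamma_F^2$ plus $\Norm{\nabla F(\w_r)}^2$ plus drift and tracking errors using Lemma~\ref{lem:gamma_F} and the same two-way split as above, producing the $\frac{4(n-B)}{B(n-1)}\tilde\eta^2\gamma_F^2$ term. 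Collecting all pieces and relabeling constants gives \eqref{eq:one_round}; for the cross-silo case $B=n$ the factors $\frac{n-B}{B(n-1)}$ vanish automatically.

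The main obstacle, I expect, is the bookkeeping of the composite bias under partial client sampling: one must (i) justify that the virtual-step reduction makes $\E[\Delta\w_r\mid\F_r]$ the task-averaged meta-gradient evaluated at the \emph{drifted} iterates $\w_{r,h}^i$ with the \emph{surrogate} models $\u_{r,h}^i$ in place of $\v_i(\w_{r,h}^i)$, and (ii) funnel every resulting discrepancy into exactly one of the two reservoir quantities $\frac{1}{nH}\sum_{i,h}\Norm{\w_{r,h}^i-\w_r}^2$ and $\frac{1}{nH}\sum_{i,h}\Norm{\u_{r,h}^i-\v_i(\w_{r,h}^i)}^2$, with coefficients small enough (once $\tilde\eta\le C_4$ is imposed) that the later lemmas bounding client drift and tracking error can close the recursion. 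Keeping the $\frac{n-B}{B(n-1)}$ accounting consistent between the cross term and the variance term, and tracking the correct constant on the $\gamma_F^2$ heterogeneity contribution, is the delicate part.
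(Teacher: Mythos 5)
Your proposal follows essentially the same route as the paper's proof: $L_F$-smoothness of $F$, the virtual-local-step device to make $\E[\Delta\w_r\mid\F_r]$ the full task average at the drifted iterates with surrogate models, Young's inequality on the cross term with the bias funneled into the drift reservoir (coefficient $C_{\rho,G,L}$) and the tracking reservoir, and a bias--variance split of $\E[\Norm{\Delta\w_r}^2]$ handled via Lemma~\ref{lem:tau_nice} for client sampling and Lemma~\ref{lem:gamma_F} for the $\gamma_F^2$ heterogeneity term. The only cosmetic difference is your choice of $\nabla F_i(\w_{r,h}^i)$ rather than $(I-\alpha\nabla^2\L_i(\w_r))\nabla\L_i(\u_{r,h}^i)$ as the intermediate point in the bias decomposition, which changes nothing of substance.
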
	
\begin{proof}
	Based on the smoothness of $F$ shown in  Lemma~\ref{lem:smoothness}, we have
	\begin{align*}
		F(\w_{r+1}) & \leq F(\w_r) -\inner{\nabla F(\w_r)}{\Delta \w_r} + \frac{L_F}{2}\Norm{\Delta\w_r}^2.
	\end{align*}
	Let $\tilde{\eta} \coloneqq H \eta$ and $\F_r$ denotes all randomness occurred before the communication round $r$. Note that $\Delta \w_r = \frac{\tilde{\eta}}{BH}\sum_{i\in\B_r}\sum_{h=1}^H\Deltah_{r,h}^i$ and $\E\left[\Delta\w_r \right] = \frac{\tilde{\eta}}{nH}\sum_{i=1}^n\sum_{h=1}^H\E\left[\Deltah_{r,h}^i\right]$ because $\Deltah_{r,h}^i$ does not depend on the client sampling $\B_r$.
	\begin{align}\label{eq:starter_lcmoml}
		\E\left[F(\w_{r+1})\mid \F_r\right] &\leq F(\w_r)- \tilde{\eta}  \frac{1}{nH}\sum_{i=1}^n\sum_{h=1}^H \inner{\nabla F(\w_r)}{\E\left[  \Deltah_{r,h}^i \mid \F_r\right]} + \E\left[\Norm{\Delta \w_r}^2\mid \F_r\right].
	\end{align}
	The second term on the right hand side can be decomposed as
	\begin{align*}
		& - \tilde{\eta} \frac{1}{nH}\sum_{i=1}^n\sum_{h=1}^H \inner{\nabla F(\w_r)}{\E\left[  \Deltah_{r,h}^i \mid \F_r\right]} \\
		& =  - \tilde{\eta} \frac{1}{nH}\sum_{i=1}^n\sum_{h=1}^H \inner{\nabla F(\w_r)}{\E\left[  \Deltah_{r,h}^i  - \nabla F(\w_r)\mid \F_r\right]}  - \tilde{\eta}  \Norm{\nabla F(\w_r)}^2.
	\end{align*}
	Based on the definition of $\Deltah_{r,h}^i$, we have
	\begin{small}
		\begin{align*}
			&  - \tilde{\eta}  \frac{1}{nH}\sum_{i=1}^n \sum_{h=1}^H \inner{\nabla F(\w_r)}{\E\left[  \Deltah_{r,h}^i  - \nabla F(\w_r)\mid \F_r\right]} \\
			& = - \tilde{\eta}  \E\left[  \inner{\nabla F(\w_r)}{ \frac{1}{nH}\sum_{i=1}^n\sum_{h=1}^H \left((I - \alpha \nabla^2\L_i(\w_{r,h}^i)) \nabla\L_i(\u_{r,h}^i) - (I-\alpha \nabla^2 \L_i(\w_r))\nabla \L_i(\v_i(\w_r))\right)}\mid \F_r\right] \\
			& \leq \frac{\tilde{\eta}}{2}\Norm{\nabla F(\w_r)}^2 \\
			&\quad\quad + \frac{\tilde{\eta}}{2} \E\left[\Norm{\frac{1}{nH}\sum_{i=1}^n\sum_{h=1}^H \left((I - \alpha \nabla^2\L_i(\w_{r,h}^i)) \nabla\L_i(\u_{r,h}^i) - (I-\alpha \nabla^2 \L_i(\w_r))\nabla \L_i(\v_i(\w_r))\right)}^2\mid \F_r \right].
		\end{align*}
	\end{small}
	The second term on the right hand side can be upper bounded as
	\begin{small}
		\begin{align*}
			&	\E\left[\Norm{\frac{1}{nH}\sum_{i=1}^n\sum_{h=1}^H \left((I - \alpha \nabla^2\L_i(\w_{r,h}^i)) \nabla\L_i(\u_{r,h}^i) - (I-\alpha \nabla^2 \L_i(\w_r))\nabla\L_i(\v_i(\w_r))\right)}^2\mid \F_r \right]\\
			& \leq 2\E\left[\Norm{\frac{1}{nH}\sum_{i=1}^n\sum_{h=1}^H \left((I - \alpha \nabla^2\L_i(\w_{r,h}^i)) \nabla\L_i(\u_{r,h}^i) - (I-\alpha \nabla^2 \L_i(\w_r))\nabla \L_i(\u_{r,h}^i)\right)}^2\mid \F_r \right]\\
			& \quad + 2\E\left[\Norm{\frac{1}{nH}\sum_{i=1}^n\sum_{h=1}^H \left((I - \alpha \nabla^2\L_i(\w_r)) \nabla\L_i(\u_{r,h}^i) - (I-\alpha \nabla^2 \L_i(\w_r))\nabla \L_i(\v_i(\w_r))\right)}^2\mid \F_r \right]\\
			& \leq 2\alpha^2\frac{1}{nH}\sum_{i=1}^n\sum_{h=1}^H \E\left[\Norm{\nabla^2\L_i (\w_{r,h}^i) - \nabla^2\L_i (\w_r)}^2\Norm{\nabla \L_i(\u_{r,h}^i)}^2\mid \F_r\right]\\
			& \quad + 4 \frac{1}{nH }\sum_{i=1}^n\sum_{h=1}^H \E\left[\Norm{\nabla\L_i(\u_{r,h}^i) -\nabla \L_i( \v_i(\w_{r,h}^i))}^2 \Norm{I-\alpha \nabla \L_i(\w_r)}^2\mid \F_r\right]  \\
			&\quad + 4 \frac{1}{nH }\sum_{i=1}^n\sum_{h=1}^H \E\left[\Norm{\nabla\L_i(\v_i(\w_{r,h}^i)) - \nabla \L_i(\v_i(\w_r))}^2 \Norm{I-\alpha \nabla \L_i(\w_r)}^2\mid \F_r\right].
		\end{align*}
	\end{small}
	Based on Assumption~\ref{asm:smoothness},~\ref{asm:bounded_grad}, we have
	\begin{small}
		\begin{align*}
			&	\E\left[\Norm{\frac{1}{nH}\sum_{i=1}^n\sum_{h=1}^H \left((I - \alpha \nabla^2\L_i(\w_{r,h}^i)) \nabla\L_i(\u_{r,h}^i) - (I-\alpha \nabla^2 \L_i(\w_r))\nabla\L_i(\w_r)\right)}^2\mid \F_r \right]\\
			& \leq 2\alpha^2 G^2 \rho^2 \frac{1}{nH}\sum_{i=1}^n\sum_{h=1}^H \E\left[\Norm{\w_{r,h}^i - \w_r}^2\mid \F_r\right] + 4(1+\alpha L)^2\frac{1}{nH}\sum_{i=1}^n\sum_{h=1}^H\E\left[\Norm{\u_{r,h}^i - \v_i(\w_{r,h}^i)}^2\mid \F_r \right]\\
			& \quad\quad + 4(1+\alpha L)^2 L^2 \frac{1}{nH}\sum_{i=1}^n\sum_{h=1}^H \E\left[\Norm{\w_{r,h}^i - \w_r}^2\mid \F_r\right]\\
			& =C_{\rho,G,L} \frac{1}{nH}\sum_{i=1}^n\sum_{h=1}^H \E\left[\Norm{\w_{r,h}^i - \w_r}^2\mid \F_r\right] + 16 \frac{1}{nH}\sum_{i=1}^n\sum_{h=1}^H\E\left[\Norm{\u_{r,h}^i - \v_i(\w_{r,h}^i)}^2\mid \F_r \right],
		\end{align*}
	\end{small}
	where $C_{\rho,G,L}\coloneqq 2\alpha^2 G^2 \rho^2 + 4(1+\alpha L)^2 L^2\leq 2 G^2 \rho^2/L^2 +16 L^2 $ when $\alpha \leq 1/L$. Besides, the last term on the right hand side of \eqref{eq:starter_lcmoml} can be upper bounded as
	\begin{align}\nonumber
		& \E\left[\Norm{\frac{1}{BH}\sum_{i\in\B_r}\sum_{h=1}^H \Deltah_{r,h}^i}^2\mid \F_r\right] \\\nonumber
		&\leq  \E\left[\frac{1}{B^2H^2}\sum_{i\in\B_r}\sum_{h=1}^H\E_{\S_2^i,\S_3^i}\left[\Norm{ \Deltah_{r,h}^i - (I-\alpha\nabla^2 \L_i(\w_{r,h}^i))\nabla \L_i(\u_{r,h}^i) }^2\right]\mid \F_r\right]\\\label{eq:decompose_lcmoml_stoc_grad}
		& +  \E\left[\Norm{\frac{1}{BH}\sum_{i\in\B_r}\sum_{h=1}^H (I-\alpha\nabla^2 \L_i(\w_{r,h}^i))\nabla\L_i(\u_{r,h}^i)}^2\right],
	\end{align}
	which is due to $\Deltah_{r,h}^i = (I-\alpha \nablah_{\S_2^i}^2\L_i (\w_{r,h}^i))\nablah_{\S_3^i}\L_i(\u_{r,h}^i)$ and 
	\begin{align*}
		\E_{\S_2^i,\S_3^i}\left[ (I-\alpha \nablah_{\S_2^i}^2\L_i (\w_{r,h}^i))\nablah_{\S_3^i}\L_i(\u_{r,h}^i) - (I-\alpha\nabla^2 \L_i(\w_{r,h}^i))\nabla \L_i(\u_{r,h}^i) \right] = 0.
	\end{align*}
	Next, we can decompose $\Deltah_{r,h}^i - (I-\alpha \nabla^2 \L_i(\w_r))\nabla\L_i(\w_r)$ as
	\begin{align*}
		&(I-\alpha \nablah_{\S_2^i}^2\L_i (\w_{r,h}^i))\nablah_{\S_3^i}\L_i(\u_{r,h}^i) - (I-\alpha\nabla^2 \L_i(\w_{r,h}^i))\nabla \L_i(\u_{r,h}^i)  \\
		&= \left(\nablah_{\S_3^i}\L_i(\u_{r,h}^i) - \nabla \L_i(\u_{r,h}^i )\right) - \alpha \left(\nablah_{\S_2^i}^2 \L_i(\w_{r,h}^i) \nablah_{\S_3^i} \L_i (\u_{r,h}^i) -\nablah_{\S_2^i}^2 \L_i(\w_{r,h}^i) \nabla \L_i (\u_{r,h}^i)\right) \\
		& \quad\quad - \alpha \left(\nablah_{\S_2^i}^2 \L_i(\w_{r,h}^i) \nabla \L_i (\u_{r,h}^i)-\nabla^2 \L_i(\w_{r,h}^i) \nabla \L_i (\u_{r,h}^i)\right),
	\end{align*}
	Then, the term $\E_{\S_2^i,\S_3^i}\left[\Norm{ \Deltah_{r,h}^i - (I-\alpha\nabla^2 \L_i(\w_{r,h}^i))\nabla \L_i(\u_{r,h}^i) }^2\right]$ can be upper bounded as
	\begin{small}
		\begin{align*}
			& \E_{\S_2^i,\S_3^i}\left[\Norm{ \Deltah_{r,h}^i - (I-\alpha\nabla^2 \L_i(\w_{r,h}^i))\nabla \L_i(\u_{r,h}^i) }^2\right] \\
			& =\E_{\S_2^i,\S_3^i}\left[\left\| \nablah_{\S_3^i}\L_i(\u_{r,h}^i) - \nabla \L_i(\u_{r,h}^i)  +  \alpha (\nablah_{\S_2^i}^2 \L_i(\w_{r,h}^i) \nablah_{\S_3^i} \L_i (\u_{r,h}^i) -\nablah_{\S_2^i}^2 \L_i(\w_{r,h}^i) \nabla \L_i (\u_{r,h}^i))\right\|^2\right]  \\
			& \quad\quad + \alpha^2 \E_{\S_2^i}\left[\Norm{\left(\nabla^2\L_i(\w_{r,h}^i)- \nablah_{\S_2^i}^2 \L_i(\w_{r,h}^i)\right) \nabla\L_i(\u_{r,h}^i)}^2\right]\\
			&\leq  2\E_{\S_3^i}\left[\Norm{\nablah_{\S_3^i}\L_i(\u_{r,h}^i) - \nabla \L_i(\u_{r,h}^i )}^2\right] + \alpha^2 G^2 \E_{\S_2^i}\left[\Norm{\nabla^2\L_i(\w_{r,h}^i)- \nablah_{\S_2^i}^2 \L_i(\w_{r,h}^i)}^2\right]\\
			& \quad\quad + 2\alpha^2 \E_{\S_2^i}\left[\Norm{\nablah_{\S_2^i}^2 \L_i(\w_{r,h}^i)}^2\right]\E_{\S_3^i}\left[\Norm{\nablah_{\S_3^i} \L_i (\u_{r,h}^i) - \nabla \L_i (\u_{r,h}^i)}^2\right]\\
			& \leq \frac{2\sigma_G^2}{|\S_3^i|} + \frac{2\alpha^2 \sigma_G^2}{|\S_3^i|}\E_{\S_2^i}\left[\Norm{\nablah_{\S_2^i}^2\L_i(\w_{r,h}^i) - \nabla^2\L_i(\w_{r,h}^i)}^2 + L^2 \right]+ \frac{\alpha^2G^2\sigma_H^2}{|\S_2^i|}\\
			& \leq \frac{2\sigma_G^2}{|\S_3^i|} + \frac{2\alpha^2\sigma_G^2}{|\S_3^i|}\left(\frac{\sigma_H^2}{|\S_2^i|} + L^2\right)+ \frac{\alpha^2G^2\sigma_H^2}{|\S_2^i|} \coloneqq \hat{\sigma}^2.
		\end{align*}
	\end{small}
	Besides, the last term on the right hand side of  \eqref{eq:decompose_lcmoml_stoc_grad} can be upper bounded as
	\begin{align*}
		& \E\left[\Norm{\frac{1}{BH}\sum_{i\in\B_r}\sum_{h=1}^H (I-\alpha\nabla^2 \L_i(\w_{r,h}^i))\nabla\L_i(\u_{r,h}^i)}^2\right]\\
		&  \leq 2\E\left[\Norm{\frac{1}{BH}\sum_{i\in\B_r}\sum_{h=1}^H (I-\alpha\nabla^2 \L_i(\w_{r,h}^i)) \nabla \L_i(\v_i(\w_{r,h}^i))}^2\mid \F_r\right]\\
		& \quad\quad + 2\E\left[\Norm{\frac{1}{BH}\sum_{i\in\B_r}\sum_{h=1}^H (I-\alpha\nabla^2 \L_i(\w_{r,h}^i)) \left(\nabla \L_i(\v_i(\w_{r,h}^i)) - \nabla \L_i(\u_{r,h}^i)\right)}^2\mid \F_r\right].
	\end{align*}
	Apply Lemma~\ref{lem:tau_nice} to the first term on the right hand side:
	\begin{align*}
		& \E\left[\Norm{\frac{1}{BH}\sum_{i\in\B_r}\sum_{h=1}^H (I-\alpha\nabla^2 \L_i(\w_{r,h}^i)) \nabla \L_i(\v_i(\w_{r,h}^i))}^2\mid \F_r\right]\\
		&\leq  \frac{(n-B)}{B(n-1)}\frac{1}{nH}\sum_{i=1}^n\sum_{h=1}^H  \E\left[\Norm{(I-\alpha\nabla^2 \L_i(\w_{r,h}^i)) \nabla \L_i(\v_i(\w_{r,h}^i))}^2\mid \F_r \right]\\
		& + \E\left[\Norm{\frac{1}{n}\sum_{i=1}^n\sum_{h=1}^H (1-\alpha \nabla^2 \L_i(\w_{r,h}^i))\nabla \L_i(\u_{r,h}^i)}^2\mid \F_r \right].
	\end{align*}
	Then, we have
	\begin{align*}
		& \E\left[\Norm{\frac{1}{BH}\sum_{i\in\B_r}\sum_{h=1}^H (I-\alpha\nabla^2 \L_i(\w_{r,h}^i))\nabla\L_i(\u_{r,h}^i)}^2\right]\\
		& \leq \frac{2(n-B)}{B(n-1)}\frac{1}{nH}\sum_{i=1}^N\sum_{h=1}^H  \E\left[\Norm{(I-\alpha\nabla^2 \L_i(\w_{r,h}^i)) \nabla \L_i(\v_i(\w_{r,h}^i))}^2\mid \F_r \right]+4\Norm{\nabla F(\w_r)}^2 \\
		& \quad\quad  +4 E\left[\Norm{\frac{1}{nH}\sum_{i=1}^n\sum_{h=1}^H \left(\nabla F_i(\w_{r,h}^i) - \nabla F_i(\w_r)\right)}^2\mid \F_r \right] \\
		& \quad\quad + \frac{2(1+\alpha L)^2L^2}{nH}\sum_{i=1}^n\sum_{h=1}^H \E\left[\Norm{\v_i(\w_{r,h}^i) - \u_{r,h}^i}^2\mid \F_r\right].
	\end{align*}
	Based on Lemma~\ref{lem:gamma_F}, we have
	\begin{align*}
		& \frac{1}{nH} \sum_{i=1}^n\sum_{h=1}^H \mathbb{E}\left[\left\|\left(I-\alpha\nabla^2\mathcal{L}_i(\mathbf{w}_{r,h}^i)\right)\nabla \mathcal{L}_i(\mathbf{v}_i(\mathbf{w}_{r,h}^i))\right\|^2\right]  = \frac{1}{nH} \sum_{i=1}^n\sum_{h=1}^H \mathbb{E}\left[\|\nabla F_i (\mathbf{w}_{r,h}^i)\|^2\right]\\
		& \leq \frac{2}{nH} \sum_{i=1}^n\sum_{h=1}^H \mathbb{E}\left[\|\nabla F_i (\mathbf{w}_r)\|^2\right] +  \frac{2L_F^2}{nH} \sum_{i=1}^n\sum_{h=1}^H \mathbb{E}\left[\|\mathbf{w}_{r,h}^i - \mathbf{w}_r\|^2\right]\\
		& \leq 2\gamma_F^2 + 2\mathbb{E}\left[\|\nabla F(\mathbf{w}_r)\|^2\right]+\frac{2L_F^2}{nH} \sum_{i=1}^n\sum_{h=1}^H\mathbb{E}\left[\|\mathbf{w}_{r,h}^i - \mathbf{w}_r\|^2\right],
	\end{align*}
	Put them together and use the tower property of expectation on both sides. 
	\begin{align*}
		& \E\left[F(\w_{r+1})\right]  \\
		&\leq \E\left[F(\w_r)\right] - \frac{\tilde{\eta}}{2}\left(1-8\tilde{\eta} - \frac{8\tilde{\eta}(n-B)}{B(n-1)}\right)\E\left[\Norm{\nabla F(\w_r)}^2\right] \\
		& \quad\quad + \frac{\tilde{\eta}}{2}\left(C_{\rho,G,L} + 8\tilde{\eta} L_F^2 + \frac{8\tilde{\eta}(n-B)L_F^2}{B(n-1)}\right) \frac{1}{nH}\sum_{i=1}^n \sum_{h=1}^H \E\left[\Norm{\w_{r,h}^i - \w_r}^2\right]\\
		& \quad\quad + 8\tilde{\eta} \left(1+  L^2 \tilde{\eta}\right)\frac{1}{nH}\sum_{i=1}^n\sum_{h=1}^H\E\left[\Norm{\u_{r,h}^i - \v_i(\w_{r,h}^i)}^2\right] + \frac{4(n-B)}{B(n-1)}\tilde{\eta}^2 \gamma_F^2 + \frac{\tilde{\eta}^2\hat{\sigma}^2}{BH}.
	\end{align*}
\end{proof}	

\begin{lemma}\label{lem:local_drift}
	If $\eta \leq \frac{1}{2 H L_F}$, it is satisfied that:
	\begin{align}\nonumber
		&	\frac{1}{nH}\sum_{i=1}^n\sum_{h=1}^H\mathbb{E}\left[\|\mathbf{w}_{r,h}^i - \mathbf{w}_r\|^2\right]\\\nonumber
		& \leq 16\eta^2 H (H-1) \left(\hat{\sigma}^2 + 2\gamma_F^2\right) + 32\eta^2H^2L^2 \left(\frac{1}{nH}\sum_{i=1}^n\sum_{h=1}^H\mathbb{E}\left[\|\mathbf{u}_{r,h}^i - \mathbf{v}_i(\mathbf{w}_{r,h}^i)\|^2\right]\right) \\\label{eq:local_drift}
		&\quad\quad\quad  + 32\eta^2 H (H-1)  \mathbb{E}\left[\|\nabla F(\mathbf{w}_r)\|^2\right].
	\end{align}
\end{lemma}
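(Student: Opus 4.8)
The plan is to unroll the $H$ local steps of round $r$ and bound the per-step increments $\Deltah_{r,h}^i$ by the quantities appearing on the right-hand side.

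First I would use that every (genuinely or virtually) sampled client begins the round at the synchronized model, $\w_{r,1}^i=\w_r$, so the local update $\w_{r,h+1}^i=\w_{r,h}^i-\eta\Deltah_{r,h}^i$ telescopes to $\w_{r,h}^i-\w_r=-\eta\sum_{j=1}^{h-1}\Deltah_{r,j}^i$. Cauchy--Schwarz then gives $\Norm{\w_{r,h}^i-\w_r}^2\le\eta^2(h-1)\sum_{j=1}^{h-1}\Norm{\Deltah_{r,j}^i}^2$, so it suffices to control $\E[\Norm{\Deltah_{r,j}^i}^2]$. For this I decompose $\Deltah_{r,j}^i$ around its conditional mean $(I-\alpha\nabla^2\L_i(\w_{r,j}^i))\nabla\L_i(\u_{r,j}^i)$; the stochastic part has second moment at most $\hat{\sigma}^2$ — this is precisely the variance computation already carried out inside the proof of Lemma~\ref{lem:one_round}, which I can reuse verbatim. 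For the mean, I write $(I-\alpha\nabla^2\L_i(\w_{r,j}^i))\nabla\L_i(\u_{r,j}^i)=\nabla F_i(\w_{r,j}^i)+(I-\alpha\nabla^2\L_i(\w_{r,j}^i))(\nabla\L_i(\u_{r,j}^i)-\nabla\L_i(\v_i(\w_{r,j}^i)))$, using $\nabla F_i(\w)=(I-\alpha\nabla^2\L_i(\w))\nabla\L_i(\v_i(\w))$; with $\alpha\le 1/L$ the correction has norm at most $2L\Norm{\u_{r,j}^i-\v_i(\w_{r,j}^i)}$, so $\E[\Norm{\Deltah_{r,j}^i}^2\mid\F]\le\hat{\sigma}^2+2\Norm{\nabla F_i(\w_{r,j}^i)}^2+8L^2\Norm{\u_{r,j}^i-\v_i(\w_{r,j}^i)}^2$.

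Next I would apply the $L_F$-smoothness of each $F_i$ (Lemma~\ref{lem:smoothness}) to get $\Norm{\nabla F_i(\w_{r,j}^i)}^2\le 2\Norm{\nabla F_i(\w_r)}^2+2L_F^2\Norm{\w_{r,j}^i-\w_r}^2$, and then average over $i$ and invoke Lemma~\ref{lem:gamma_F} to replace $\frac1n\sum_i\Norm{\nabla F_i(\w_r)}^2$ by $2\gamma_F^2+2\Norm{\nabla F(\w_r)}^2$. Substituting back, taking full expectations (via the tower property, conditioning on the history before step $j$ when using the $\hat{\sigma}^2$ bound), and summing over $h=1,\dots,H$ and $j=1,\dots,h-1$ — collapsing $\sum_{h}(h-1)\sum_{j<h}(\cdot)$ into a factor $O(H(H-1))$ times $\frac1{H}\sum_{j}(\cdot)$ — produces the constant term $16\eta^2H(H-1)(\hat{\sigma}^2+2\gamma_F^2)$, the gradient term $32\eta^2H(H-1)\E\Norm{\nabla F(\w_r)}^2$, the tracking-error term $32\eta^2H^2L^2\cdot\frac1{nH}\sum_i\sum_h\E\Norm{\u_{r,h}^i-\v_i(\w_{r,h}^i)}^2$, and crucially a copy of $\frac1{nH}\sum_i\sum_h\E\Norm{\w_{r,h}^i-\w_r}^2$ on the right with coefficient $\Theta(\eta^2L_F^2H^2)$. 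Under the stepsize restriction $\eta\le\frac1{2HL_F}$ this self-referential coefficient is at most a small constant $<1$, so it can be absorbed into the left-hand side.

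The main obstacle is the nested-sum bookkeeping: tracking which power of $h$ multiplies which term so that the stated constants $16$ and $32$ come out (with room to spare, since the double-sum collapse is not tight), and checking that $\eta\le\frac1{2HL_F}$ really does shrink the recursive drift coefficient enough to move it across; the remaining estimates are mechanical given the $\hat{\sigma}^2$ variance bound reused from Lemma~\ref{lem:one_round}, Lemma~\ref{lem:gamma_F}, and $L_F$-smoothness.
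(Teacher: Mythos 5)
Your proof is correct and rests on the same essential ingredients as the paper's: the decomposition of $\Deltah_{r,h}^i$ into its conditional mean plus a zero-mean part with second moment $\hat{\sigma}^2$, the further split of the mean into $\nabla F_i(\w_{r,h}^i)$ plus a tracking-error correction of size $O(L\Norm{\u_{r,h}^i-\v_i(\w_{r,h}^i)})$, the $L_F$-smoothness bound $\Norm{\nabla F_i(\w_{r,h}^i)}^2\le 2\Norm{\nabla F_i(\w_r)}^2+2L_F^2\Norm{\w_{r,h}^i-\w_r}^2$, and Lemma~\ref{lem:gamma_F}. Where you diverge is in how the drift accumulates over the round: the paper runs a per-step recursion $\E[\Norm{\w_{r,h+1}^i-\w_r}^2]\le(1+\tfrac1H)\E[\Norm{\w_{r,h}^i-\w_r}^2]+\eta^2(1+H)\E[\Norm{\Deltah_{r,h}^i}^2]$ and unrolls the resulting $(1+O(1/H))$-inflated recursion, whereas you telescope $\w_{r,h}^i-\w_r=-\eta\sum_{j<h}\Deltah_{r,j}^i$, apply Cauchy--Schwarz, and absorb the self-referential copy of the drift into the left-hand side. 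Both are standard and yield the stated $\eta^2H(H-1)$ scaling; your route arguably gives slightly cleaner constants. The one place you must be careful is the absorption step: the crude bound $\Norm{\w_{r,h}^i-\w_r}^2\le\eta^2(H-1)\sum_{j<h}\Norm{\Deltah_{r,j}^i}^2$ applied termwise makes the self-referential coefficient $4\eta^2(H-1)^2L_F^2\le(H-1)^2/H^2$, which approaches $1$ and would cost an extra factor of $H$ after absorption. You need the exact triangular count $\sum_{h=1}^H(h-1)\sum_{j<h}a_j\le\tfrac{H(H-1)}{2}\sum_j a_j$, which after averaging over $h$ gives a coefficient $2\eta^2H(H-1)L_F^2\le\tfrac{H-1}{2H}<\tfrac12$ under $\eta\le\tfrac{1}{2HL_F}$, so the absorption only costs a factor of $2$. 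Your phrasing of the collapse ("$O(H(H-1))$ times $\tfrac1H\sum_j$") is consistent with this, so the argument goes through.
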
	

\begin{proof}
	For the stochastic estimator $\Deltah_{r,h}^i \coloneqq (I-\alpha \nablah_{\S_2^i}^2\L_i(\w_{r,h}^i))\nablah_{\S_3^i}\L_i(\u_{r,h}^i)$ of the meta-gradient and $\Delta_{r,h}^i \coloneqq (I-\alpha \nabla^2\L_i(\w_{r,h}^i))\nabla\L_i(\u_{r,h}^i)$, we have
	\begin{align*}
		\E\left[\Deltah_{r,h}^i - \Delta_{r,h}^i\right] = \E\left[\E_{\S_2^i,\S_3^i}\left[\Deltah_{r,h}^i - \Delta_{r,h}^i\right]\right] =0.
	\end{align*}
	Then, the local drift at iteration $h+1$, round $r$ can be upper bounded as
	\begin{align*}
		& \mathbb{E}\left[\|\mathbf{w}_{r,h+1}^i - \mathbf{w}_r\|^2\right] \\
		& \leq \left(1+\frac{1}{H}\right)\mathbb{E}\left[\|\mathbf{w}_{r,h}^i -\mathbf{w}_r\|^2\right] + \eta^2(1+H)\mathbb{E}\left[\|(I-\alpha \widehat{\nabla}_{\mathcal{S}_2}^2\mathcal{L}_i(\mathbf{w}_{r,h}^i))\widehat{\nabla}_{\mathcal{S}_3}\mathcal{L}_i(\mathbf{u}_{r,h}^i)\|^2\right]\\
		& = \left(1+\frac{1}{H}\right)\mathbb{E}\left[\|\mathbf{w}_{r,h}^i -\mathbf{w}_r\|^2\right] + \eta^2(1+H)\mathbb{E}\left[\|(I-\alpha \nabla^2\mathcal{L}_i(\mathbf{w}_{r,h}^i))\nabla\mathcal{L}_i(\mathbf{u}_{r,h}^i)\|^2\right]\\
		& \quad\quad + \eta^2(1+H)\mathbb{E}\left[\|(I-\alpha \widehat{\nabla}_{\mathcal{S}_2}^2\mathcal{L}_i(\mathbf{w}_{r,h}^i))\widehat{\nabla}_{\mathcal{S}_3}\mathcal{L}_i(\mathbf{u}_{r,h}^i)-(I-\alpha \nabla^2\mathcal{L}_i(\mathbf{w}_{r,h}^i))\nabla\mathcal{L}_i(\mathbf{u}_{r,h}^i)\|^2\right]\\
		& \leq \left(1+\frac{1}{H}\right)\mathbb{E}\left[\|\mathbf{w}_{r,h}^i -\mathbf{w}_r\|^2\right] + 2\eta^2H\mathbb{E}\left[\|(I-\alpha \nabla^2\mathcal{L}_i(\mathbf{w}_{r,h}^i))\nabla\mathcal{L}_i(\mathbf{u}_{r,h}^i)\|^2\right] + 2\eta^2H\hat{\sigma}^2.
	\end{align*}
	Next, we upper bound $\mathbb{E}\left[\|(I-\alpha \nabla^2\mathcal{L}_i(\mathbf{w}_{r,h}^i))\nabla\mathcal{L}_i(\mathbf{u}_{r,h}^i)\|^2\right]$. Note that we define $\mathbf{v}_i(\mathbf{w}_{r,h}^i) \coloneqq \mathbf{w}_{r,h}^i - \alpha \nabla \mathcal{L}_i(\mathbf{w}_{r,h}^i)$ and $\nabla F_i (\mathbf{w}_{r,h}^i) \coloneqq \left(I-\alpha\nabla^2\mathcal{L}_i(\mathbf{w}_{r,h}^i)\right)\nabla \mathcal{L}_i(\mathbf{v}_i(\mathbf{w}_{r,h}^i))$.
	\begin{align*}
		& \mathbb{E}\left[\|(I-\alpha \nabla^2\mathcal{L}_i(\mathbf{w}_{r,h}^i))\nabla\mathcal{L}_i(\mathbf{u}_{r,h}^i)\|^2\right] \\
		& \leq \mathbb{E}\left[\|(I-\alpha \nabla^2\mathcal{L}_i(\mathbf{w}_{r,h}^i))\left(\nabla\mathcal{L}_i(\mathbf{u}_{r,h}^i) -\nabla\mathcal{L}_i(\mathbf{v}_i(\mathbf{w}_{r,h}^i)) \right)\|^2\right] \\
		&\quad\quad\quad  + \mathbb{E}\left[\|(I-\alpha \nabla^2\mathcal{L}_i(\mathbf{w}_{r,h}^i))\nabla\mathcal{L}_i(\mathbf{v}_i(\mathbf{w}_{r,h}^i))\|^2\right]\\
		& \leq (1 + \alpha L)^2L^2 \mathbb{E}\left[\|\mathbf{u}_{r,h}^i - \mathbf{v}_i(\mathbf{w}_{r,h}^i)\|^2\right] + \mathbb{E}\left[\|\nabla F_i(\mathbf{w}_{r,h}^i)\|^2\right]\\
		& \leq (1 + \alpha L)^2L^2 \mathbb{E}\left[\|\mathbf{u}_{r,h}^i - \mathbf{v}_i(\mathbf{w}_{r,h}^i)\|^2\right] + 2L_F^2\mathbb{E}\left[\| \mathbf{w}_{r,h}^i - \mathbf{w}_r\|^2\right] + 2\mathbb{E}\left[\|\nabla F_i(\mathbf{w}_r)\|^2\right].
	\end{align*}
	Thus, we have the following equation if $\eta \leq \frac{1}{2H L_F}$
	\begin{align*}
		& \frac{1}{n}\sum_{i=1}^n \mathbb{E}\left[\Norm{\w_{r,h+1}^i -\w_r}^2\right]\\
		& \leq \left(1+\frac{2}{H}\right)\frac{1}{n}\sum_{i=1}^n\mathbb{E}\left[\|\mathbf{w}_{r,h}^i -\mathbf{w}_r\|^2\right] + 2\eta^2H\hat{\sigma}^2\\
		& \quad\quad + 2\eta^2H\left( \frac{(1 + \alpha L)^2L^2}{n}\sum_{i=1}^n\mathbb{E}\left[\|\mathbf{u}_{r,h}^i - \mathbf{v}_i(\mathbf{w}_{r,h}^i)\|^2\right] + \frac{2}{n}\sum_{i=1}^n \mathbb{E}\left[\|\nabla F_i(\mathbf{w}_r)\|^2\right]\right).
	\end{align*}
	Based on Lemma~\ref{lem:gamma_F}, we have
	\begin{align*}
		\frac{1}{n}\sum_{i=1}^n \mathbb{E}\left[\|\nabla F_i(\mathbf{w}_r)\|^2\right] & = \mathbb{E}\left[\|\nabla F(\mathbf{w}_r)\|^2\right] + \frac{1}{n}\sum_{i=1}^n \mathbb{E}\left[\|\nabla F_i(\mathbf{w}_r) - \nabla F(\mathbf{w}_r)\|^2\right] \\
		& \leq  \mathbb{E}\left[\|\nabla F(\mathbf{w}_r)\|^2\right]  + \gamma_F^2.
	\end{align*}
	Given that $\mathbf{w}_{r,1}^i = \mathbf{w}_r$ and $\alpha \leq 1/L$, and $H\geq 2$, we can obtain that
	\begin{align*}
		&	\frac{1}{nH}\sum_{i=1}^n\sum_{h=1}^H\mathbb{E}\left[\|\mathbf{w}_{r,h}^i - \mathbf{w}_r\|^2\right]\\
		& \leq 16\eta^2 H (H-1) \left(\hat{\sigma}^2 + 2\gamma_F^2\right) + 32\eta^2H^2L^2 \left(\frac{1}{nH}\sum_{i=1}^n\sum_{h=1}^H\mathbb{E}\left[\|\mathbf{u}_{r,h}^i - \mathbf{v}_i(\mathbf{w}_{r,h}^i)\|^2\right]\right) \\
		& \quad\quad\quad + 32\eta^2 H (H-1)  \mathbb{E}\left[\|\nabla F(\mathbf{w}_r)\|^2\right].
	\end{align*}
\end{proof}

\begin{lemma}\label{lem:fed_moml_per_mavg}
	If $\alpha \leq 1/L$ and $\eta \leq \min\left\{\frac{1}{4HL_F},\frac{\beta}{16L\mathbb{I}[\beta\in(0,1)]}\right\}$, we have
	\begin{align}\nonumber
		& \frac{1}{nHR}\sum_{i=1}^n\sum_{r=1}^R\sum_{h=1}^H\mathbb{E}\left[\|\mathbf{u}_{r,h}^i - \mathbf{v}_i(\mathbf{w}_{r,h}^i)\|^2\right] \\\nonumber
		& \leq \frac{2\sigma_G^2}{\beta \left(\mathbb{I}[B<n]H S_0 + \mathbb{I}[B=n]HR \right)} \mathbb{I}[\beta\in(0,1)] + \frac{96\eta^2(\hat{\sigma}^2 + 2\gamma_F^2)}{\beta^2} \mathbb{I}[\beta\in(0,1)]  + \frac{2\beta \alpha^2\sigma_G^2}{|\S_1^i|} \\\label{eq:fed_moml_per_mavg}
		& \quad\quad\quad +\frac{192\eta^2}{\beta^2R }\sum_{r=1}^R \mathbb{E}\left[\|\nabla F(\mathbf{w}_r)\|^2\right]\mathbb{I}[\beta\in(0,1)].
	\end{align}
\end{lemma}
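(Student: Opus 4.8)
The plan is to derive, for each client $i$, a per-step recursion for the tracking error $T^i_{r,h}\coloneqq\E\big[\|\u^i_{r,h}-\v_i(\w^i_{r,h})\|^2\big]$, telescope it inside a round, average over $i,r,h$, and then close the remaining coupling with the local-drift estimate of Lemma~\ref{lem:local_drift}. Starting from the momentum update \eqref{eq:fed_update_u} and writing $\vhat^i_{r,h}=\w^i_{r,h}-\alpha\,\nablah_{\S_1^i}\L_i(\w^i_{r,h})$, I decompose
\[
\u^i_{r,h}-\v_i(\w^i_{r,h})
=(1-\beta)\big(\u^i_{r,h-1}-\v_i(\w^i_{r,h-1})\big)
+(1-\beta)\big(\v_i(\w^i_{r,h-1})-\v_i(\w^i_{r,h})\big)
+\beta\big(\vhat^i_{r,h}-\v_i(\w^i_{r,h})\big),
\]
and note that $\vhat^i_{r,h}-\v_i(\w^i_{r,h})=-\alpha\big(\nablah_{\S_1^i}\L_i(\w^i_{r,h})-\nabla\L_i(\w^i_{r,h})\big)$ is mean-zero conditioned on everything preceding the draw of $\S_1^i$ at step $h$, with second moment at most $\alpha^2\sigma_G^2/|\S_1^i|$. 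Taking that conditional expectation kills the cross term; applying Young's inequality with weight $\beta$ to the first two terms, using $(1+\alpha L)^2\le4$ for $\alpha\le1/L$ so that $\|\v_i(\w^i_{r,h-1})-\v_i(\w^i_{r,h})\|^2\le4\|\w^i_{r,h}-\w^i_{r,h-1}\|^2$, and using $\w^i_{r,h}-\w^i_{r,h-1}=-\eta\,\Deltah^i_{r,h-1}$ inside a round, I obtain
\[
T^i_{r,h}\le(1-\beta)\,T^i_{r,h-1}+\tfrac{8\eta^2}{\beta}\,\E\big[\|\Deltah^i_{r,h-1}\|^2\big]+\tfrac{\beta^2\alpha^2\sigma_G^2}{|\S_1^i|}.
\]

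Next I bound $\E\|\Deltah^i_{r,h-1}\|^2$ exactly as in the proof of Lemma~\ref{lem:local_drift}: $\Deltah^i_{r,h-1}$ is an unbiased estimate of $(\I-\alpha\nabla^2\L_i(\w^i_{r,h-1}))\nabla\L_i(\u^i_{r,h-1})$ with mean-squared error at most $\hat\sigma^2$, and, splitting $\nabla\L_i(\u^i_{r,h-1})$ around $\nabla\L_i(\v_i(\w^i_{r,h-1}))$ and recognising $(\I-\alpha\nabla^2\L_i(\w^i_{r,h-1}))\nabla\L_i(\v_i(\w^i_{r,h-1}))=\nabla F_i(\w^i_{r,h-1})$, one gets $\tfrac1n\sum_i\E\|\Deltah^i_{r,h-1}\|^2\le\hat\sigma^2+8L^2\bar T_{r,h-1}+2\,\tfrac1n\sum_i\E\|\nabla F_i(\w^i_{r,h-1})\|^2$, where $\bar T_{r,h}\coloneqq\tfrac1n\sum_iT^i_{r,h}$. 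By $L_F$-smoothness of $F_i$ and Lemma~\ref{lem:gamma_F}, $\tfrac1n\sum_i\E\|\nabla F_i(\w^i_{r,h-1})\|^2\le2\E\|\nabla F(\w_r)\|^2+2\gamma_F^2+2L_F^2\bar D_{r,h-1}$ with $\bar D_{r,h}\coloneqq\tfrac1n\sum_i\E\|\w^i_{r,h}-\w_r\|^2$. Averaging the recursion over $i$ and using $\eta\le\beta/(16L)$ to absorb $\tfrac{8\eta^2}{\beta}\cdot8L^2\,\bar T_{r,h-1}\le\tfrac{\beta}{4}\bar T_{r,h-1}$ into the contraction factor, I arrive at
\[
\bar T_{r,h}\le\big(1-\tfrac{\beta}{2}\big)\bar T_{r,h-1}
+\tfrac{C\eta^2}{\beta}\Big(\hat\sigma^2+\gamma_F^2+\E\|\nabla F(\w_r)\|^2+L_F^2\bar D_{r,h-1}\Big)
+\tfrac{\beta^2\alpha^2\sigma_G^2}{|\S_1^i|}
\]
for an absolute constant $C$.

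Telescoping this geometric recursion inside a round, and using $\sum_{k\ge0}(1-\beta/2)^k\le2/\beta$, produces a factor $2/\beta$ on every additive term (in particular turning the $\beta^2\alpha^2\sigma_G^2/|\S_1^i|$ floor into $2\beta\alpha^2\sigma_G^2/|\S_1^i|$); the only difference between the two regimes is the starting error. For $B<n$ the personalized models are re-initialised every round, so $\bar T_{r,1}\le\alpha^2\sigma_G^2/S_0$ is a clean base case whose contribution, after the geometric sum and dividing by $H$, is $\le\tfrac{2\sigma_G^2}{\beta H S_0}$. For $B=n$ the chain $\u^i_{r-1,H}=\u^i_{r,0}\to\u^i_{r,1}$ runs uninterrupted, so one telescopes over all $RH$ steps, only the single initialisation enters (giving $\tfrac{2\sigma_G^2}{\beta HR}$), and the round-boundary ``jump'' $\v_i(\w_r)-\v_i(\w^i_{r-1,H})$ is absorbed into drift terms via $\tfrac1n\sum_i\|\w_r-\w^i_{r-1,H}\|^2\le2\tfrac1n\sum_i\|\w^i_{r-1,H}-\w_{r-1}\|^2+2\eta^2\tfrac1n\sum_i\|\Deltah^i_{r-1,H}\|^2$, which is of the same order. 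Either way, averaging over $h$ and $r$ gives
\[
\tfrac{1}{nHR}\textstyle\sum_{i,r,h}T^i_{r,h}
\le\tfrac{2\sigma_G^2}{\beta(\mathbb I[B<n]HS_0+\mathbb I[B=n]HR)}\mathbb I[\beta\in(0,1)]
+\tfrac{2\beta\alpha^2\sigma_G^2}{|\S_1^i|}
+\tfrac{C'\eta^2}{\beta^2}\Big(\hat\sigma^2+2\gamma_F^2+\tfrac1R\textstyle\sum_r\E\|\nabla F(\w_r)\|^2+L_F^2\cdot\tfrac{1}{HR}\textstyle\sum_{r,h}\bar D_{r,h}\Big),
\]
where the indicator $\mathbb I[\beta\in(0,1)]$ appears because at $\beta=1$ one has $\u^i_{r,h}=\vhat^i_{r,h}$ and only the floor term survives.

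The main obstacle is the final closing step. Substituting Lemma~\ref{lem:local_drift} for $\tfrac{1}{HR}\sum_{r,h}\bar D_{r,h}$ reintroduces the tracking error on the right-hand side through a self-term proportional to $\eta^4H^2L^2L_F^2\beta^{-2}\cdot\tfrac{1}{nHR}\sum_{i,r,h}T^i_{r,h}$, and one must verify this self-coupling coefficient is strictly below $1$. Using $\eta\le1/(4HL_F)$ to bound $\eta^2H^2L_F^2\le1/16$ and then $\eta\le\beta/(16L)$ to bound $\eta^2L^2\le\beta^2/256$, the coefficient is at most $1/2$, so the term moves to the left-hand side and is absorbed into a constant factor; the same substitution also feeds additional $\hat\sigma^2+2\gamma_F^2$ and $\E\|\nabla F(\w_r)\|^2$ mass, and collecting all constants yields the stated $64$–$96$ and $192$ coefficients. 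This is exactly why both stepsize restrictions $\eta\le1/(4HL_F)$ (to control local drift relative to $H$) and $\eta\le\beta/(16L\,\mathbb I[\beta\in(0,1)])$ (to control the momentum contraction relative to the gradient noise) are required, the second being vacuous in the degenerate case $\beta=1$.
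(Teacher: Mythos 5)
Your proposal is correct and follows essentially the same route as the paper's proof: the same momentum-error decomposition with the mean-zero stochastic-gradient term, the same Young's-inequality split with weight $\beta$, the same bound on $\E\|\Deltah_{r,h}^i\|^2$ via $\hat\sigma^2$, the tracking error, the local drift, and $\gamma_F^2$, and the same closing step that substitutes Lemma~\ref{lem:local_drift} and absorbs the resulting self-coupling using $\eta\le 1/(4HL_F)$ and $\eta\le\beta/(16L)$. The only cosmetic difference is that you move the self-term to the left-hand side after averaging over all $r,h$, whereas the paper folds it into the per-round contraction factor $(1-\beta/2)$ before telescoping over rounds; both yield the stated bound.
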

\begin{proof}
	We define that $\vhat_{r,h}^i \coloneqq \w_{r,h}^i - \alpha \nablah_{\S_1^i} \L_i(\w_{r,h})$. 
	\begin{align*}
		& \E\left[\Norm{\u_{r,h+1}^i - \v_i(\w_{r,h+1}^i)}^2\right] \\
		&= 	\E\left[\Norm{\v_i(\w_{r,h+1}^i)- (1-\beta)\u_{r,h}^i - \beta\vhat_{r,h+1}^i}^2\right]\\
		& = \E\left[\Norm{(1-\beta) \left(\v_i(\w_{r,h}^i) - \u_{r,h}^i + \v_i(\w_{r,h+1}^i) -\v_i(\w_{r,h}^i)\right) + \beta(\v_i(\w_{r,h+1}^i) -\vhat_{r,h+1}^i )}^2\right]
	\end{align*}
	Since $\E\left[\v_i(\w_{r,h+1}^i) -\vhat_{r,h+1}^i \right] = 0$ and $\E\left[\Norm{\v_i(\w_{r,h+1}^i) -\vhat_{r,h+1}^i }^2\right]\leq \frac{\alpha^2 \sigma_G^2}{|\S_1^i|}$, we have
	\begin{align*}
		&\E\left[\Norm{\u_{r,h+1}^i - \v_i(\w_{r,h+1}^i)}^2\right] \\
		& \leq (1-\beta)^2\E\left[\Norm{\v_i(\w_{r,h}^i) - \u_{r,h}^i + \v_i(\w_{r,h+1}^i) -\v_i(\w_{r,h}^i)}^2\right] + \beta^2 \frac{\alpha^2 \sigma_G^2}{|\S_1^i|}.\\
		& \leq (1-\beta)^2(1+\beta) \E\left[\Norm{\v_i(\w_{r,h}^i) - \u_{r,h}^i}^2\right] \\
		& \quad\quad + 4(1-\beta)^2(1+1/\beta)(1+\alpha L)^2\E\left[\Norm{\w_{r,h+1}^i - \w_{r,h}^i}^2\right]\mathbb{I}[\beta\in(0,1)]  + \beta^2 \frac{\alpha^2 \sigma_G^2}{|\S_1^i|}\\
		& \leq (1-\beta)\E\left[\Norm{\v_i(\w_{r,h}^i) - \u_{r,h}^i}^2\right] + \frac{4\eta^2(1+\alpha L)^2}{\beta}\E\left[\Norm{\Deltah_{r,h}^i}^2\right]\mathbb{I}[\beta\in(0,1)]  + \beta^2 \frac{\alpha^2 \sigma_G^2}{|\S_1^i|}.
	\end{align*}
	As done in the proof of Lemma~\ref{lem:local_drift}, the term $\E\left[\Norm{\Deltah_{r,h}^i}^2\right]$ can be upper bounded as
	\begin{align*}
		& \mathbb{E}\left[\|\widehat{\Delta}_{r,h}^i\|^2\right]  \\
		& \leq \hat{\sigma}^2 + (1+\alpha L)^2 L^2 \mathbb{E}\left[\|\mathbf{u}_{r,h}^i - \mathbf{v}_i(\mathbf{w}_{r,h}^i) \|^2\right] + 2 L_F^2\mathbb{E}\left[\|\mathbf{w}_{r,h}^i - \mathbf{w}_r\|^2\right] + 2\mathbb{E}\left[\|\nabla F_i(\mathbf{w}_r)\|^2\right].
	\end{align*}
	If $\alpha \leq 1/L$, we have
	\begin{align*}
		& \frac{1}{n}\sum_{i=1}^n\mathbb{E}\left[\|\mathbf{u}_{r,h+1}^i - \mathbf{v}_i(\mathbf{w}_{r,h+1}^i)\|^2\right]\\
		& \leq \left(1-\beta+\frac{64\eta^2L^2}{\beta}\mathbb{I}[\beta\in(0,1)] \right) \frac{1}{n}\sum_{i=1}^n\mathbb{E}\left[\|\mathbf{u}_{r,h}^i - \mathbf{v}_i(\mathbf{w}_{r,h}^i)\|^2\right]  + \beta^2\frac{\alpha^2\sigma_G^2}{|\S_1^i|} + \frac{16\eta^2}{\beta}\hat{\sigma}^2\mathbb{I}[\beta\in(0,1)] \\
		&\quad\quad + \frac{32 L_F^2\eta^2}{\beta}\frac{1}{n}\sum_{i=1}^n\mathbb{E}\left[\|\mathbf{w}_{r,h}^i - \mathbf{w}_r\|^2\right]\mathbb{I}[\beta\in(0,1)]  + \frac{32\eta^2}{\beta}\left(\mathbb{E}\left[\|\nabla F(\mathbf{w}_r)\|^2\right]+\gamma_F^2\right) \mathbb{I}[\beta\in(0,1)].
	\end{align*}
	Telescope it from $h = 1$ to $H$ and divide both sides by $H$. 
	\begin{small}
		\begin{align*}
			& \frac{1}{nH}\sum_{i=1}^n\sum_{h=1}^H\mathbb{E}\left[\|\mathbf{u}_{r,h+1}^i - \mathbf{v}_i(\mathbf{w}_{r,h+1}^i)\|^2\right]\\
			& \leq 	\left(1-\beta+\frac{64\eta^2L^2}{\beta}\mathbb{I}[\beta\in(0,1)] \right) \frac{1}{nH}\sum_{i=1}^n\sum_{h=1}^H\mathbb{E}\left[\|\mathbf{u}_{r,h}^i - \mathbf{v}_i(\mathbf{w}_{r,h}^i)\|^2\right]+ \beta^2\frac{\alpha^2\sigma_G^2}{|\S_1^i|} + \frac{16\eta^2}{\beta}\hat{\sigma}^2\mathbb{I}[\beta\in(0,1)] \\
			&\quad\quad + \frac{32 L_F^2\eta^2}{\beta}\frac{1}{nH}\sum_{i=1}^n\sum_{h=1}^H\mathbb{E}\left[\|\mathbf{w}_{r,h}^i - \mathbf{w}_r\|^2\right]\mathbb{I}[\beta\in(0,1)]  + \frac{32\eta^2}{\beta}\left(\mathbb{E}\left[\|\nabla F(\mathbf{w}_r)\|^2\right]+\gamma_F^2\right) \mathbb{I}[\beta\in(0,1)].
		\end{align*}
	\end{small}
	Applying Lemma~\ref{lem:local_drift} and setting $\eta \leq \min\left\{\frac{1}{4HL_F},\frac{\beta}{16L\mathbb{I}[\beta\in(0,1)]}\right\}$ leads to
	\begin{align*}
		& \frac{1}{nH}\sum_{i=1}^n\sum_{h=1}^H\mathbb{E}\left[\|\mathbf{u}_{r,h+1}^i - \mathbf{v}_i(\mathbf{w}_{r,h+1}^i)\|^2\right]\\
		& \leq \left(1-\beta+\frac{64\eta^2L^2}{\beta}(1+16\eta^2H^2L_F^2)\mathbb{I}[\beta\in(0,1)] \right) \frac{1}{nH}\sum_{i=1}^n\sum_{h=1}^H\mathbb{E}\left[\|\mathbf{u}_{r,h}^i - \mathbf{v}_i(\mathbf{w}_{r,h}^i)\|^2\right]\\
		& \quad\quad\quad+ \frac{16\eta^2}{\beta}(1+32L_F^2\eta^2H(H-1))\left(\hat{\sigma}^2 + 2\gamma_F^2\right)\mathbb{I}[\beta\in(0,1)]+ \beta^2\frac{\alpha^2\sigma_G^2}{|\S_1^i|}\\
		& \quad\quad\quad+ \frac{32\eta^2}{\beta}\left(1+32L_F^2\eta^2 H(H-1)\right)\E\left[\Norm{\nabla F(\w_r)}^2\right]\mathbb{I}[\beta\in(0,1)]\\
		& \leq \left(1-\frac{\beta}{2}\right) \frac{1}{nH}\sum_{i=1}^n\sum_{h=1}^H\mathbb{E}\left[\|\mathbf{u}_{r,h}^i - \mathbf{v}_i(\mathbf{w}_{r,h}^i)\|^2\right]\mathbb{I}[\beta\in(0,1)]\\
		& \quad\quad\quad + \frac{48\eta^2\left(\hat{\sigma}^2 + 2\gamma_F^2\right)}{\beta}\mathbb{I}[\beta\in(0,1)] + \frac{\beta^2\alpha^2\sigma_G^2}{|\S_1^i|} + \frac{96\eta^2}{\beta}\E\left[\Norm{\nabla F(\w_r)}^2\right]\mathbb{I}[\beta\in(0,1)].\\
	\end{align*}
	Re-arrange the terms, telescope it from $r=1$ to $R$, and divide both sides by $\beta R/2$
	\begin{align*}
		& \frac{1}{nHR}\sum_{i=1}^n\sum_{r=1}^R\sum_{h=1}^H\mathbb{E}\left[\|\mathbf{u}_{r,h}^i - \mathbf{v}_i(\mathbf{w}_{r,h}^i)\|^2\right] \leq \frac{2\sigma_G^2}{\beta \left(\mathbb{I}[B<n]H S_0 + \mathbb{I}[B=n]HR \right)} \mathbb{I}[\beta\in(0,1)]  \\
		& \quad\quad\quad + \frac{96\eta^2(\hat{\sigma}^2 + 2\gamma_F^2)}{\beta^2} \mathbb{I}[\beta\in(0,1)]  + \frac{2\beta \alpha^2\sigma_G^2}{|\S_1^i|}  +\frac{192\eta^2}{\beta^2R }\sum_{r=1}^R \mathbb{E}\left[\|\nabla F(\mathbf{w}_r)\|^2\right]\mathbb{I}[\beta\in(0,1)].
	\end{align*}
\end{proof}

\begin{proof}[Proof of Theorem~\ref{thm:fed_moml}]
	Based on \eqref{eq:one_round} and \eqref{eq:local_drift}, we have
	\begin{small}
		\begin{align*}
			& \E\left[F(\w_{r+1})\right] \leq \E\left[F(\w_r)\right] - \frac{\tilde{\eta}}{2}\left(1-8\tilde{\eta} - \frac{8\tilde{\eta}(n-B)}{B(n-1)}\right)\E\left[\Norm{\nabla F(\w_r)}^2\right]\\
			& + \frac{\tilde{\eta}}{2}\left(C_{\rho,G,L} + 8\tilde{\eta}L_F^2 + \frac{8\tilde{\eta}(n-B)L_F^2}{B(n-1)}\right) \frac{1}{nH}\sum_{i=1}^{n}\sum_{h=1}^H\E\left[\Norm{\w_{r,h}^i - \w_r}^2\right]\\
			& + 8\tilde{\eta}(1+L^2\tilde{\eta})\frac{1}{nH}\sum_{i=1}^n\sum_{h=1}^H \E\left[\Norm{\u_{r,h}^i - \v_i(\w_{r,h}^i)}^2\right] + \frac{4(n-B)}{B(n-1)}\tilde{\eta}^2\gamma_F^2 + \frac{\tilde{\eta}^2\hat{\sigma}^2}{BH}\\
			& \leq \E\left[F(\w_r)\right]+ 8\tilde{\eta}\eta^2 H(H-1)\left(C_{\rho,G,L} + 8\tilde{\eta}L_F^2 + \frac{8\tilde{\eta}(n-B)L_F^2}{B(n-1)}\right)(\hat{\sigma}^2 + 2\gamma_F^2) + \frac{4(n-B)}{B(n-1)}\tilde{\eta}^2\gamma_F^2  \\
			& - \frac{\tilde{\eta}}{2}\left(1-8\tilde{\eta} - \frac{8\tilde{\eta}(n-B)}{B(n-1)}- 32\tilde{\eta}^2\left(C_{\rho,G,L}+ 8\tilde{\eta}L_F^2 + \frac{8\tilde{\eta}(n-B)L_F^2}{B(n-1)}\right)\right)\E\left[\Norm{\nabla F(\w_r)}^2\right]+ \frac{\tilde{\eta}^2\hat{\sigma}^2}{BH}\\
			& +8\tilde{\eta}\left(1+L^2\tilde{\eta} + 2\tilde{\eta}^2L^2\left(C_{\rho,G,L}+ 8\tilde{\eta}L_F^2 + \frac{8\tilde{\eta}(n-B)L_F^2}{B(n-1)}\right) \right)\frac{1}{nH}\sum_{i=1}^n\sum_{h=1}^H \E\left[\Norm{\u_{r,h}^i - \v_i(\w_{r,h}^i)}^2\right]\\
			& \leq \E\left[F(\w_r)\right] + 8\tilde{\eta}\eta^2H(H-1)C_1(\hat{\sigma}^2 + 2\gamma_F^2) + \frac{4(n-B)}{B(n-1)}\tilde{\eta}^2\gamma_F^2 + \frac{\tilde{\eta}^2\hat{\sigma}^2}{BH} - \frac{\tilde{\eta}}{2}\left(1-8C_2\tilde{\eta}\right)\E\left[\Norm{\nabla F(\w_r)}^2\right]\\
			& + 8\tilde{\eta}C_3 \frac{1}{nH}\sum_{i=1}^n\sum_{h=1}^H \E\left[\Norm{\u_{r,h}^i - \v_i(\w_{r,h}^i)}^2\right].
		\end{align*}
	\end{small}
	where we set $\eta \leq \frac{1}{4HL_F}$ and define $C_1\coloneqq C_{\rho,G,L} + 2L_F + \frac{2(n-B)L_F}{B(n-1)}$, $C_2 \coloneqq 1 + \frac{n-B}{B(n-1)} + \frac{C_1}{4L_F^2}$, and $C_3 \coloneqq 1 + \frac{L^2}{4L_F} + \frac{L^2C_1}{8L_F^2}$. Telescoping the equation above from round $r=1$ to $R$ and dividing both sides by $R$ leads to
	\begin{align*}
		\frac{1}{R}\sum_{r=1}^R \E\left[F(\w_{r+1})\right] & \leq 	\frac{1}{R}\sum_{r=1}^R \E\left[F(\w_r)\right] + 8\tilde{\eta}\eta^2H(H-1)C_1(\hat{\sigma}^2 + 2\gamma_F^2)+ \frac{4(n-B)}{B(n-1)}\tilde{\eta}^2\gamma_F^2\\
		&  + \frac{\tilde{\eta}^2\hat{\sigma}^2}{BH} - \frac{\tilde{\eta}}{2}\left(1-8C_2\tilde{\eta}\right)\frac{1}{R}\sum_{r=1}^R\E\left[\Norm{\nabla F(\w_r)}^2\right] \\
		& + 8\tilde{\eta}C_3 \frac{1}{nHR}\sum_{i=1}^n\sum_{r=1}^R\sum_{h=1}^H \E\left[\Norm{\u_{r,h}^i - \v_i(\w_{r,h}^i)}^2\right].
	\end{align*}
	The last term above can be upper bounded by Lemma~\ref{lem:fed_moml_per_mavg}.
	\begin{align*}
		& \frac{1}{R}\sum_{r=1}^R \E\left[F(\w_{r+1})\right] \\
		& \leq 	\frac{1}{R}\sum_{r=1}^R \E\left[F(\w_r)\right] + 8\tilde{\eta}\eta^2H(H-1)C_1(\hat{\sigma}^2 + 2\gamma_F^2)+ \frac{4(n-B)}{B(n-1)}\tilde{\eta}^2\gamma_F^2\\
		&  + \frac{\tilde{\eta}^2\hat{\sigma}^2}{BH} - \frac{\tilde{\eta}}{2}\left(1-8C_2\tilde{\eta}- \frac{6144\eta^2C_3}{\beta^2}\mathbb{I}[\beta\in(0,1)]\right)\frac{1}{R}\sum_{r=1}^R\E\left[\Norm{\nabla F(\w_r)}^2\right] + \frac{16\tilde{\eta}C_3\beta \alpha^2\sigma_G^2}{|\S_1^i|}\\
		& + \frac{16\tilde{\eta}C_3\sigma_G^2}{\beta \left(\mathbb{I}[B<n]H S_0 + \mathbb{I}[B=n]HR \right)} \mathbb{I}[\beta\in(0,1)]  + \frac{768\tilde{\eta}C_3\eta^2(\hat{\sigma}^2 + 2\gamma_F^2)}{\beta^2} \mathbb{I}[\beta\in(0,1)].
	\end{align*}
	If we set $\tilde{\eta}=\eta H \leq \frac{1}{32C_2}$ and $\eta\leq \frac{\beta}{\sqrt{24576C_3\mathbb{I}[\beta\in(0,1)]}}$, we have
	\begin{align*}
		& \frac{1}{R}\sum_{r=1}^R \E\left[\Norm{\nabla F(\w_r)}^2\right] \\
		& \leq \frac{4F(\w_1)}{\eta T} + 32\eta^2 H(H-1)C_1(\hat{\sigma}^2 + 2\gamma_F^2) + \frac{4\eta}{B}\left(\hat{\sigma}^2 + \frac{4(n-B)}{(n-1)}H\gamma_F^2\right) + \frac{64C_3\beta\alpha^2\sigma_G^2}{|\S_1^i|}\\
		& + \frac{64C_3\sigma_G^2}{\beta \left(\mathbb{I}[B<n]H S_0 + \mathbb{I}[B=n]HR \right)} \mathbb{I}[\beta\in(0,1)] + \frac{3072C_3\eta^2(\hat{\sigma}^2 + 2\gamma_F^2)}{\beta^2} \mathbb{I}[\beta\in(0,1)],
	\end{align*}
	where we use the fact $\tilde{\eta}R = \eta HR = \eta T$. Then, we can define $C_4\coloneqq \frac{1}{4\max\{L_F,8C_2\}}$, $C_5\coloneqq \frac{1}{16\max\{L, \sqrt{96C_3}\}}$, and set $	\eta \leq \min\left\{\frac{C_4}{H},\frac{C_5\beta}{\mathbb{I}[\beta\in(0,1)]}\right\}$.
\end{proof}

\section{Additional Theoretical Results}

In this section, we provide some other theoretical results that we obtained. 

\subsection{MAML, BSGD, and BSpiderBoost in the finite \#tasks case}\label{sec:finite_ext}

In original papers of MAML~\citep{fallah2020convergence} and BSGD/BSpiderBoost~\citep{hu2020biased}, the convergence rates are only available for the case that the number of tasks $n$ is infinite (e.g., the tasks are online). However, it is easy to extend their results to the finite $n$ case. For example, Equation (97) of \cite{fallah2020convergence} uses the fact 
\begin{align*}
	\E\left[\frac{1}{B}\sum_{i\in\B}X_{\xi_i}\right]=X,\quad \E\left[\Norm{\frac{1}{B}\sum_{i\in\B} X_{\xi_i}}^2\right] \leq \frac{1}{B}\E_\xi\left[\Norm{X_\xi}^2\right] + \Norm{X}^2,
\end{align*}  
for $\E\left[X_{\xi_i}\right]=X$. This further leads to requirement on batch size $B=\O(1/\epsilon^2)$ in the final result to ensure the convergence. In the finite $n$ case, we instead use the finite $n$ counterpart in Lemma~\ref{lem:tau_nice}. Then, we can conclude that MAML needs $B=n$ to ensure convergence if we follow the rest part of the analysis of \cite{fallah2020convergence}. 

\subsection{Comparison of convergence rates in the infinite $n$ case}

As demonstrated in the main paper, our MOML only works when the number of tasks is finite. However, our LocalMOML works for both finite and infinite $n$ if it is implemented on a single machine. In Table~\ref{tab:comparision_online}, we compare it with existing results. 

\subsection{Problematic proof in \cite{fallah2020personalized} of Per-FedAvg}\label{sec:wrong_fed_maml}

The convergence analysis in \cite{fallah2020personalized} is problematic starting from Equation (99) in their paper (We follow their notations below):
\begin{align*}
	\E\left[\left(\frac{1}{\tau n}\sum_{i\in\A_k} \nabla F_i(\bar{w}_{k+1,t})\right)\mid \F_{k+1}^t\right] = \frac{1}{n}\sum_{i=1}^n\nabla F_i(\bar{w}_{k+1,t}).
\end{align*}
The equality above is wrong unless $|\A_k| = n$ (full client participation) because $\bar{w}_{k+1,t} = \frac{1}{\tau n}\sum_{i\in\A_k} w_{k+1,t}^i$ also depends on the randomly sampled batch of clients $\A_k$. This fault makes their proof cannot proceed. In this paper, we corrected this issue.

\section{Additional Experimental Results of Sinewave Regression}\label{sec:fitted_curves}

We also provide the fitted sinusoid curves on unseen tasks. The curves when $K=1$ on 5 unseen tasks can be found in Figure~\ref{fig:fitted_sinwave_1shots} and those of $K=3$ can be found in Figure~\ref{fig:fitted_sinwave_5shots}.

\begin{figure*}[ht]
	\subfigure[Task 1]{
		\centering
		\includegraphics[width=0.300\linewidth]{./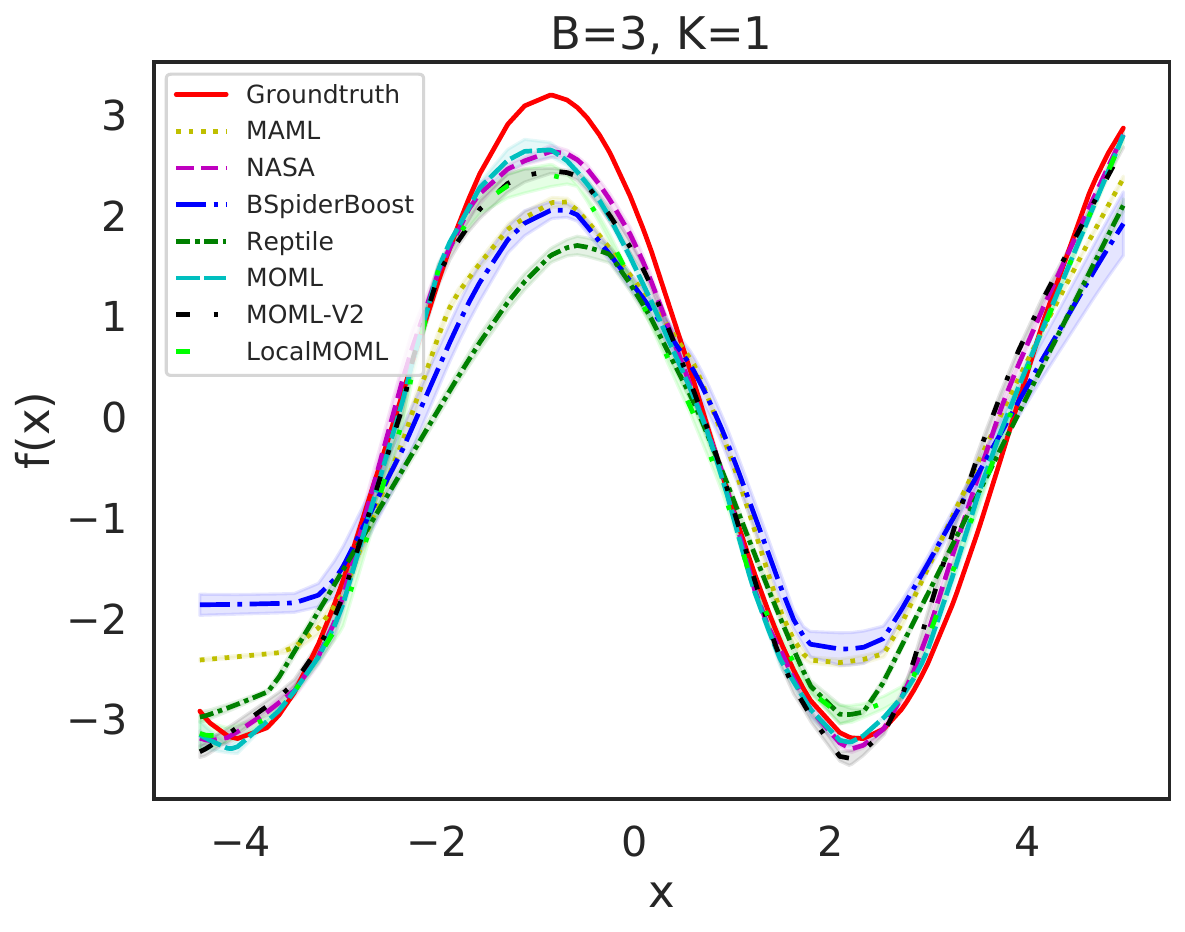}
	}
	\hfill
		\subfigure[Task 2]{
		\centering
		\includegraphics[width=0.300\linewidth]{./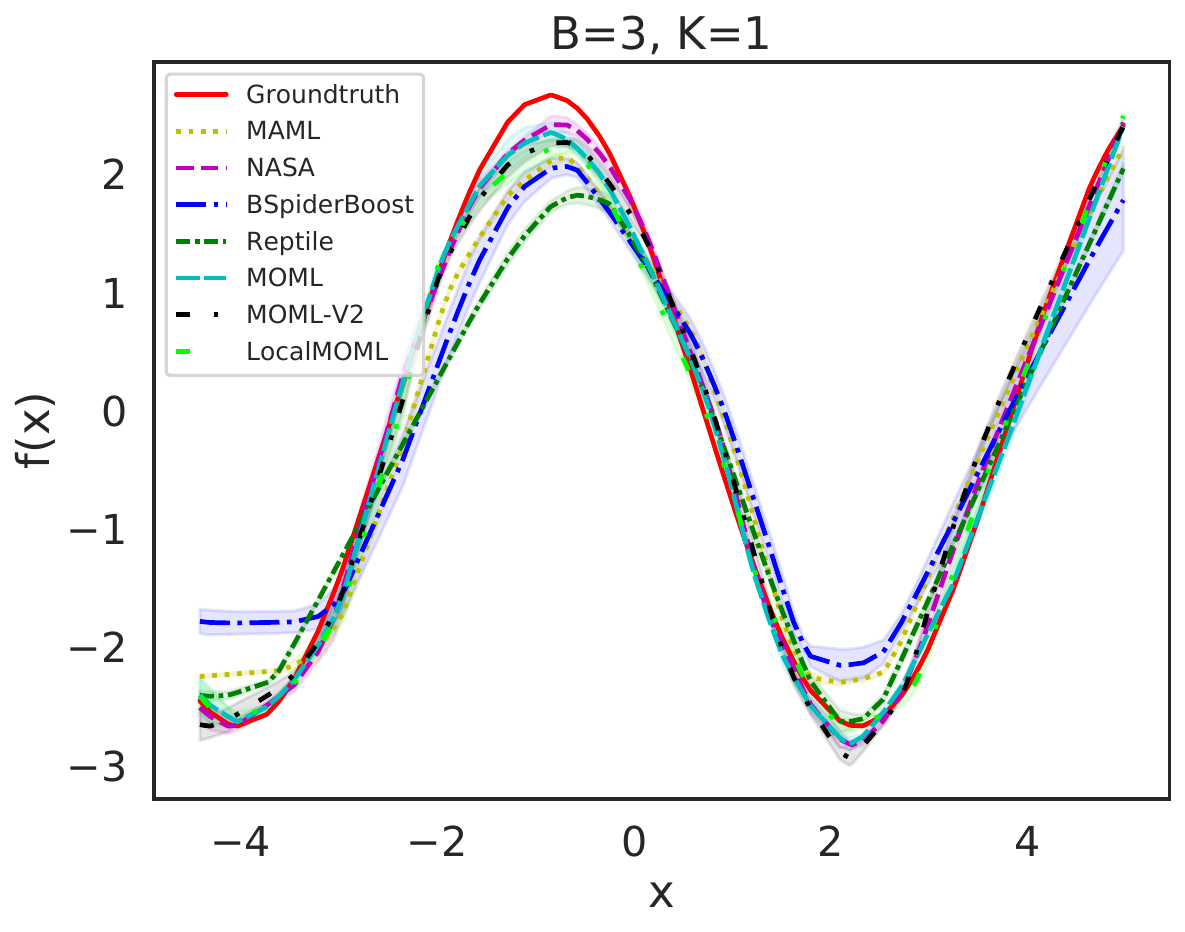}
	}
	\hfill
		\subfigure[Task 3]{
		\centering
		\includegraphics[width=0.300\linewidth]{./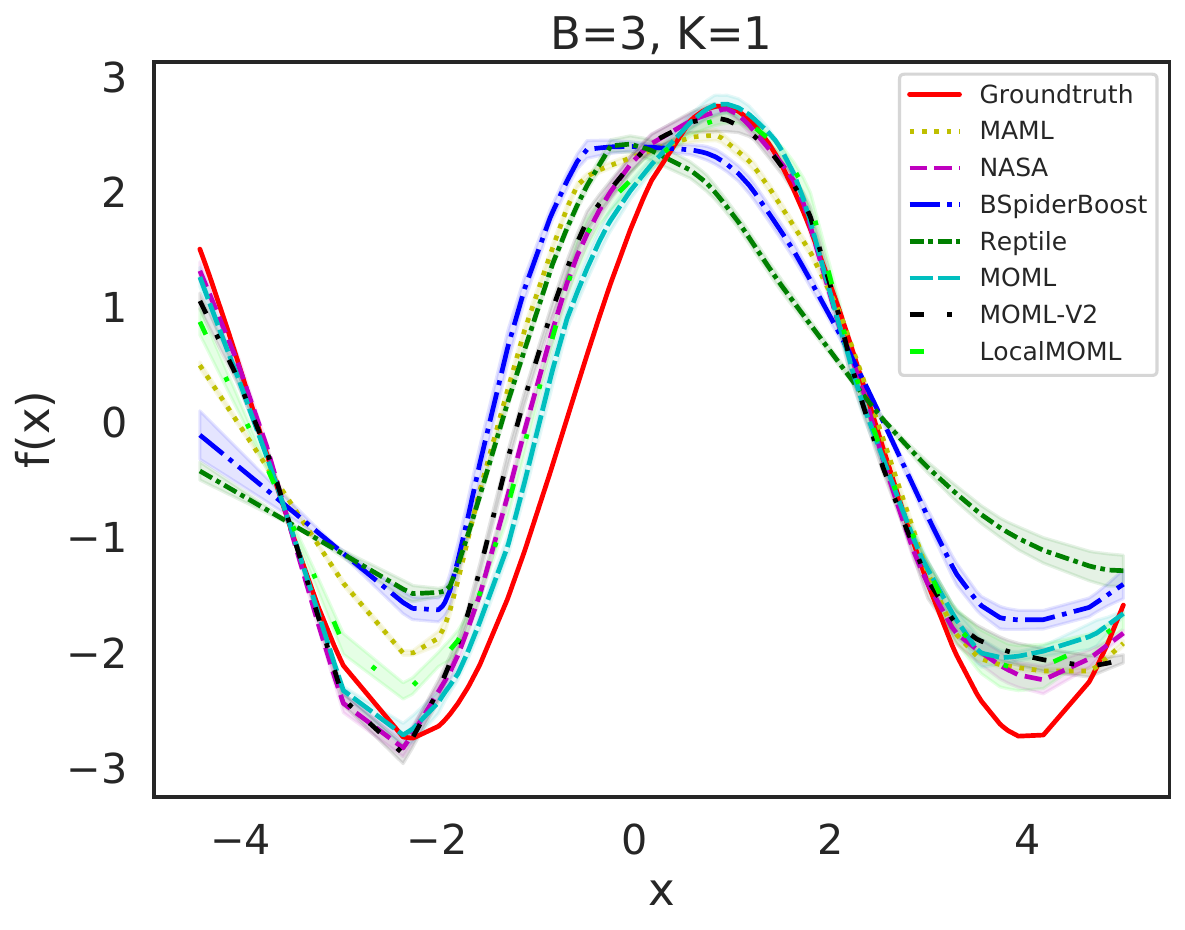}
	}
	\hfill
		\subfigure[Task 4]{
		\centering
		\includegraphics[width=0.300\linewidth]{./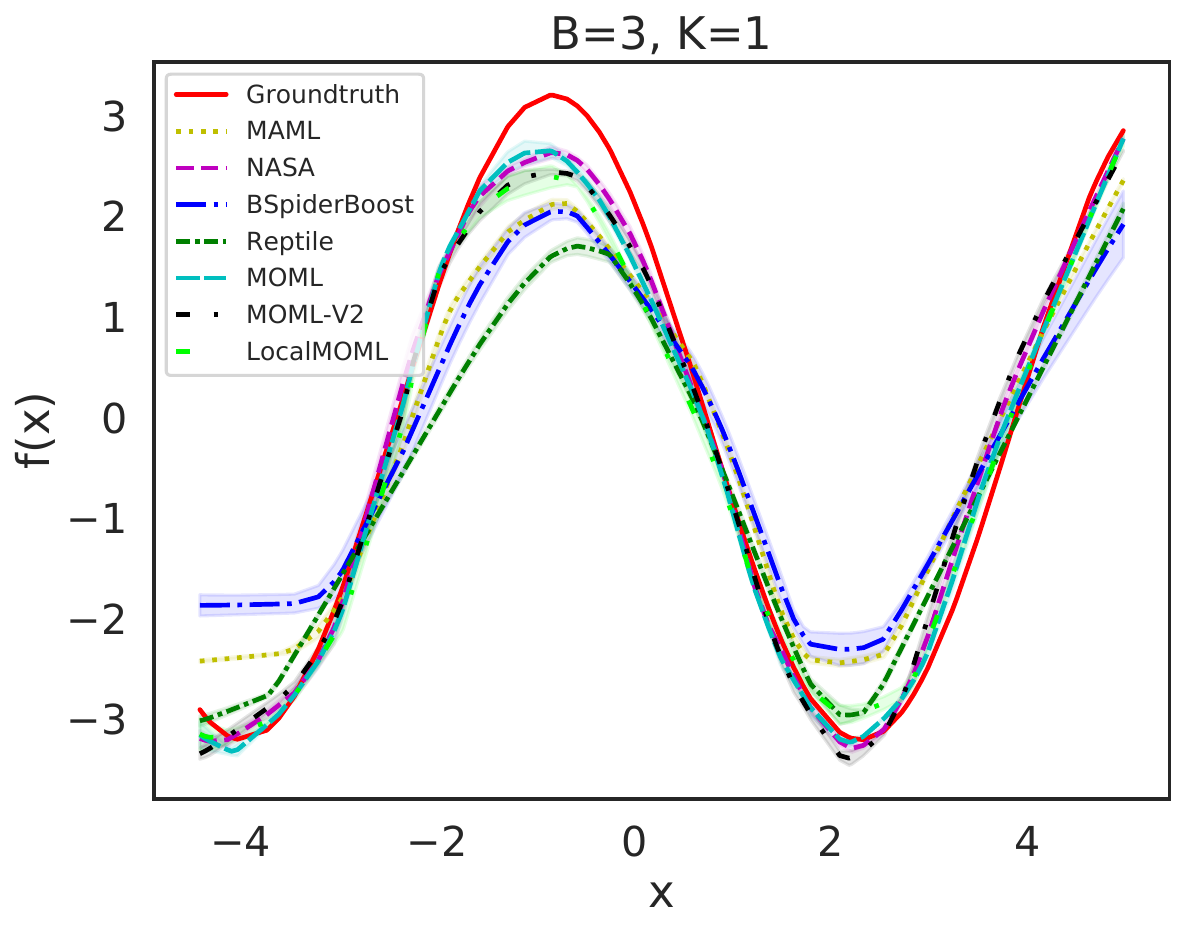}
	}
	\hfill
		\subfigure[Task 5]{
		\centering
		\includegraphics[width=0.300\linewidth]{./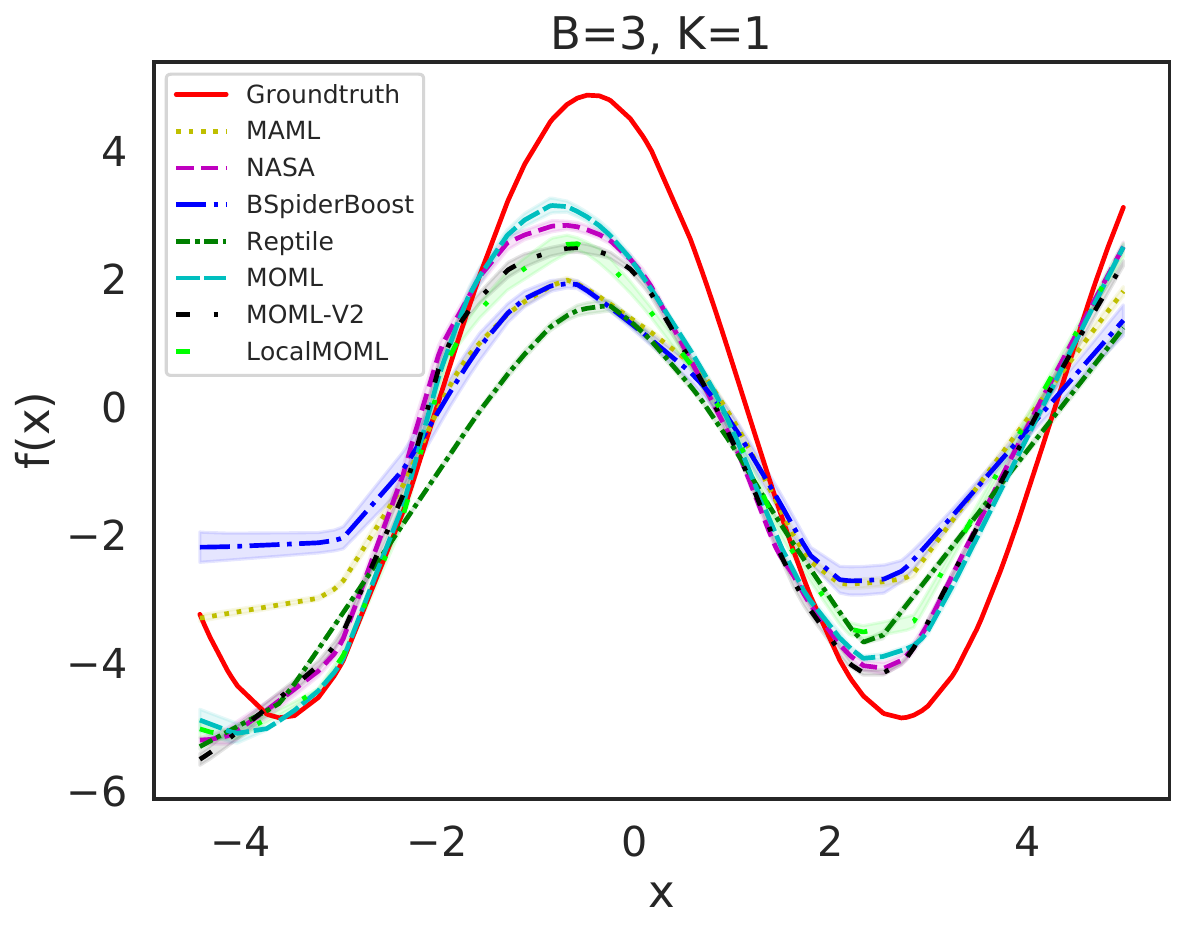}
	}
	\hfill
	\caption{Fitted Sinusoid Curves on five unseen tasks when $K=1$.}
\label{fig:fitted_sinwave_1shots}
\end{figure*}

\begin{figure*}[ht]
	\subfigure[Task 1]{
		\centering
		\includegraphics[width=0.300\linewidth]{./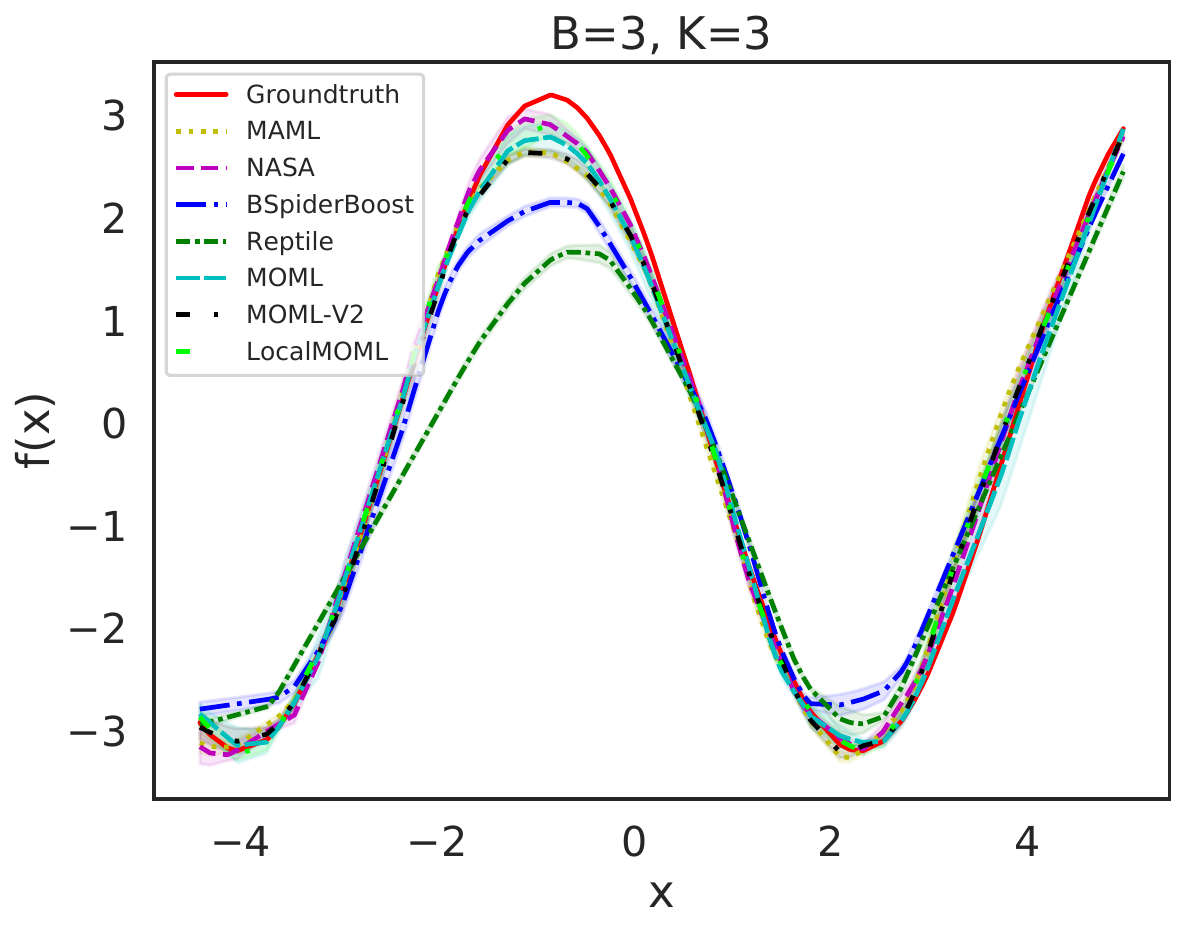}
	}
	\hfill
	\subfigure[Task 2]{
		\centering
		\includegraphics[width=0.300\linewidth]{./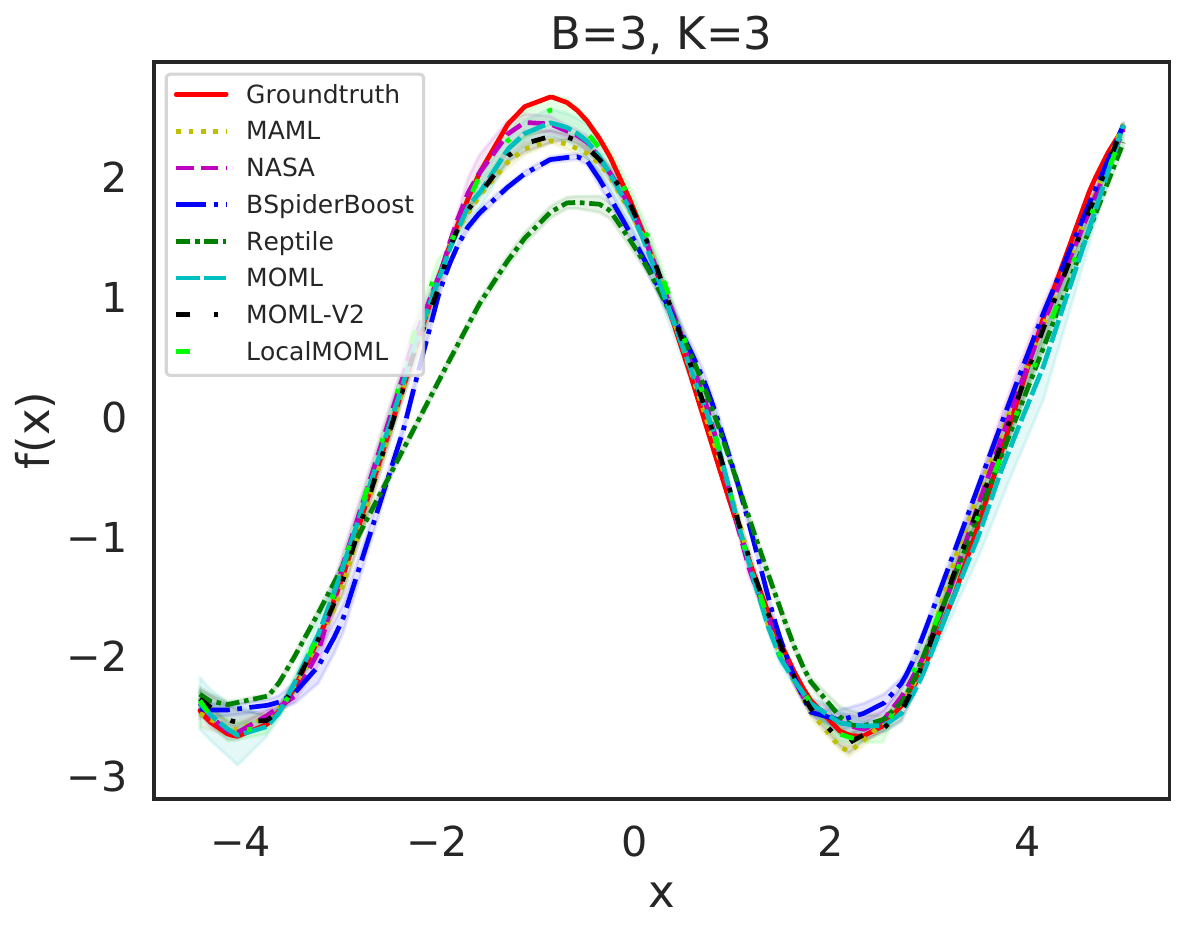}
	}
	\hfill
	\subfigure[Task 3]{
		\centering
		\includegraphics[width=0.300\linewidth]{./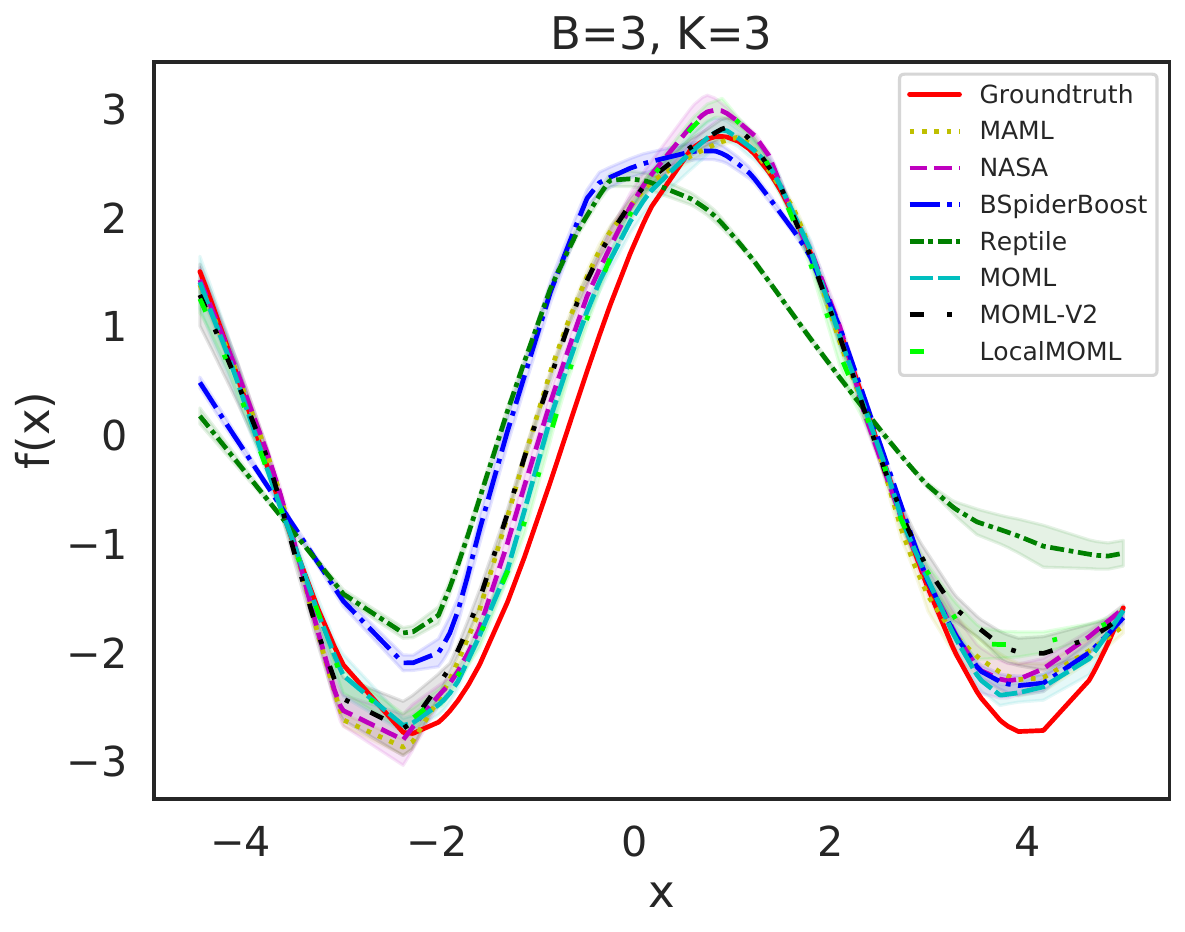}
	}
	\hfill
	\subfigure[Task 4]{
		\centering
		\includegraphics[width=0.300\linewidth]{./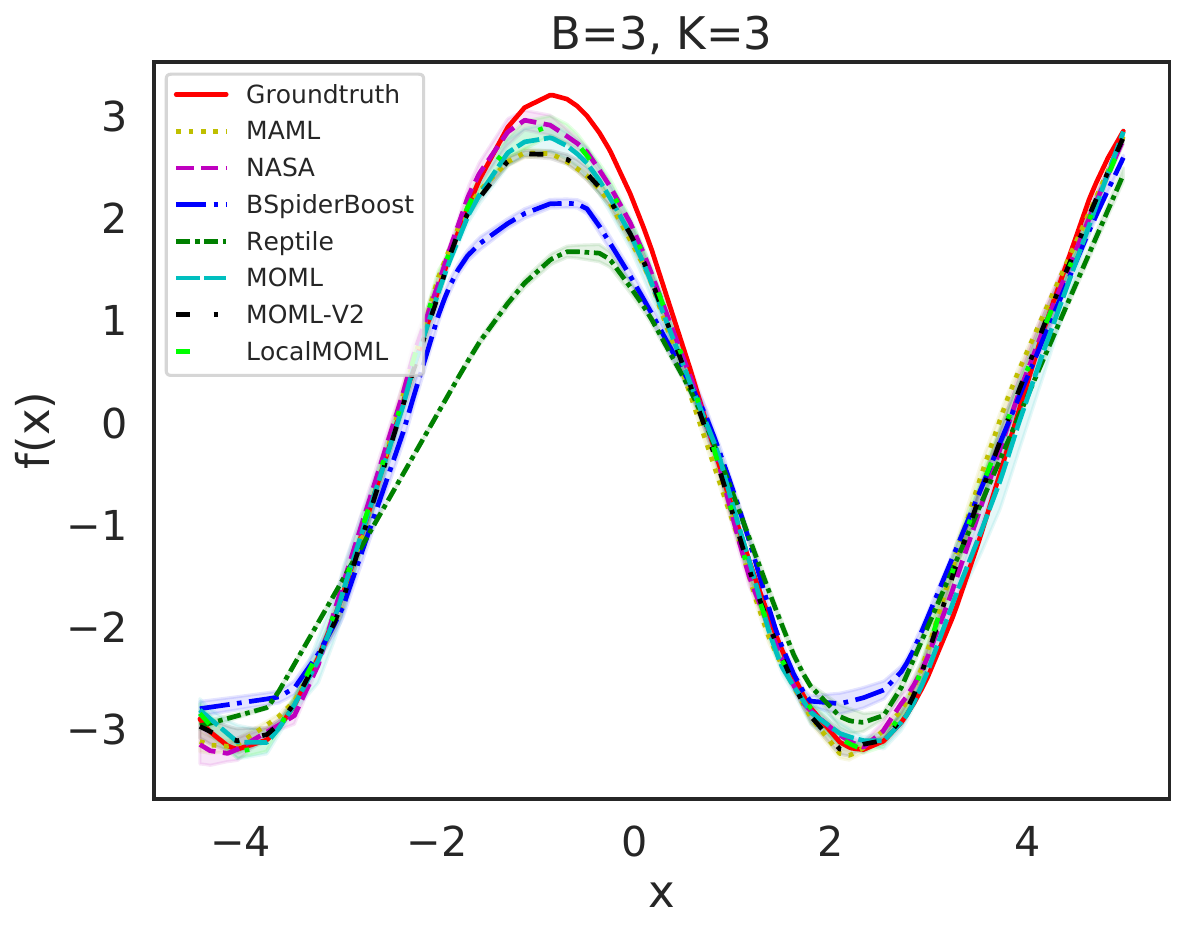}
	}
	\hfill
	\subfigure[Task 5]{
		\centering
		\includegraphics[width=0.300\linewidth]{./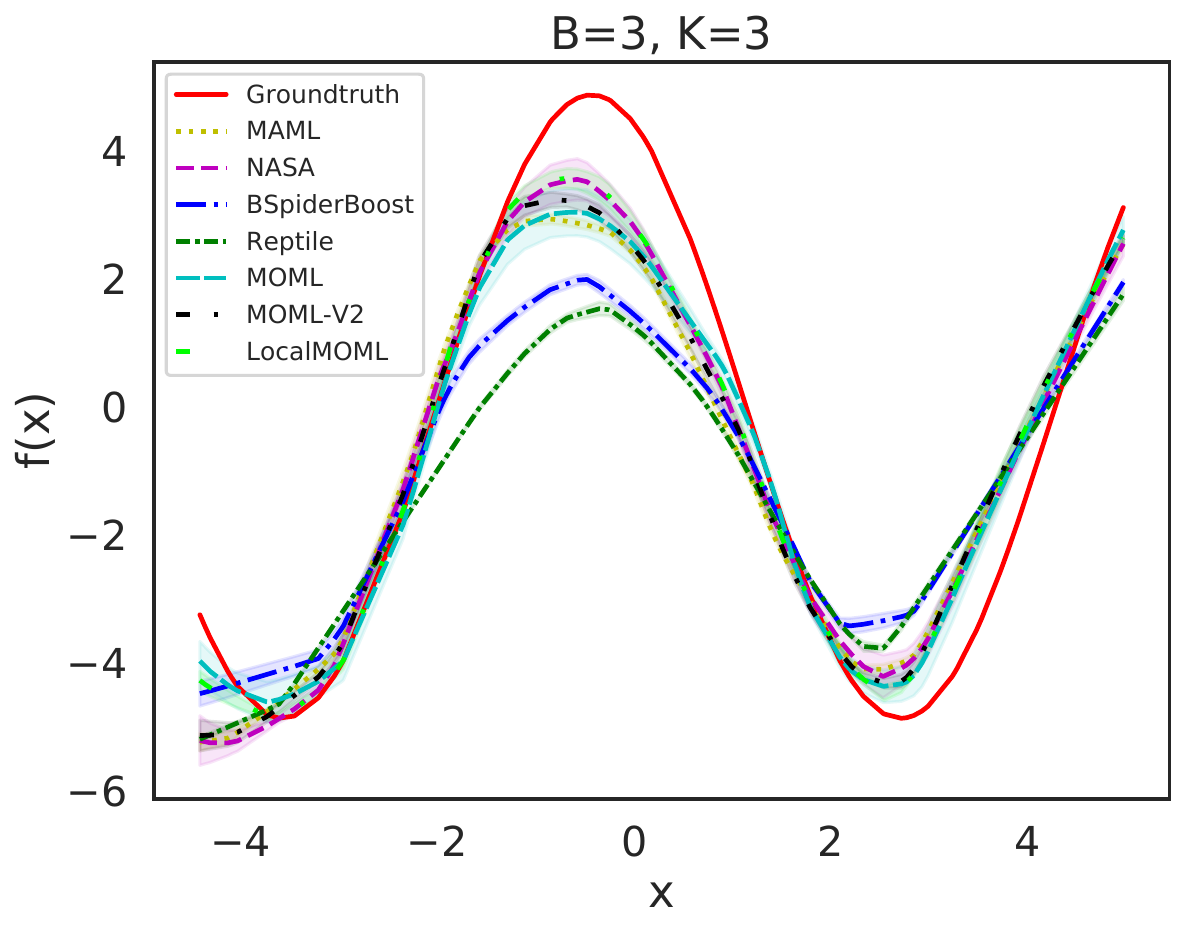}
	}
	\hfill
	\caption{Fitted Sinusoid Curves on five unseen tasks when $K=5$.}
	\label{fig:fitted_sinwave_5shots}
\end{figure*}

\end{document}